\newcommand{\Rmnum}[1]{\expandafter\@slowromancap\romannumeral #1@}
\newtheorem{theorem}{Theorem}
\let\vec\boldvec
\let\mat\boldvec
\newcommand\BibTeX{{\rmfamily B\kern-.05em \textsc{i\kern-.025em b}\kern-.08em
T\kern-.1667em\lower.7ex\hbox{E}\kern-.125emX}}
\definecolor{GfLi}{RGB}{0, 0, 255}  
\definecolor{mygray}{gray}{0.5}     
\definecolor{myred}{RGB}{255, 0, 0} 
\definecolor{mygreen}{RGB}{100, 200, 100}
\definecolor{orange}{RGB}{255, 100, 100}  %
\begin{document}

\runninghead{Gaofeng Li {\it et al.}}

\title{Orientation Learning and Adaptation towards Simultaneous Incorporation of Multiple Local Constraints}

\author{Gaofeng Li*\affilnum{1}, Peisen Xu\affilnum{1}, Ruize Wang\affilnum{1}, Qi Ye\affilnum{1}, Jiming Chen\affilnum{1}, Dezhen Song\affilnum{2} and Yanlong Huang*\affilnum{3}}

\affiliation{\affilnum{1}College of Control Science and Engineering, Zhejiang University, Hangzhou, China.\\ 
\affilnum{2}Department of Robotics, Mohamed Bin Zayed University of Artificial Intelligence, Abu Dhabi, The United Arab Emirates.\\
\affilnum{3}School of Computer Science, University of Leeds, Leeds, UK.
}

\corrauth{Gaofeng Li and Yanlong Huang}

\email{gaofeng.li@zju.edu.cn and Y.L.Huang@leeds.ac.uk}

\begin{abstract}
Orientation learning plays a pivotal role in many tasks. However, the rotation group SO(3) is a Riemannian manifold. As a result, the distortion caused by non-Euclidean geometric nature introduces difficulties to the incorporation of local constraints, especially for the simultaneous incorporation of multiple local constraints. To address this issue, we propose the Angle-Axis Space-based orientation representation method to solve several orientation learning problems, including orientation adaptation and minimization of angular acceleration. Specifically, we propose a weighted average mechanism in SO(3) based on the angle-axis representation method. Our main idea is to generate multiple trajectories by considering different local constraints at different basepoints. Then these multiple trajectories are fused to generate a smooth trajectory by our proposed weighted average mechanism, achieving the goal to incorporate multiple local constraints simultaneously. Compared with existing solution, ours can address the distortion issue and make the off-the-shelf Euclidean learning algorithm be re-applicable in non-Euclidean space. Simulation and Experimental evaluations validate that our solution can not only adapt orientations towards arbitrary desired via-points and cope with angular acceleration constraints, but also incorporate multiple local constraints simultaneously to achieve extra benefits, e.g., achieving smaller acceleration costs.

\end{abstract}

\keywords{Orientation Imitation Learning, Distance Distortion, Multiple Local Constraints, Weighted Average Mechanism, The Angle-Axis Space}

\maketitle

\section{Introduction}
\label{Section-Introduction}
In many complicated tasks, such as fine manipulation of objects and human-robot collaboration, it is non-trivial to plan proper trajectories for robots in advance. In order to endow a robot with agile skills, imitation learning (IL) provides a promising solution to deal with such situations. Many remarkable algorithms in the scope of learning from demonstrations are developed, such as Dynamical Movement Primitives (DMP) (\cite{ijspeert2013dynamical, 2023IJRR-MSaveriano-DMPSurvey}), Probabilistic Movement Primitives (ProMP) (\cite{paraschos2013probabilistic, gomez2020adaptation, 2018AuRo-AParaschos-IL-ProMP}) and Kernelized Movement Primitives (KMP) (\cite{2019IJRR-KMP, 2023ICRA-YanlongHuang-IM, 2020ICRA-YLHuang-IL-KMP, 2021RAS-Dakka-IL-KMP}), which have been applied successfully in numerous applications, e.g., peg-in-hole (\cite{abu2015adaptation, 2023FronNeuRo-SunLining-ImiLearning-PeginHole}), striking tasks (\cite{gomez2020adaptation, 2022ICCA-Lu-SkillLearningToolUse}), and locomotion (\cite{ding2020robust, 2023TMech-JTDing-Locomotion}).

Similar to skill learning in linear space (e.g., Cartesian space or joint space), the orientation learning also plays a pivotal role in many tasks, such as pouring water into a cup or turning a key to open a door. As a result, the orientation learning by imitation has been an attractive topic and several orientation learning methods have been reported. {\color{black}Currently, the existing learning approaches are extended to a Riemannian space $\mathcal{M}$ by following such a procedure: First, the Riemannian data or variable of interest is projected to a single tangent space $\mathcal{T}_{\vec x}\mathcal{M}$ at a given point ${\vec x}$; later, an off-the-shelf Euclidean learning algorithm takes the projected data as inputs to fit an Euclidean GMM in this tangent space $\mathcal{T}_{\vec x}\mathcal{M}$; and finally the results are projected back on the Riemannian manifold. This framework is referred as the ``single-tangent-space learning" (\cite{2024ICRA-Jaquier-SingleTangentSpaceFallacy}).} For example, \cite{ude2014orientation} proposed a method to transform the quaternions into an Euclidean space, which enables the learning of quaternions by DMP without the need of additional re-normalization. Similar transformation between quaternions and Euclidean space was also exploited in our previous work (\cite{2020TRO-KMP-Orientation}). In \cite{zeestraten2017approach}, orientation learning was studied by resorting to the Riemannian topology, which ensures the proper orientations are generated.

\begin{figure*}[t]
    \centering
    \includegraphics[width=0.95\hsize]{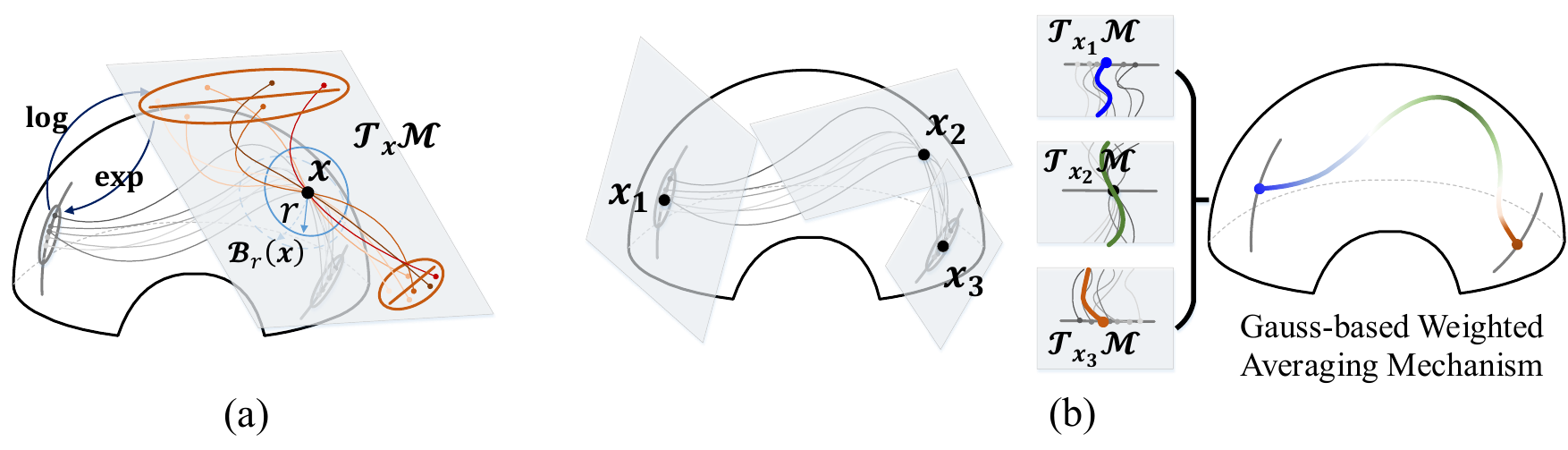}
    \caption{Problem Illustration. (a) The single-tangent-space learning framework. All date are projected into a single tangent space to encode the demonstration distributions. Due to non-Euclidean structure, the distribution for points that are outside the neighborhood $\mathcal{B}_{r}\left({\vec x}\right)$ is distorted. For example, the starting/ending points of every demonstration lie in a geodesic in the manifold $\mathcal{M}$. However, due to the distortion, this property is not held in the tangent space. Therefore, to incorporate local constraint, this framework has to project all data into the tangent space that extended around the given basepoint. Obviously, this framework is inapplicable when multiple local constraints are required.  (b) To address this problem, we propose to generate multiple trajectories (the colored curves in each tangent spaces) by projecting data into multiple tangent spaces, to incorporate each local constraint at different basepoints. Then these multiple trajectories can be fused into a final trajectory (the colored curve in right subfigure. The color changes from light to dark as the weights increase) by using a fusing mechanism, which is challenging to design. The difficulties lies in two aspects. First, how to guarantee that each reproduced trajectory is dominant in the final trajectory at the given local time domain, such that the local constraint is satisfied. Second, how to guarantee the smoothness of the final reproduced trajectory. We propose a Gauss-based weighted averaging mechanism to address the above difficulties.}
    \label{fig1-problem-illustration}
\end{figure*}

Nevertheless, {\color{black}due to distance distortion, the single-tangent-space learning faces severe challenges in the simultaneous incorporation of multiple local constraints.}
Unlike the learning of Cartesian positions and joint angles (\cite{huang2020linearly, 2023ICRA-YanlongHuang-IM, armesto2017learning, zhou2017task}), in which the incorporation of additional constraints has been well studied, the orientation learning {\color{black}is still unable to incorporate multiple local constraints simultaneously.}
{\color{black}Although it seems appealing to comply with existing Euclidean approaches, such a local linear approximation unavoidably leads to distorted models as they overlook the intrinsic geometry of the manifold. More importantly, such distortions depend on the choice of the tangent space. This is because, as shown in Fig. \ref{fig1-problem-illustration}, the tangent space $\mathcal{T}_{\vec x}\mathcal{M}$ is a local diffeomorphism of the manifold $\mathcal{M}$ at a neighborhood $\mathcal{B}_r\left({\vec x}\right)$ of the given point ${\vec x}$. Here the subscript $r$ represents the injectivity radius that fundamentally quantifies the extent to which geodesics behave like straight lines in $\mathbb{R}^d$. When the data is far from the given point ${\vec x}$, the distortions induced by projecting data onto the Euclidean tangent space lead to inaccuracy and/or even failure in considering additional constraints. Therefore, to incorporate an additional constraint, an appropriate basepoint ${\vec x}$ must be selected to achieve optimal performance.} 
Taking the Incomplete Orientation Constraints (IOCs) as an example, to fulfill a given task, it is often required to relax constraint on one DoF, but setting strict constraints on other DoFs for a desired orientation via-point ${\mat R}_d$ (\cite{ude2014orientation,abu2015adaptation, 2020TRO-KMP-Orientation, zeestraten2017approach}). 
{\color{black}However, owing to the distortion problem, the single-tangent-space learning has to project all datapoints into the tangent space $\mathcal{T}_{{\mat R}_d}\mathcal{S}^3$ extended at the given desired via-point ${\mat R}_d$. Otherwise, the strict constraints on the other DoFs are unable to be guaranteed, as shown in Fig. \ref{fig1-problem-illustration}(a), since the single-tangent-space learning disregard the injectivity radius $r$. Based on this feature, we refer to these constraints as {\it local constraints}. Obviously, the single-tangent-space learning is unable to incorporate multiple local constraints simultaneously, even by considering the non-Euclidean distance metric (\cite{2020RAM-Calinon-RiemannianLearning}). }

{\color{black}How to bridge the gap from incorporation of single local constraint to multiple local constraints is an appealing topic. To address the distortion problem, an straightforward solution is to generate multiple trajectoies by projecting data into multiple tangent spaces, in order to incorporate each local constraints at different basedpoints. Then these multiple trajectories can be fused into a final trajectory. However, the difficulties lies in two aspects. First, how to guarantee the dominance of each reproduced trajectory, such that the local constraint is satisfied. As stated, the distortion is bigger for points that are far away from the basepoint. While every generated trajectory is obtained by projecting data into tangent spaces extended around given basepoints at given time. Therefore, we hope each trajectory can be dominant in the final trajectory at the corresponding local time domain. Second, how to guarantee the smoothness of the final reproduced trajectory. Although the KMP can guarantee the smoothness of each trajectory, the smoothness of the final trajectory is also affected by the fusing mechanism.}

Despite the big challenges, the success in considering multiple local constraints with off-the-shelf Euclidean GMMs would extend existing IL frameworks to skill learning at non-linear space. This motivate us to develop the weighted average mechanism in this paper. In this paper, {\color{black}taking the IOCs as an example,} we aim to develop a new orientation learning method which is able to tackle the 
{\color{black}simultaneous incorporation of multiple local constraints}. Our main contributions include:

\begin{itemize}
    \item We propose to encode and/or decode the demonstrations and via-points in the Angle-Axis Space $\vec{B}_\pi \backslash \mathcal{S}^2_\pi$, which is a subset of $\mathbb{R}^3$ and bijective to SO(3). Then the KMP algorithm are leveraged to solve several orientation learning problems, including orientation adaptation and minimization of angular acceleration, where the transformation of demonstrations and via-points into $\vec{B}_\pi \backslash \mathcal{S}^2_\pi$ is provided. 
        We demonstrate that the {\color{black}local constraints, e.g., the IOCs,} can be naturally addressed using our framework. 
    \item {\color{black}More importantly, w}e propose a weighted average mechanism in SO(3), in which the geodesic curve is utilized to achieve weighted sum in non-Euclidean space{\color{black}, based on the angle-axis representation. In addition, a memory-based algorithm is proposed to address the discontinuity issue in the weighted rotation average, to guarantee the smoothness and robustness of the averaging result. To the best of our knowledge, there is still no attempt to illustrate and address the discontinuity issue in weighted rotation average to date.
    \item We propose to generate multiple trajectories by projecting data into multiple tangent spaces to incorporate each local constraint at different basepoints. Then we design a Gauss-based weighted curve to fuse these multiple trajectories into a smooth trajectory, achieving the goal to incorporate multiple local constraints simultaneously. Although the distortion exists for each reproduced trajectory, our weighted rotation average mechanism can guarantee that each trajectory is only effective at the given local time domain. Consequently, the distortion problem is addressed by our proposed method. Compared with existing solutions, ours can make the off-the-shelf Euclidean learning algorithm be re-applicable in non-Euclidean space.}
    \item Simulation and experimental evaluations show that our proposed method can not only inherit KMP's capability of adapting orientations towards arbitrary desired via-points and coping with angular acceleration constraints, but also incorporate {\color{black}multiple local constraints, e.g., IOCs,} to obtain additional benefits.
\end{itemize}


The rest of this paper is organized as follows. Section~\ref{Section-RelatedWorks} discusses the related work. Section~\ref{Section_MathematicPre} introduces basic concepts and discussions regarding the Angle-Axis Space. Section~\ref{Section_Methods} explains orientation learning and adaptations in Angle-Axis Space. {\color{black}Section~\ref{sec-IOVPs} handles the incorporation of multiple local constraints. Also, the Gauss-based weighted average mechanism is also designed in this section.} Section~\ref{Section_Simulations} provides several evaluations to validate the learning and adaptation capabilities of the proposed method in simulations. Section~\ref{Section_Experiments} validates the benefits of our proposed method by experimental results. The conclusion is finally drawn in Section~\ref{Section_Conclusion}.

\section{Related Works}
\label{Section-RelatedWorks}

Reliable generation and execution of human-like trajectories is basic but essential for robots working in human-inhabited environments. The Imitation Learning (IL) or Learning from Demonstration (LfD) paradigm provides an effective way to reliably transfer human-like skills to robots (\cite{2020AnnualReview-HRavichandar-ILSurvey}), which is done by extracting motion features from few demonstrations and subsequently employing them to new scenarios (\cite{2022JAS-YLHuang-ILSurvey}). The typical IL/LfD approaches include Gaussian Mixture Model (GMM) (\cite{2007TSMC-Calinon-IL-GMM}) / Gaussian Mixture Regression (GMR) (\cite{1996JAIR-GMR}), Dynamical Movement Primitives (DMP) (\cite{ijspeert2013dynamical, 2023IJRR-MSaveriano-DMPSurvey}), Stable Estimator of Dynamical Systems (SEDS) (\cite{2011TRO-SMZadeh-IL-SEDS, 2023TRO-ChenguangYang-NeuEnerFuction}), Probabilistic Movement Primitives (ProMP) (\cite{paraschos2013probabilistic, gomez2020adaptation, 2018AuRo-AParaschos-IL-ProMP}), Task-parameterized GMM (TP-GMM) (\cite{2014ICRA-Calinon-IL-TPGMM, 2016IntellSerRob-Tutorial-TPGMM}), Kernelized Movement Primitives (KMP) (\cite{2019IJRR-KMP, 2023ICRA-YanlongHuang-IM, 2020ICRA-YLHuang-IL-KMP, 2021RAS-Dakka-IL-KMP}), and so on.

Over past decades, these approaches have achieved great success in transferring linear skills, i.e., demonstrations and/or via-points data can be encoded in Euclidean space, like Cartesian positions/velocities, joint positions/velocities, and forces, etc. For example, the GMM/GMR method was used in surgical tasks by \cite{2010IEMB-Reiley-ILinSurgical} to generate smooth trajectories ({\it Cartesian positions}) from expert surgeons' demonstrations. The DMP method was used by \cite{2016Humanoid-YLHuang-PingpangHittingPoint} to model human demonstrations and generate hitting trajectories ({\it Cartesian positions/velocities}) for robotic table tennis tasks. In \cite{2016PlosOne-Peternel-DMP-ForceGenerator}, the DMP was directly used as a {\it joint torque} generator for exoskeleton actuators in the control loop, which is closed by a feedback from the human user's muscle activity. \cite{2019AdRo-Joshi-DMP-ClothingDrssing} proposed a framework for robotic clothing assistance by imitation learning (DMP) from human demonstrations ({\it Cartesian positions}) to a Baxter Robot. The ProMP method was applied to generate plucking motion ({\it Cartesian positions}) for a tea harvesting robot, with adaptation ability to the stiffness of branches (measured according to {\it force} feedback) (\cite{2020RAL-Motokura-ProMP-ForceAndPluckingMotions}). \cite{2021RAL-Zou-KMP-JointAngles} combined the Neural Networks and KMP to generate individualized gait patterns ({\it joint positions}) for a lower limb exoskeleton. In summary, existing approaches have shown great abilities for linear skills learning in adapting to arbitrary desired points (e.g., starting/via/end-points) while taking into account high-dimensional inputs and smoothness constraints.

While for many robotic tasks of interest, it is common to have non-Euclidean data like orientation, impedance, and/or manipulability. Naturally, it is expected to directly extend existing IL/LfD paradigm from Euclidean space to non-Euclidean space. For example, by using quaternion notation, \cite{2009ICRA-Pastor-OrientationLearning-DirectExtension} added the gripper's orientation into the movement trajectory and learned the position and orientation trajectories simultaneously by exploiting DMP. Similarly, \cite{2015IROS-Silverio-OrientationLearning-TpGMMDirectExtension} also directly extended the TP-GMM method to orientation learning. However, the reproduced quaternion can not satisfy the unit magnitude constraint, since the non-Euclidean geometric property is not considered. Thus an additional re-normalization is required. To address this issue, \cite{ude2014orientation} proposed a method to transform the quaternions into an Euclidean space, which enables the learning of quaternions by DMP without the need of additional re-normalization. Similar transformation between quaternions and Euclidean space was also exploited in \cite{2020TRO-KMP-Orientation, 2019ICRA-KMP-Orientation-Quaternion} by using KMP and in \cite{2019AuRo-Ravichandar-OrientationLearning-DirectExtension} by using SEDS. In addition, \cite{2019ICRA-Saveriano-OrientationLearning-DirectExtension} extended previous approaches to merge the trajectories generated by position DMPs and orientation DMPs to a unified trajectory that includes both position and orientation parts.

However, the above approaches neglect the non-Euclidean geometric property of the rotation group SO(3). As development progressed, many researchers have realized that neglecting the non-Euclidean geometric nature can adversely affect learning performance and accuracy. As a result, the geometry-aware IL/LfD approaches are getting rising attentions and many valuable works are emerging in recent years. {\color{black}Currently, these geometry-aware IL/LfD approaches are following the single-tangent-space learning paradigm, i.e., projecting the Riemannian data to a single tangent space, learning by an off-the-shelf Euclidean IL/LfD paradigm, and then projecting back to the Riemannian manifold.} For example, \cite{2020ICRA-Dakka-GaDMP, 2022arXiv-Dakka-GaDMP} proposed a unified framework, called GaDMP (Geometry-aware DMP), to learn skills such as orientation, impedance, and/or manipulability that have specific geometric characteristics. \cite{2023IJRR-HBMohammadi-LearnInRiemannian} revealed that Riemannian manifolds may be learned via human demonstrations in which geodesics are natural motion skills. In this work, the geodesics are generated using a learned Riemannian metric produced by a variational autoencoder (VAE), which is especially intended to recover full-pose end-effector states and joint space configurations. \cite{zeestraten2017approach, 2020RAM-Calinon-RiemannianLearning} extended the GMM to Riemannian manifold, which mainly relies on the logarithmic and exponential maps between the rotation group SO(3) and the corresponding tangent space. By exploiting the Riemannian geometric structure of the orientation data, \cite{2022arXiv-Naseem-OrientationLearningbyReinforcementLearning} proposed a reinforcement learning (RL) framework to apply policy parameterization on a tangent space, and then mapping the result to its corresponding manifold. Similarly, in \cite{2024NeuroComputing-ChenguangYang-LearningInRiemannian}, the DMP formulation is reconstructed based on the $\mathcal{S}^3$ Riemannian metric to learn, reproduce and generalize the robot's quaternions. Then simultaneous encoding of position, orientation and stiffness are done to transfer human-like multi-space skills to the robot.

{\color{black}However, as revealed in \cite{2024ICRA-Jaquier-SingleTangentSpaceFallacy}, the single-tangent-space learning paradigm would exhibit mathematically-flawed simplification and such an approximation severely affects the performance of machine learning methods, even in low-dimensional manifolds of constant curvature. The good practices for operating with Riemannian manifolds include: using a Riemannian metric, leveraging multiple tangent spaces, applying a Riemannian Gaussian Distribution (RGD), and so on. For example, to extend the dynamic model of vector fields from a flat Euclidean manifold to non-Euclidean manifolds (e.g., Lie Groups), \cite{2022RAL-JUrain-LearnStableVectorFieldsInRiemannian} presented a novel vector field model that can guarantee most of the properties of previous approaches, i.e., stability, smoothness, and reactivity beyond the Euclidean space. Similarly, RL algorithms on Riemannian manifolds should be formulated on a case-by-case basis by adapting their core principles. For instance, Riemannian Bayesian optimization requires Riemannian kernels and optimization techniques (\cite{2020CoRL-Jaquier-BayesianInRiemannian, 2021CoRL-Jaquier-BayesianInRiemannian}).}

By considering the non-Euclidean geometric nature, many IL/LfD approaches have been extended to the orientation learning successfully. However, {\color{black} existing solutions are still limited to the incorporation of single local constraints. The incorporation of multiple local} constraints is still an open issue, even with considering the non-Euclidean geometric nature. 
Taking the Incomplete Orientation Constraints (IOCs) as an example, {\color{black}the learning algorithm has to project all datapoints into a tangent space extended at the given desired via-point, to apply a different variance to each entry separately. Otherwise, the strict constraints on the other DoFs are unable to be guaranteed due to the distortion problem. Therefore, it is necessary to consider multiple local constraints in multiple tangent spaces. However, how to merge the results learning in multiple tangent spaces is still unexplored.}

The IOCs, which can be tracked back to 1990s and originally named by ``functional redundancy", was refined in previous work (\cite{2021TMech-GaofengLi-IOC, 2023TASE-GaofengLi-TrajIOC}). As we have stated in Section \ref{Section-Introduction}, the incorporation of {\color{black}multiple local IOCs} can bring many benefits. Therefore we would like to focus on the orientation learning problem with the incorporation of 
{\color{black}multiple local} constraints (e.g., IOCs) in this paper. {\color{black}Our main idea is to generate multiple trajectories by considering different local constraints at different basepoints. Then these trajectoies are fused into a final reproduced trajectory by our proposed weighted average mechanism. Our method guarantee that each reproduced trajectory is only effective at the given local time domain in the final fused trajectory. In this way, the distortion problem can be addressed. Compared with existing solutions, our method lowers the difficulties to learn the Riemannian GMM and Riemannian Expectation-Maximization algorithm and make all the off-the-shelf Euclidean learning algorithms be re-applicable.} 

\section{Preliminary: the Angle-Axis Space}
\label{Section_MathematicPre}

\subsection{Definition of the Angle-Axis Space}

This section gives the definition of the Angle-Axis Space and the mappings between the rotation group SO(3) and the Angle-Axis Space. Please refer to \cite{2013IJCV_Hartley_RotationAveraging, 2010MTNS_Hartley_RotationAveraging, 2009ACCV_Hartley_RotationAveragingApplication, Gaofeng-RAL2018} for more details.

As well known, every orientation $\vec{R} \in SO\left(3\right)$ can be obtained by rotating an axis by at most $\pi$ radians from the identity. Thus an orientation $\vec{R}$ can be represented by the rotational axis and the rotational angle. Since the rotational angle is no more than $\pi$ radians, we can define a closed ball $\vec{B}_\pi$ of radius $\pi$ in $\mathbb{R}^3$ as:
\begin{align}
    \label{BallRegion}
    \begin{split}
        \vec{B}_\pi = \left\{ {\vec{\psi}} \in \mathbb{R}^3 \left| \, \left\| {\vec{\psi}} \right\| \leq \pi \right. \right\},
    \end{split}
\end{align}
where $\left\|\cdot\right\|$ denotes the $L_2$ norm of a vector.

Consequently, any orientation ${\vec R} \in SO\left(3\right)$ can be represented by a vector ${\vec{\psi}} \in \vec{B}_\pi$, in which the direction of ${\vec{\psi}}$ is the axis and the norm $\left\| {\vec{\psi}} \right\|$ is the angle.

Although the mapping from $\vec{B}_\pi$ to SO(3) is one-to-one in the interior, the two-to-one mapping occurs on the boundary of the ball. To avoid the non-bijective mapping, we define a half sphere $\mathcal{S}^2_\pi$ in $\mathbb{R}^3$ as:
\begin{align}
    \label{HalfSphereRegion}
    \begin{split}
        \mathcal{S}^2_\pi = \left\{ {\vec{\psi}} \in \mathbb{R}^3 \left| \, \left\| {\vec{\psi}} \right\| = \pi \, \text{and}\, \left(c_1 \, \text{or} \, c_2 \, \text{or} \, c_3\right) \right. \right\}
    \end{split},
\end{align}
where conditions $c_1$, $c_2$, and $c_3$ are defined as:
\begin{align}
    \label{conditions}
    \begin{split}
        c_1: & \quad \psi_x < 0,\\
        c_2: & \quad \psi_x = 0 \quad \text{and} \quad \psi_y < 0,\\
        c_3: & \quad \psi_x = 0 \quad \text{and} \quad \psi_y = 0 \quad \text{and} \quad \psi_z < 0,
    \end{split}
\end{align}
where $\psi_x$, $\psi_y$, and $\psi_z$ are the entries of ${\vec{\psi}}$.

\begin{figure}[t]
    \centering
    \includegraphics[width=0.4\hsize]{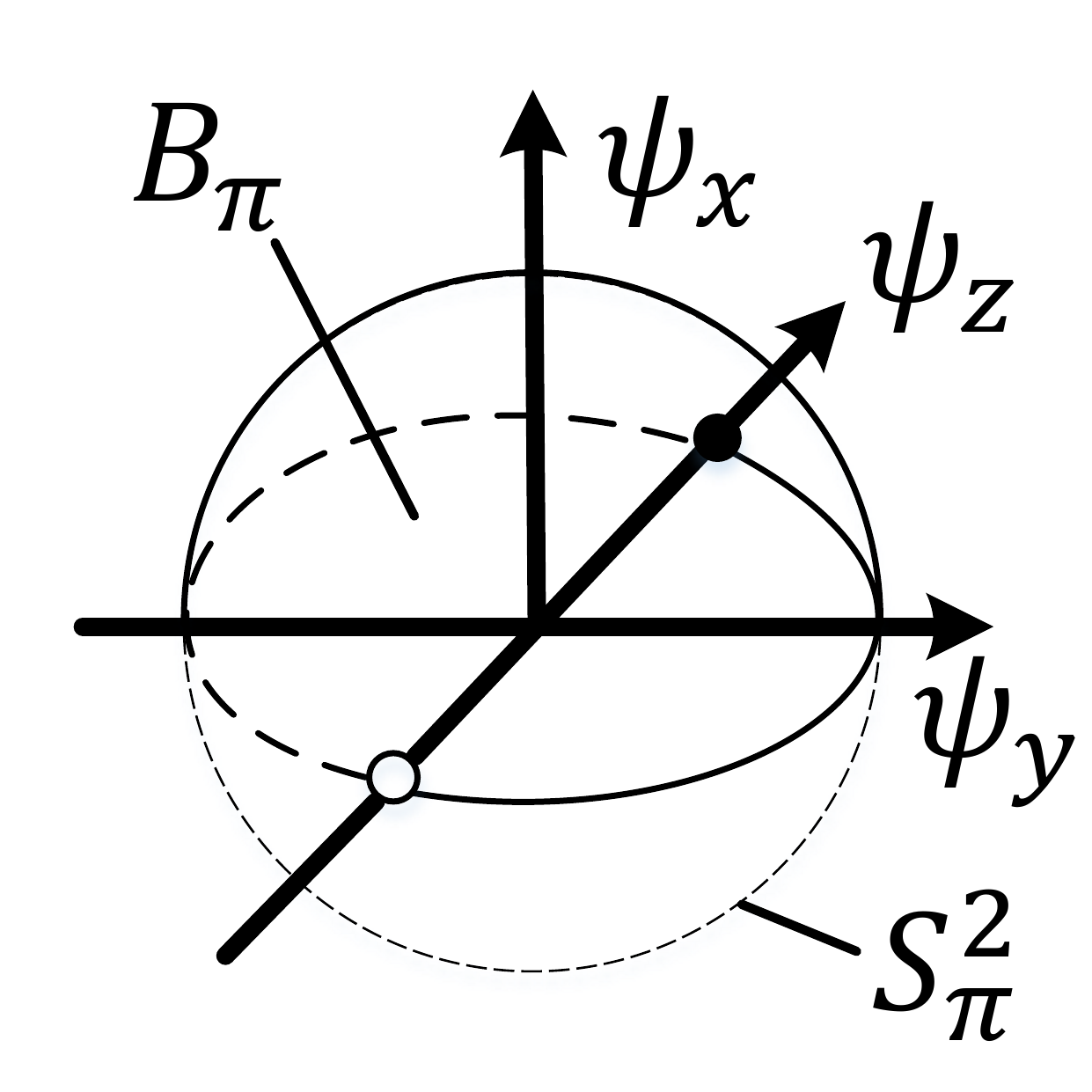}
    \caption{The Angle-Axis Space $\vec{B}_\pi \backslash \mathcal{S}^2_\pi$ is bijective to SO(3).}
    \label{fig:bijective}
\end{figure}

Thus, SO(3) is bijective to $\vec{B}_\pi \backslash \mathcal{S}^2_\pi$ {\color{black}. Here the notation $A \backslash B := \left\{x \left| \, x \in A \,\text{and}\, x \notin B \right.\right\}$ represents the set of elements that belongs to set $A$, but dose not belong to set $B$, i.e., the relative complement of $B$ in $A$}. $\vec{B}_\pi \backslash \mathcal{S}^2_\pi$ is called {\bf Angle-Axis Space}, as illustrated in Fig.~\ref{fig:bijective}.

The mappings between SO(3) and $\vec{B}_\pi \backslash \mathcal{S}^2_\pi$ can be given by the exponential mapping $\exp \left( {\vec{\psi}} \right): \mathbb{R}^3 \longmapsto SO\left(3\right)$ and the logarithm mapping $\log \left(\vec{R}\right): SO\left(3\right) \longmapsto \vec{B}_\pi \backslash \mathcal{S}^2_\pi$ (\cite{Gaofeng-RAL2018}).

{\color{black} The exponential mapping $\exp \left( {\vec{\psi}} \right): \mathbb{R}^3 \longmapsto SO\left(3\right)$ is given by the Rodrigues's formula (see for instance \cite{2013IJCV_Hartley_RotationAveraging}):
\begin{align}
    \label{Eq-exponential-map}
    \begin{split}
        \exp \left({\vec \psi} \right) = {\vec I}_3 + \frac{\sin \left\| {\vec \psi} \right\|}{\left\| {\vec \psi} \right\|} \hat {\vec \psi} + \frac{1 - \cos \left\| {\vec \psi} \right\|}{\left\| {\vec \psi} \right\|^2} \hat {\vec \psi}^2
    \end{split},
\end{align}
where $\hat {\vec \psi}$ is the skew-symmetric matrix of ${\vec \psi}$. 

\begin{figure}[b]
    \centering
    \includegraphics[width=\hsize]{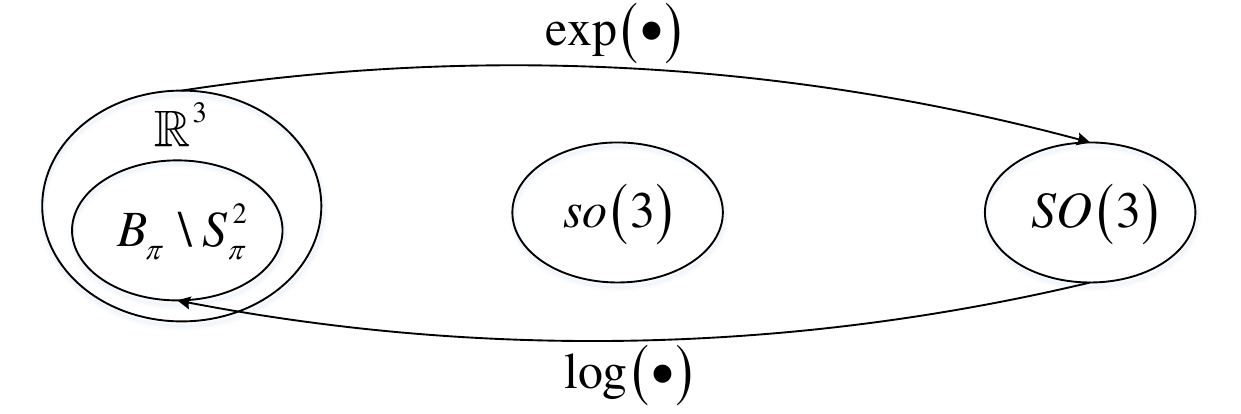}
    \caption{ {\color{black} Our exponential and logarithm mappings are redefined between the Angle-Axis Space $\vec{B}_\pi \backslash \mathcal{S}^2_\pi$ and SO(3), while the intermediate set $so\left(3\right)$ is omitted.} }
    \label{fig-mapping}
\end{figure}

The logarithm mapping $\log \left(\vec{R}\right): SO\left(3\right) \longmapsto \vec{B}_\pi \backslash \mathcal{S}^2_\pi$ is redefined as (\cite{Gaofeng-RAL2018, 2023TOH-GaofengLi-IOM}):
\begin{align}
    \label{log-map}
    \begin{split}
        \log \left( {\vec R} \right) = \left\{ {\begin{array}{*{20}{l}}
                                            \bf{0},  & {\vec R} = {\vec I}_3,\\
                                            {\vec \psi}', & {\vec R} \neq {\vec I}_3 \& \left\| {\vec \psi} \right\| = 0, \\
                                            \arccos \left(\frac{tr\left( {\vec R} \right)-1}{2}\right){\frac{ {\vec \psi} }{\left\| {\vec \psi} \right\|}}, & {\text{Otherwise}},
                                       \end{array}}
                               \right.
    \end{split}
\end{align}
where ${\vec \psi} = \left[\psi_x, \psi_y, \psi_z\right]^T$ is computed by solving the equation:
\begin{align}
    \label{omega}
    \begin{split}
        \frac{1}{2} \left( {\vec R} - {\vec R}^\top\right) = \left[ {\begin{array}{*{20}{c}}
                                                    0                  &{ - {\psi _z}}            &{{\psi _y}}\\
                                                    {{\psi _z}}        &0                           &{ - {\psi _x}}\\
                                                    { - {\psi _y}}     &{{\psi _x}}               &0
                                                \end{array}}
                                         \right]
    \end{split},
\end{align}
and ${\vec \psi}' = \left[\psi'_x, \psi'_y, \psi'_z\right]^T$ is computed by solving the equations:
\begin{align}
    \label{omega-pi}
    \begin{split}
        {\vec R} = {\vec I}_3 + \frac{2}{\pi^2}\hat {\vec \psi}'^2 & \quad \text{and} \quad \left\| {\vec \psi}' \right\| = \pi
    \end{split}.
\end{align}

Here we add the constraint $\left\| {\vec \psi}' \right\| = \pi$ such that the mapping result is on the positive half sphere of the ball ${\vec B}_\pi$, instead of the negtive half sphere $\mathcal{S}^2_\pi$.
}

Please note that the exponential and logarithm mapping is redefined as mappings between the Angle-Axis Space $\vec{B}_\pi \backslash \mathcal{S}^2_\pi$ and the rotation group SO(3), {\color{black}as illustrated in Fig. \ref{fig-mapping}}, instead of the traditional ones between $so\left(3\right)$ and SO(3).

\subsection{ {\color{black}Discussions on the Angle-Axis Space} }
\label{subsect-discussion-angle-axis}

{\color{black}The Angle-Axis Space $\vec{B}_\pi \backslash \mathcal{S}^2_\pi \subset \mathbb{R}^3$ is actually a local coordinate chart of SO(3), since the 3-d manifold SO(3) is conceptualized as a set of points that locally, but not globally, resemble the Euclidean space $\mathbb{R}^3$. Owing to the non-linear property, any local chart would encounter distortion problem since they are only local parameterizations of the manifold. However, based on the following reasons, we believe that the Angle-Axis Space is more suitable for imitation learning with incorporating local constraints.

First, the Angle-Axis Space provides a minimum representation for orientations, making it easy to encode and decode the demonstrations. As well known, SO(3) is a 3-d manifold. While the rotation matrices (contains 9 entries) and the unit quaternions (contains 4 entries) both require addition constraints, e.g., the determinant of a rotation matrix has to be 1 and the norm of a quaternion has to be unit. This property makes the IL frameworks be difficult to directly encode and decode the demonstration data. As for the fixed-angles and Euler-angles, although only three angles are used to represent an orientation, they have to specify the rotational sequence of the principal axes, e.g., Z-Y-Z type. These properties make the fixed-angles and Euler-angles be unsuitable to encode and decode the demonstrations. While the Angle-Axis Space, which is a local diffeomorphism of the manifold SO(3) at a neighborhood of the given point ${\vec I}_3$, uses only 3 independent entries to represent an orientation. Although it would encounter distortion problem for points that are far from the ${\vec I}_3$, the demonstration data can be directly encoded and decoded in the Angle-Axis Space.

\begin{figure}[h]
    \centering
    \includegraphics[width=\hsize]{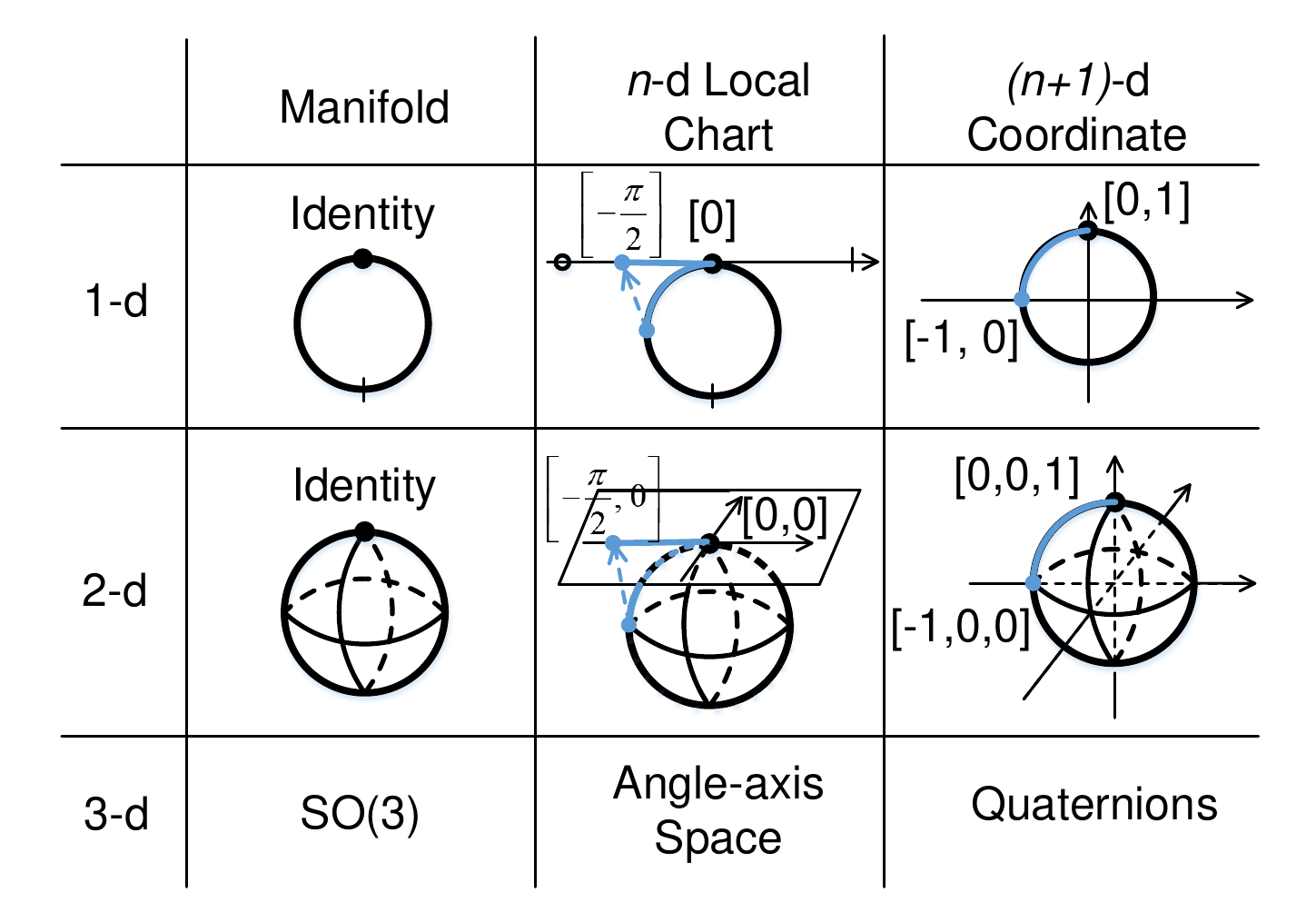}
    \caption{ {\color{black} Comparison between the Angle-Axis Space and the quaternion space. Analogous to $\mathcal{S}^1$ and $\mathcal{S}^2$ cases, the Angle-Axis Space  is a tangent space of SO(3) around the identity ${\vec I}_3$. While the quaternion is to represent the $\mathcal{S}^n$ in a $(n+1)$-d coordinate system, where $n$ is the dimension of the manifold. The Euclidean distance of the quaternions can not directly characterize the geodesic distance of orientations.} }
    \label{Fig-comparisons-AngleAxis-Quaternion}
\end{figure}

Specifically, here we give a comparison between the Angle-Axis Space and the quaternion space. For better illustration, we give the spheres in 1-d and 2-d cases as examples\footnote{{\color{black}Since the 3-d hypersphere is difficult to be depicted explicitly with a graph, we would use 2-d spheres for illustration in the subsequent of this paper.}}. As shown in Fig. \ref{Fig-comparisons-AngleAxis-Quaternion}, the sphere $\mathcal{S}^n$ can be smoothly mapped to a subset of $\mathbb{R}^n$, where $n$ is the dimension of the manifold. There are two ways to represents the points in the manifold $\mathcal{S}^n$. One is to use the $n$-d local chart and the other one is to represent the $\mathcal{S}^n$ in a $(n+1)$-d coordinate system. Similar to the 1-d and 2-d cases, the geometric structure of SO(3) is a 3-d hypersphere $\mathcal{S}^3$, although it is difficult to be depicted explicitly with a graph. The Angle-Axis Space is obviously a local diffeomorphism of the SO(3) at the neighborhood around the identity ${\vec I}_3$, or, a local extension of the SO(3) around the ${\vec I}_3$. In another word, the Angle-Axis Space is a tangent space of SO(3), i.e., $\mathcal{T}_{{\vec I}_3}\mathcal{S}^3$. While the quaternion is to represent the $\mathcal{S}^n$ in a $(n+1)$-d coordinate system. Therefore, the quaternions require an addition constraint that their norms have to be unit, which brings difficulty to the encoding and decoding of demonstrations. 

\begin{figure}[h]
    \centering
    \subfigure[]{\includegraphics[width = 0.49\hsize]{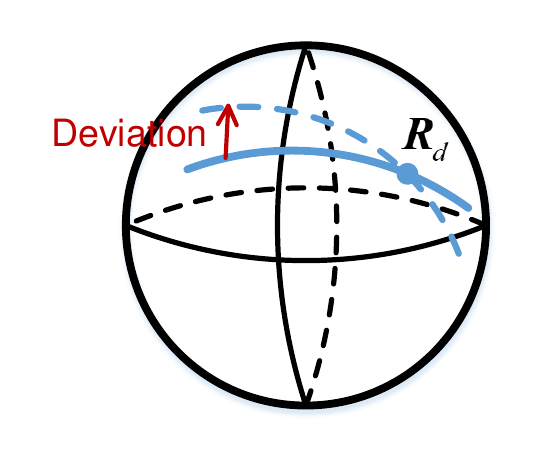}}
    \subfigure[]{\includegraphics[width = 0.49\hsize]{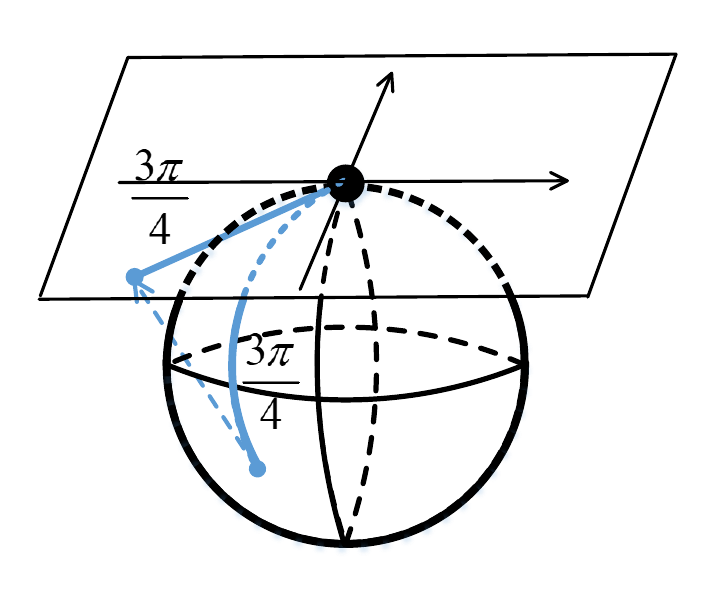}}
    \caption{ {\color{black}(a) The bigger variance on the relaxed DoF drives the reproduced orientation to deviate from the desired geodesic (the blud and solid line), due to distance distortion. This results in the failure in guaranteeing the stricting constraints on the other DoFs. (b) The distances relative to the basepoint are preserved.} }
    \label{Fig_IllustrationForDistanceDistortion}
\end{figure}

Second, the geodesic distance in the manifold and the Euclidean distance in Angle-Axis Space, relative to the basepoint ${\vec I}_3$, are aligned with no distortion, making the Angle-Axis Space be suitable to incorporate the local constraint on ${\vec I}_3$. As aforementioned, it is expected to apply loose constraint on one DoF, but setting strict constraints on other DoFs for a task with IOC, i.e., the IOC requires to apply a bigger variance on the relaxed DoF and small variances on the other DoFs. In this way, the desired via-point is transformed from a single point to a geodesic in $SO(3)$, which is formed by the locus of ${\vec R}_d$ rotating around the relaxed DoF, denoted by $\gamma_{{\vec R}_d, {\vec \kappa}}\left(\theta\right)$. As shown in Fig. \ref{Fig_IllustrationForDistanceDistortion}(a), the strict constraints require the desired point to ``stay" on $\gamma_{{\vec R}_d, {\vec \kappa}}\left(\theta\right)$ (the blue and solid line), instead of a single point. While the released constraint allow the desired via-point to slide along $\gamma_{{\vec R}_d, {\vec \kappa}}\left(\theta\right)$.
But the tangent spaces intrinsically distort distances. Therefore, applying distance-based learning algorithms on tangent spaces unavoidably leads to distorted models as they overlook the intrinsic geometry of the manifold. In addition, the distortions are more pronounced for data lying far away from the basepoint. As a result, if a large variance is applied to one DoF of the orientation than the other DoFs, the distortion can drive the reproduced orientation to deviate from the desired geodesic (as illustrate by the blue and dash line in Fig. \ref{Fig_IllustrationForDistanceDistortion}(a)), resulting in the failure in guaranteeing the strict constraints on the other DoFs.

Fortunately, a good property is that the distances with respect to the basepoint are preserved in tangent spaces, as shown in Fig. \ref{Fig_IllustrationForDistanceDistortion}(b). And the Angle-Axis Space is directly a tangent space of SO(3) extended at the basepoint ${\vec I}_3$. This means that the injectivity radius, which quantifies the extent to which geodesics behave like straight lines in $\mathbb{R}^3$, can approach to $\pi$ for any geodesics that pass through the basepoint ${\vec I}_3$. While as well known, the distance between any two orientations is no more than $\pi$ radians. Based on this property, it is feasible to apply a large variance to a given DoF of the via-point without worrying about the distance distortion problem. Therefore, the Angle-Axis Space is more suitable to incorporate the local constraint on ${\vec I}_3$.

\begin{figure}[h]
    \centering
    \subfigure[]{\includegraphics[width = \hsize]{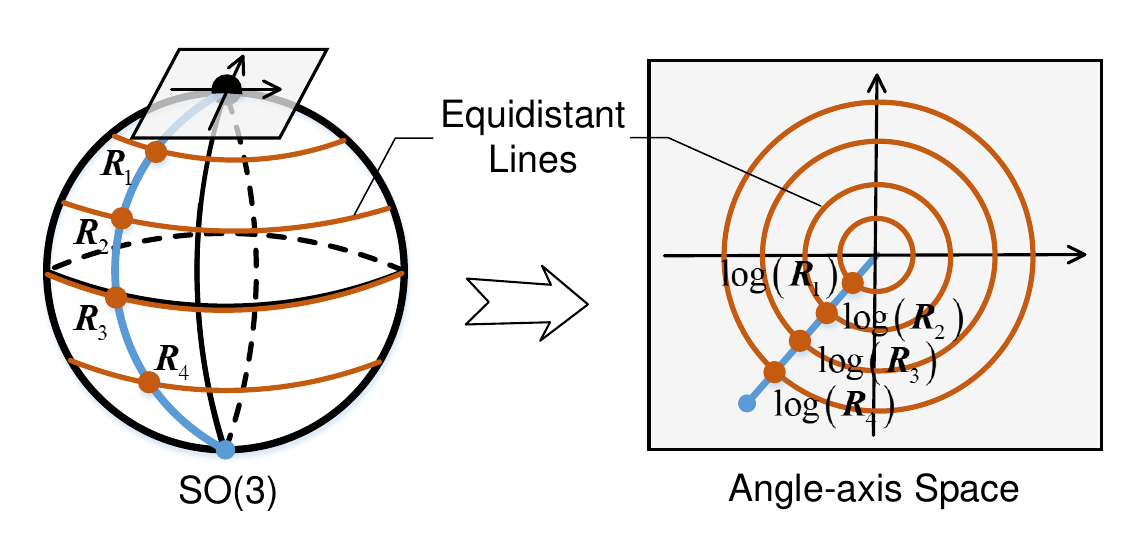}}
    \subfigure[]{\includegraphics[width = \hsize]{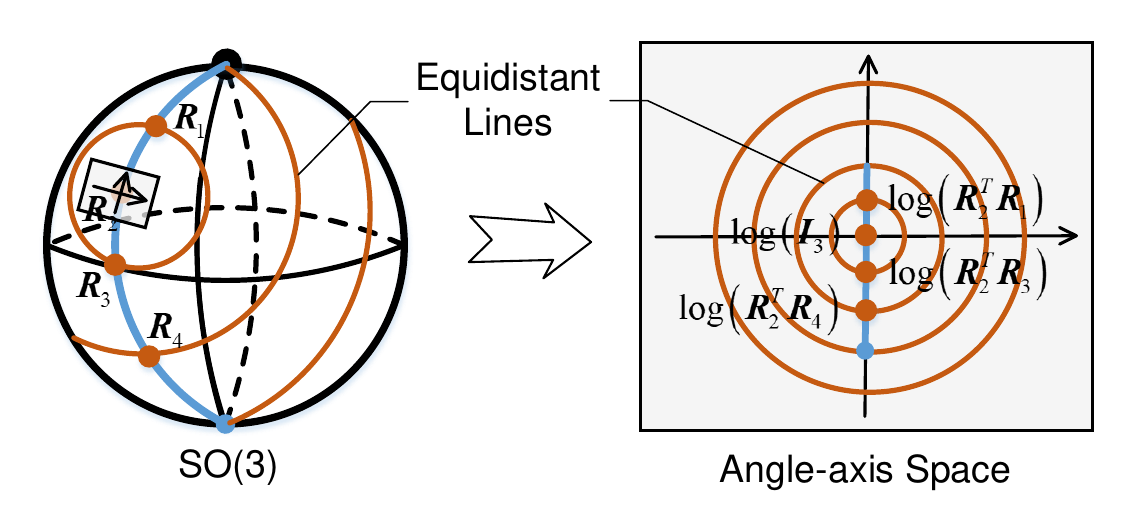}}
    \caption{ {\color{black} (a) The original Angle-Axis Space is a local extension of SO(3) around the identity. (b) By projecting orientations to an auxiliary frame ${\vec R}_2$, the SO(3) can be extended around the basepoint ${\vec R}_2$. } }
    \label{Fig_ExtensionAtDifferentBasepoints}
\end{figure}

Third, it is easy to consider local constraints at any basepoints by projecting demonstrations to a local frame. As shown in Fig. \ref{Fig_ExtensionAtDifferentBasepoints}(a), the original Angle-Axis Space is a local extension of SO(3) around the identity. In order to consider local constraints at ${\vec R}_2$, we need to obtain the tangent space extended around ${\vec R}_2$. To achieve this, we can choose ${\vec R}_2$ as an auxiliary frame and project all orientation data into this local frame by using:
\begin{align}
    \label{Eq_localTransformation}
    \begin{split}
        \bar{\vec R}_i = {\vec R}^\top_2 {\vec R}_i,
    \end{split}
\end{align}
where ${\vec R}_i$ is any given orientation. Then $\bar{\vec R}_i$ can be projected into the Angle-Axis Space to incorporate the local constraints, as shown in Fig. \ref{Fig_ExtensionAtDifferentBasepoints}(b). The reproduced orientations can be transformed back to the base frame by left-multiplying ${\vec R}_2$.

The similar operation is adopted in previous work (\cite{2020TRO-KMP-Orientation}), in which a logarithmic map\footnote{{\color{black}Please note that this logarithmic map is defined for quaternions, different to ours that is defined for rotation matrices. Unless specified, the logarithmic map used in this paper is defined by Eq. (\ref{log-map}).}} $\log\left(\cdot\right): \mathbb{R}^4 \longmapsto \mathbb{R}^3$ and an auxiliary quaternion ${\vec q}_a$ are used to project the quaternion demonstrations into an Euclidean space,
\begin{align}
    \label{eq-log-of-quaternion}
    \begin{split}
        \log\left({\vec q}_1 * \bar{\vec q}_a\right) &= \log\left({\vec q}\right) \\
                &= \left\{
                        {\begin{array}{*{20}{l}}
                            \arccos(v)\frac{\vec u}{\left\|{\vec u}\right\|}, &  {\vec u} \neq {\vec 0},\\ 
                            \left[0, 0, 0\right]^\top, &   {\text{otherwise}},
                       \end{array}}
              \right.
    \end{split}
\end{align}
where ${\vec q} = \left[v, {\vec u}\right]^\top$ is the quaternion product of a given quaternion ${\vec q}_1$ and the conjugation of the auxiliary quaternion, i.e., $\bar{\vec q}_a$. However, the physical meaning of the auxiliary quaternion ${\vec q}_a$ and the resultant Euclidean space are ambiguous. By using Angle-Axis Space, the auxiliary frame has better interpretability, i.e., the auxiliary frame is a new base frame to extend the tangent space at different basepoints. As shown in Fig. \ref{Fig_ExtensionAtDifferentBasepoints}(b), for any orientation $\bar{\vec R}_i = {\vec R}^\top_2 {\vec R}_i$, the point $\log\left( \bar{\vec R}_i \right)$  in $\vec{B}_\pi \backslash \mathcal{S}^2_\pi$ gives the rotation axis and angle from ${\vec R}_2$ to ${\vec R}_i$. In addition, although with no strict proof, we believe that the resultant Euclidean space by using Eq. (\ref{eq-log-of-quaternion}) is equivalent to our proposed Angle-Axis Space.

}



\section{Learning and Adaptation in Angle-Axis Space}
\label{Section_Methods}


{\color{black}After a comprehensive introduction to the Angle-Axis Space, we would like to discuss how to achieve orientation learning and adaptation with simultaneous incorporation of multiple local constraints.}
To learn the orientation skills from multiple demonstrations, we propose to transform orientations into the Angle-Axis Space and model the probabilistic distribution of the transformed trajectories by Gaussian Mixture Model (GMM) and Gaussian Mixture Regression (GMR) (Section~\ref{subsection_probModelling}). After that, we employ KMP to learn the distribution of transformed trajectories (Section~\ref{subsection_trajRepro}), followed by orientation adaptation towards desired points (Section~\ref{subsection_adaptation}) and minimization of angular acceleration (Section~\ref{subsec:acc:mini}). Then the local constraints, i.e., the incomplete orientation constraints (IOCs), are handled in Section~\ref{subsec:incomplete}, Section~\ref{subsec:multiIOVPs} {\color{black}and Section~\ref{subsec-memorybasedAlgorithm}, in which our proposed Gauss-based weighted average mechanism would be introduced}.

\subsection{Probabilistic Modeling in Angle-Axis Space}
\label{subsection_probModelling}

Given $M$ demonstrations consisting of time and orientations, denoted by $\vec{D}_R = \{ \{t_{n,m}, \vec{R}_{n,m}\}^{N}_{n=1}\}^{M}_{m=1}$, where $N$ denotes the length of demonstrations. In order to model the distribution of demonstrations, we project them into the Angle-Axis Space and model the transformed trajectories instead.
Let us denote the transformed new trajectories as $D_\psi = \{ \{t_{n,m}, {\vec \psi}_{n,m}, \dot{{\vec\psi}}_{n,m}\}^{N}_{n=1} \}^{M}_{m=1}$, where
\begin{align}
    \label{Eq_LocalFrame}
    \begin{split}
        {\vec \psi}_{n,m} = \log\left( \vec{R}^\top_a \vec{R}_{n,m} \right),
    \end{split}
\end{align}
and $\dot{{\vec\psi}} \in \mathbb{R}^3$ is the derivative of ${\vec \psi}_{n,m} \in \vec{B}_\pi \backslash \mathcal{S}^2_\pi$. Here, a local frame $\vec{R}_a$ is introduced as an auxiliary frame. 
{\color{black}As stated in previous section, the auxiliary frame can be used to obtain the tangent space extended at different basepoints.} As revealed in \cite{2023TASE-GaofengLi-TrajIOC, Gaofeng-RAL2018, 2021TMech-GaofengLi-IOC}, for points in $\vec{B}_\pi \backslash \mathcal{S}^2_\pi$, the closer to the boundary of $\vec{B}_\pi$, the bigger for the distance distortion between the manifold distance metric and the Euclidean distance metric. {\color{black}Therefore, except for considering local constraints, we can also choose a proper auxiliary frame to alleviate the distance distortion problem. To make all demonstration data be closer to the center of $\vec{B}_\pi$, ${\mat R}_a$ can be chosen as the geometrical center of all demonstrations. But it may be time-consuming to find the geometrical center in real applications. Therefore,}
{\color{black}$\mat{R}_a$ can be chosen as a point in the demonstrations, e.g., the starting point of the first demonstration, to achieve less distortion around $\mat{R}_a$. Multiple trajectories can be generated by choosing multiple auxiliary frames and be finally fused into a smooth trajectory by using our proposed Gauss-based weighted average mechanism. In this way, we do not need to worry about the distance distortion problem since each trajectory is only effective at a local time domain.}

Similarly to KMP, we use GMM to model the distribution of demonstrations, i.e., $\mathcal{P}(t,\vec{\psi},\dot{\vec{\psi}})$. Furthermore, we write ${\vec \eta} = [{\vec \psi}^\top \, \dot{{\vec\psi}}^\top]^\top \in \mathbb{R}^6$ and use GMR to retrieve a probabilistic reference trajectory $\vec{D}_r = \{t_n, \hat{\vec \mu}_n, \hat{\vec \Sigma}_n\}^N_{n=1}$ with $\mathcal{P}(\vec{\eta}_n|t_n)=\mathcal{N}(\vec{\mu}_n,\vec{\Sigma}_n)$. The reference trajectory in essence captures the probabilistic features of demonstrations, which can be exploited in the orientation learning. The details on the probabilistic modelling using GMM and GMR can be found in \cite{2016IntellSerRob-Tutorial-TPGMM, 2019IJRR-KMP}.

\subsection{Trajectory Reproduction using a Kernelized Approach}
\label{subsection_trajRepro}

Considering a parametric trajectory in the form of
\begin{align}
    \label{Eq_ParameterizedTraj}
    \begin{split}
        {\vec \eta}\left(t\right) = \vec{\Theta}^\top\left(t\right){\vec{\textmd{w}}} = \left[{\begin{array}{*{20}{l}}
                                                             {\vec \phi}^\top\left(t\right) & {\vec 0} & {\vec 0}\\
                                                                                {\vec 0} & {\vec \phi}^\top\left(t\right) & {\vec 0}\\
                                                                                {\vec 0} & {\vec 0} & {\vec \phi}^\top\left(t\right)\\ \hdashline
                                                                                \dot{\vec \phi}^\top\left(t\right) & {\vec 0} & {\vec 0}\\
                                                                                {\vec 0} & \dot{\vec \phi}^\top\left(t\right) & {\vec 0}\\
                                                                                {\vec 0} & {\vec 0} & \dot{\vec \phi}^\top\left(t\right)
                                                                            \end{array}}
                                                                        \right]{\vec{\textmd{w}}},
    \end{split}
\end{align}
where ${\vec \phi}\left(t\right) \in \mathbb{R}^B$ denotes the {\color{black}$B$-dimensional} basis functions and ${\vec{\textmd{w}}} \in \mathbb{R}^{3B}$ is the weight vector. We can learn ${\vec{\textmd{w}}}$ by minimizing the sum of covariance weighted quadratic errors, i.e,
\begin{align}
    \label{Eq_MeanObjective}
    \begin{split}
        J\left({\vec{\textmd{w}}}\right) =& \sum^{N}\limits_{n = 1} { \left( \vec{\Theta}^\top\left(t_n\right){\vec{\textmd{w}}} - \hat{\vec \mu}_n\right)^\top \hat{\vec \Sigma}^{-1}_n  \left( \vec{\Theta}^\top \left(t_n\right) {\vec{\textmd{w}}} - \hat{\vec\mu}_n\right) } \\
                                   & + \lambda ||{\vec{\textmd{w}}}||^2,
    \end{split}
\end{align}
where $\lambda||{\vec{\textmd{w}}}||^2$ is a penalty term to mitigate the overfitting, $\lambda > 0$ is a positive constant.

By solving the cost function in (\ref{Eq_MeanObjective}) and employing the kernel trick to alleviate the basis functions $\vec{\phi}(t)$, we can predict $\vec{\eta}(t^*)$ for an arbitrary input $t^*$ using (see \cite{2019IJRR-KMP} for details)
\begin{align}
    \label{Eq_TrajPred}
    \begin{split}
        {\vec \eta}\left(t^*\right) = {\vec {\textmd{k}}}^* \left( {\vec {\textmd{K}}} + \lambda{\vec \Sigma}\right)^{-1}{\vec \mu},
    \end{split}
\end{align}
where
\begin{align}
    \label{equ:mu}
    \begin{split}
        {\vec \mu} &= \left[\hat{\vec \mu}^\top_1 \, \hat{\vec \mu}^\top_2 \, \cdots \, \hat{\vec \mu}^\top_N\right],
            \end{split}
\end{align}
\begin{align}
    \label{equ:Sigma}
    \begin{split}
        {\vec \Sigma} &= \text{blockdiag}\left(\hat{\vec \Sigma}_1, \hat{\vec \Sigma}_2, \cdots, \hat{\vec \Sigma}_N\right),
                    \end{split}
\end{align}
\begin{align}
    \label{equ:k:star}
    \begin{split}
           {\vec {\textmd{k}}}^* &= \left[{\vec {\textmd{k}}}\left(t^*, t_1\right) \, {\vec {\textmd{k}}}\left(t^*, t_2\right) \, \cdots \, {\vec {\textmd{k}}}\left(t^*, t_N\right)\right],
                \end{split}
\end{align}
\begin{align}
    \label{equ:K}
    \begin{split}
        {\vec {\textmd{K}}} &= \left[{\begin{array}{*{20}{c}}
                                          {\vec {\textmd{k}}}\left(t_1, t_1\right) & {\vec {\textmd{k}}}\left(t_1, t_2\right) & \cdots & {\vec {\textmd{k}}}\left(t_1, t_N\right) \\
                                          {\vec {\textmd{k}}}\left(t_2, t_1\right) & {\vec {\textmd{k}}}\left(t_2, t_2\right) & \cdots & {\vec {\textmd{k}}}\left(t_2, t_N\right) \\
                                          \vdots & \vdots & \ddots & \vdots \\
                                          {\vec {\textmd{k}}}\left(t_N, t_1\right) & {\vec {\textmd{k}}}\left(t_N, t_2\right) & \cdots & {\vec {\textmd{k}}}\left(t_N, t_N\right)
                                    \end{array}}
                              \right].
    \end{split}
\end{align}

In (\ref{equ:k:star})--(\ref{equ:K}), ${\vec {\textmd{k}}}(\cdot,\cdot)$ is defined by the kernel function, i.e., ${\vec {\textmd{k}}}\left(t_i, t_j\right) = k\left(t_i, t_j\right)\vec{I}_B$. Here, $\vec{I}_B$ is the $B$-dimensional identity matrix.

The predicted trajectory point ${\vec \eta}\left(t^*\right)$ includes both ${\vec \psi}\left(t^*\right)$ and ${\dot{\vec{\psi}}}\left(t^*\right)$. In fact, we can recover the orientation $\vec{R}\left(t^*\right)$  at time $t^*$ only using the predicted ${\vec \psi}\left(t^*\right)$, i.e.,
\begin{align}
    \label{Eq_OriRecover}
    \begin{split}
        \vec{R}\left(t^*\right) = \vec{R}_a \exp\left({\vec \psi}\left(t^*\right)\right).
    \end{split}
\end{align}
Here it does not matter that Eq. (\ref{Eq_TrajPred}) cannot guarantee ${\vec \psi}\left(t^*\right)$ be within $\vec{B}_\pi \backslash \mathcal{S}^2_\pi$, since the exponential mapping ``$\exp\left(\cdot\right): \mathbb{R}^3 \rightarrow SO\left(3\right)$" is effective for all points in $\mathbb{R}^3$, as shown in Fig. \ref{fig-mapping}.

\subsection{Orientation Adaptation Towards Desired Via-Points}
\label{subsection_adaptation}

As discussed in many previous works (\cite{ude2014orientation, abu2015adaptation, 2020TRO-KMP-Orientation}), the capability of adapting the learned orientation skills towards desired points is important for robots in many settings. For example, in a table tennis robot, the success of returning a ball largely depends on the racket orientation. We now discuss orientation adaptations in Angle-Axis Space, where both orientation and angular velocity profiles can be modulated to pass through arbitrary desired points.

Given $H$ desired points consisting of orientations and angular velocities, denoted as $\tilde{\vec{D}}_d := \{\tilde{t}_h, \tilde{\vec{R}}_h, \tilde{\vec \omega}_h\}^H_{h=1}$, where $\tilde{\vec{R}}_h$ and $\tilde{\vec \omega}_h$ represent the desired orientation and angular velocity at time $\tilde{t}_h$. We first need to transform them into the Angle-Axis Space. Similarly to the transformation of demonstrations in Section~\ref{subsection_probModelling}, for each desired orientation $\tilde{\vec R}_h$ we can transform it as
\begin{align}
    \label{Eq_desiredOriTransformation}
    \begin{split}
        \tilde{\vec \psi}_h = \log\left( \vec{R}^\top_a \tilde{\vec{R}}_h \right).
    \end{split}
\end{align}
In order to obtain the desired derivative $\dot{\tilde{\vec \psi}}_h$, we compute the desired orientation $\tilde{\vec{R}}_{\tilde{t}_h + \delta_t}$ at $\tilde{t}_h + \delta_t$ with $\delta_t > 0$ being a small constant. Note that the desired angular velocity $\tilde{\vec \omega}_h$ is a vector expressed in the world frame. We have
\begin{align}
    \label{Eq_newOri}
    \begin{split}
        \tilde{\vec{R}}_{\tilde{t}_h + \delta_t} = \tilde{\vec{R}}_h \exp\left( \tilde{\vec{R}}^\top_h \tilde{\vec \omega}_h \delta_t \right),
    \end{split}
\end{align}
where $\tilde{\vec{R}}^\top_h \tilde{\vec \omega}_h$ is the desired velocity expressed in the local frame $\tilde{\vec{R}}_h$. Then, the derivative $\dot{\tilde{\vec \psi}}_h$ is approximated by
\begin{align}
    \label{Eq_DesDeriv}
    \begin{split}
        \dot{\tilde{\vec \psi}}_h \approx \frac{ \log\left( \vec{R}^\top_a \tilde{\vec{R}}_{\tilde{t}_h + \delta_t}\right) - \tilde{\vec \psi}_h }{\delta_t}
    \end{split}
\end{align}
Until now, we have transformed the dataset $\tilde{\vec{D}}_d$ into $\tilde{\vec{D}}_\eta = \{\tilde{t}_h, \tilde{\vec \eta}_h\}^H_{h = 1}$ with $\tilde{\vec \eta}_h = [\tilde{\vec \psi}^\top_h \, \dot{\tilde{\vec \psi}}^\top_h]^\top \in \mathbb{R}^6$. In addition, for each desired point $\tilde{\vec \eta}_h$, we can design a covariance $\tilde{\vec \Sigma}_h \in \mathbb{R}^{6 \times 6}$ to control the precision of adaptations. Thus, we can obtain an additional probabilistic reference trajectory $\tilde{\vec{D}}_r = \{\tilde{t}_h, \tilde{\vec \eta}_h, \tilde{\vec \Sigma}_h\}^H_{h = 1}$ to indicate the transformed desired states.

As suggested in \cite{2019IJRR-KMP}, the adaptation problem can be transformed into the problem of learning an extended reference trajectory corresponding to the concatenation of the initial reference trajectory and the desired points. Thus, in our case, we concatenate $\vec{D}_r$ and $\tilde{\vec{D}}_r$ to generate an extended reference database $\vec{D}^U_r = \vec{D}_r \cup \tilde{\vec{D}}_r$, where `$\cup$' stands for the union operation. By learning the extended reference trajectory $\vec{D}^U_r$ using the approach discussed in Section~\ref{subsection_trajRepro}, we can ensure that the adapted orientation trajectory goes through the desired orientation $\tilde{\vec R}_h$ with desired angular velocity $\tilde{\vec \omega}_h$ at time $\tilde{t}_h$, while the precision depends on the covariance $\tilde{\vec \Sigma}_h$ associated with each desired point.

\subsection{Angular Acceleration Constraints}
\label{subsec:acc:mini}

In many applications, it is required to make the reproduced trajectory as smooth as possible. Therefore, a practical orientation learning method is desirable to be able to consider the angular acceleration/jerk constraints. For our proposed method, it is straightforward to take into account the angular acceleration or jerk constraints, owing to the kernelized representation. For example, for the case of minimizing angular accelerations, we can simply replace $\vec{\mu}$ in (\ref{equ:mu}) with $\bar{\vec \mu} = \left[\bar{\vec \mu}^\top_1 \, \bar{\vec \mu}^\top_2 \, \cdots \, \bar{\vec \mu}^\top_N\right]$,
where  $\bar{\vec \mu}_n = \left[\hat{\vec \mu}^\top_n, {\vec 0}_{1\times3}\right]^\top$, {\color{black}$\hat{\vec \mu}^\top_n$ represents the mean of the reference trajectory extracted by using GMM/GMR, and ${\vec 0}_{1\times3}$ represents the additional components to consider angular acceleration. Then we can} replace $\vec{\Sigma}$ in (\ref{equ:Sigma}) with $\bar{\vec \Sigma}= \text{blockdiag}\left(\bar{\vec \Sigma}_1, \bar{\vec \Sigma}_2, \cdots, \bar{\vec \Sigma}_N\right)$, where $\bar{\vec \Sigma}_n = \text{blockdiag}\left(\hat{\vec \Sigma}_n, \frac{1}{\lambda_a}I_3\right)$, {\color{black}$\hat{\vec \Sigma}_n$ represents the covariance of the reference trajectory extracted by using GMM/GMR, and $\frac{1}{\lambda_a}I_3$ represents the additional components to consider angular acceleration.}
Here $\lambda_a > 0$ is a user-specified weight to balance the contradiction between the trajectory reproduction and the angular acceleration minimization. Note that the definition of ${\vec {\textmd{k}}}(\cdot,\cdot)$ will be altered since the parametric form in (\ref{Eq_ParameterizedTraj}) includes the second-order derivative of $\vec{\phi}(t)$ in order to account for the acceleration constraints. More details on angular acceleration minimization can be found in \cite{2020TRO-KMP-Orientation}.



\section{ {\color{black}Adaptation with Incomplete Orientation Via Points (IOVPs)} }
\label{sec-IOVPs}

{\color{black}In Section~\ref{subsect-discussion-angle-axis}, the benefits of Angle-Axis Space in imitation learning with incorporating local constraints have been discussed. In this section, we would like to discuss the motivations and the way in incorporating local constraints.}

\subsection{ {\color{black}Adaptation with Single IOVP} }
\label{subsec:incomplete}

In many tasks, it is expected to apply different precisions on different DoFs for a given via-point\footnote{In previous works, the desired points are classified as start-/via-/end-points, while in this paper we refer to them as via-points.}. We name this kind of via-points as {\bf Incomplete-Orientation-Via-Points (IOVPs)}. Taking the earphone's peg-in-hole task as an example, to fulfill the plugging task, the pin's orientation is expected to pass through a desired point $\vec{R}_d$ with a desired angular velocity ${\vec \omega}_d$ at $t_d$. However, as shown in Fig. \ref{Fig_Pingpong-IOVP}, we can also choose to plug into the hole at a different orientation $\vec{R}^{'}_{d}$ with the same angular velocity ${\vec \omega}_d$, in which the $\vec{R}^{'}_{d}$ is obtained by rotating an angle around the pin's direction. More similar examples can also be found in the arc-welding tasks, drilling tasks, wheel-valve turning tasks, and so on. In these scenarios, it is easy to find a feasible orientation (e.g., $\vec{R}_d$) according to the task requirements. However, it is difficult and non-intuitive to determine which orientation is optimal. In other words, these tasks require the users/algorithm to manually/automatically specify a desired via-point with strict constraints on the rotation around $x$ and $y$ directions, but with loose constraint on the rotation around $z$. In summary, the via-point is expected to have high precision along the $x$ and $y$ directions, while having a low precision in the $z$ direction.

The incorporation of IOVP can bring many benefits. For example, the given desired orientation $\vec{R}_d$ may become unreachable due to joint limits or new emerging obstacles. By considering IOVP, it is also possible to find a feasible solution when $\vec{R}_d$ is unreachable, or we can generate a trajectory with smaller acceleration/jerk costs.

\begin{figure}[t]
    \centering
    \includegraphics[width = \hsize]{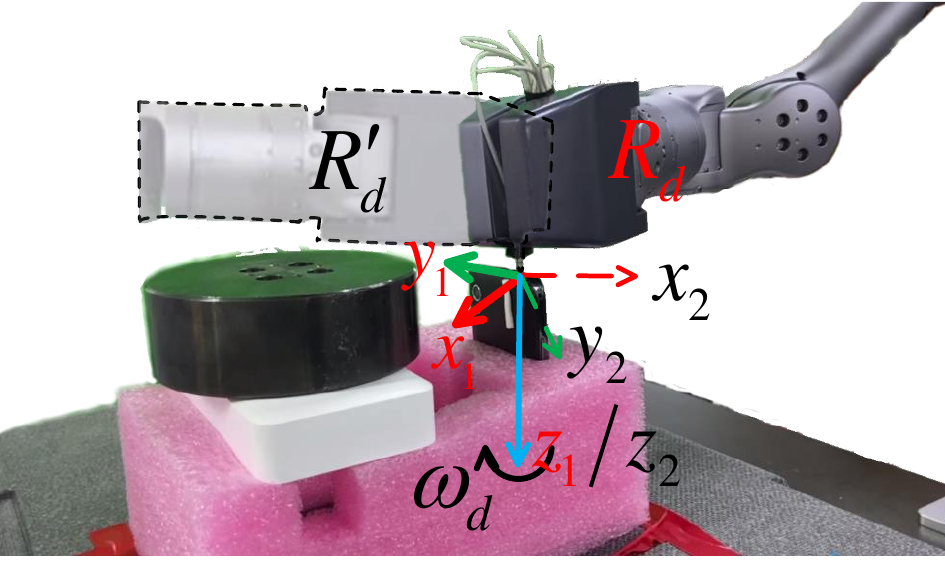}
    \caption{Illustration of the Incomplete-Orientation-Via-Points for a peg-in-hole task.}
    \label{Fig_Pingpong-IOVP}
\end{figure}

However, most existing methods are incapable of handling the IOVP. The quaternion-based method proposed in \cite{2020TRO-KMP-Orientation} is capable of adapting orientations towards desired via-points. But it is unable to assign different precision to each DoF of the via-point, since the quaternion representation is subject to the constraint of unit norm and accordingly its four elements are coupled. Although quaternions are transformed into Euclidean space in \cite{2020TRO-KMP-Orientation}, the physical meaning of the transformed orientations in Euclidean space is ambiguous, making the individual precision control difficult. As a result, the release on one DoF would also cause the big error on other DoFs, which is violated with our expectation for strict constraints on these DoFs{\color{black}, as illustrated in Fig. \ref{Fig_IllustrationForDistanceDistortion}(a)}.


Based on our Angle-Axis Space-based framework, it is straightforward to deal with the IOVP. Let us denote the desired IOVP as $\{t_d, \vec{R}_d, {\vec \omega}_d\}$. Without loss of generality, we can assume that the end-effector rotating around the $z$ direction has no effects on the performance. As stated in Section \ref{Section_MathematicPre}, the Angle-Axis Space is the local chart of a local linear extension of $SO(3)$ around the identity $\vec{I}_3$. In order to incorporate the IOVP, we extend the rotation group around $\vec{R}_d$. Thus, the local frame $\vec{R}_d$ can be chosen as the auxiliary frame, i.e., $\vec{R}_a = \vec{R}_d$. Then all demonstrations can be transformed into the auxiliary frame according to (\ref{Eq_LocalFrame}). The $\{t_d, \vec{R}_d, {\vec \omega}_d\}$ can also be transformed into the local frame via (\ref{Eq_desiredOriTransformation}) and (\ref{Eq_DesDeriv}). The covariance $\tilde{\vec \Sigma}_h$ can be designed as: $\tilde{\vec \Sigma}_h = \text{diag}\left(\varepsilon_s, \varepsilon_s, \varepsilon_b, \varepsilon_s, \varepsilon_s, \varepsilon_s\right)$, where $\varepsilon_s > 0$ is a very small constant to achieve high precision along the $x$ and $y$ directions, while $\varepsilon_b > 0$ is a big constant to release the constraint along the $z$ direction. Then the IOVP can be treated as a normal via-point and handled by following the similar steps shown in Section \ref{subsection_adaptation}. As we can tell, the proposed approach is simple, straightforward, and effective, owing to the Angle-Axis Space-based representation method.

\subsection{Incorporation of Multiple IOVPs by using Weighted Average Mechanism in SO(3)}
\label{subsec:multiIOVPs}


{\color{black}In Section~\ref{subsec:incomplete}, we have shown that the IOVP can be naturally handled by choosing ${\mat R}_d$ as the auxiliary frame and projecting all demonstrations and via-points into the auxiliary frame.  In order to incorporate multiple IOVPs, multiple trajectories are generated by projecting data into multiple tangent spaces. However, how to fuse these multiple trajectories into a smoothed trajectory is still unexplored. The difficulty lies in two aspects. First, how to guarantee the dominance of each trajectory at the given local time domain. Second, how to guarantee the smoothness of the final reproduced trajectory. Although the KMP can guarantee the smoothness of each trajectory, the smoothness of the final trajectory is also affected by the fusing mechanism. In}
this section, we would like to propose a Gauss-based weighted average mechanism in SO(3) to smoothly fuse multiple trajectories.

Suppose the set of multiple IOVPs is given by $\bar{\vec{D}}_d := \{\bar{t}_k, \bar{\vec{R}}_k, \bar{\vec \omega}_k\}^K_{k=1}$, where $\bar{\vec{R}}_k$ and $\bar{\vec \omega}_k$ represent the desired IOVPs and angular velocity at time $\bar{t}_k$, $K$ is the number of IOVPs. By following Section~\ref{subsection_probModelling}$\sim$Section~\ref{subsec:incomplete}, it is easy to obtain $K$ reproduced orientation trajectories $\vec{R}_k \left(t\right)$, in which each $\vec{R}_k \left(\bar{t}_k\right)$ can satisfy the $k$-th IOC only at time $\bar{t}_k$, for $k = 1, 2, \cdots, K$. In order to reproduce a smooth trajectory that can satisfy all the IOCs simultaneously, we design a Gauss-based weighted average mechanism to fuse the $K+1$ trajectories $\vec{R}_k \left(t\right), k = 0, 1, 2, \cdots, K$:
\begin{align}
    \begin{split}
    \label{Eq_FinalFusion}
        \vec{R}\left(t^*\right) &=  \left( \sum^{K}_{k=1}{W_k\left(t^*\right)} \right) {\vec R}_{I}\left(t^*\right) \oplus W_0\left(t^*\right) \vec{R}_0\left(t^*\right) ,
    \end{split}\\
    \begin{split}
    \label{Eq_FusionAllIOVPs}
        {\vec R}_{I}\left(t^*\right) &=  W_1\left(t^*\right) \vec{R}_1\left(t^*\right) \oplus \cdots   \oplus W_K\left(t^*\right) \vec{R}_K\left(t^*\right),
    \end{split}\\
    \begin{split}
    \label{Eq_W0}
        W_0\left(t^*\right) &= 1-\sum^{K}_{k=1}{W_k\left(t^*\right)},
    \end{split}
\end{align}
where the time $t^*$ is an query input. $\vec{R}_0\left(t\right)$ is the reproduced baseline trajectory  and $W_0\left(t\right)$ is the weighted curve for the baseline trajectory, $\vec{R}_k\left(t\right)$ represents the $k$-th predicted trajectory by considering the $k$-th IOVP and $W_k\left(t\right)$ is the weighted curve for each $\vec{R}_k \left(t\right)$. The notation ``$W_i {\vec R}_i \oplus W_j {\vec R}_j$" represents the weighted average mechanism for any two points $\vec{R}_i$ and $\vec{R}_j$ in the rotation group SO(3), where $W_i$ and $W_j$ are the weights for $\vec{R}_i$ and $\vec{R}_j$, respectively. The baseline trajectory $\vec{R}_0\left(t\right)$ can be the predicted trajectory without considering any adaptations or by only considering the adaptation towards the starting via-point. Then the fused ${\vec R}_I\left(t\right)$ represents the weighted sum of the $K$ trajectories by considering the $k$-th IOVPs, as shown in Eq. (\ref{Eq_FusionAllIOVPs}). Eq. (\ref{Eq_FinalFusion}) represents the weighted average of ${\vec R}_I\left(t\right)$ and $\vec{R}_0\left(t\right)$, where the weight for ${\vec R}_I\left(t\right)$ is the sum of weights for all the $K$ trajectories and the weight $W_0\left(t\right)$ for $\vec{R}_0\left(t\right)$ is given by Eq. (\ref{Eq_W0}).

\begin{figure}[b]
    \centering
    \includegraphics[width = \hsize]{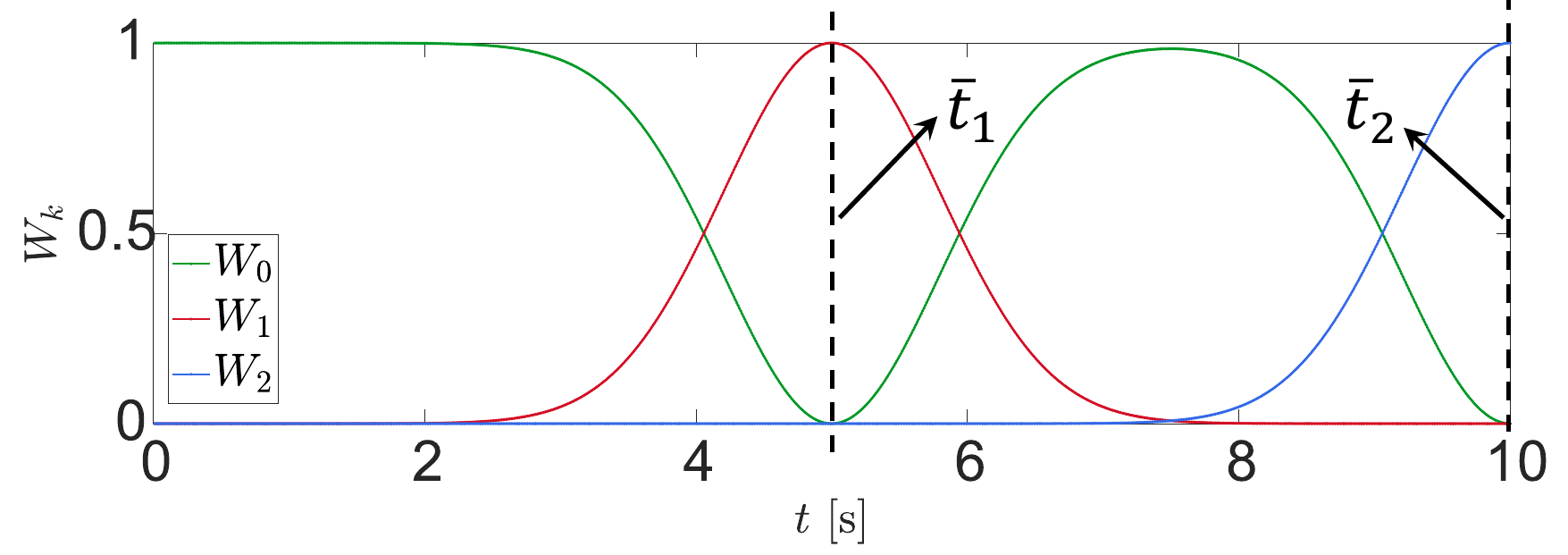}
    \caption{An example for the design of the weighted curve $W_k\left(t\right)$. In this example, $\Delta t = 2.4$s, $\bar{t}_1 = 5.0$s, $\bar{t}_2 = 10.0$s, $K = 2$.}
    \label{Fig_GaussianWeightCurve}
\end{figure}

Next, let's discuss the design of the weighted curve $W_k\left(t\right)$. In order to guarantee the smoothness of the final reproduced trajectory $\vec{R}\left(t\right)$, $W_k\left(t\right)$ are designed as a smooth Gauss curve:
\begin{align}
    \label{Eq_GaussWeight}
    \begin{split}
        W_k\left(t\right) = \exp{\left( - \frac{ \left(t - \bar{t}_k \right)^2 }{2\sigma^2} \right)},
    \end{split}
\end{align}
where $\sigma_k = \Delta t_k / 3$ is a parameter to control the effective domain of the $k$-th trajectory $\vec{R}_k\left(t\right)$ and $\Delta t_k$ is the domain width. According to the property of Gauss function, $W_k\left(t\right)$ is close to zero when $t$ is out the scope of $\left[\bar{t}_k - \Delta t_k, \bar{t}_k + \Delta t_k\right]$. The principle in choosing $\Delta t$ is to guarantee:
\begin{align}
    \label{Eq_DeltaTPrinciple}
    \begin{split}
        \bar{t}_k + \Delta t_k \leq \bar{t}_{k+1} \quad \& \quad  \bar{t}_{k} - \Delta t_{k} \geq  \bar{t}_{k-1},
    \end{split}
\end{align}
for all $k = 2, 3, \cdots, K-1$, such that the adjacent IOVPs do not interfere with each other. An example for the design of $W_k\left(t\right)$ is given in Fig. \ref{Fig_GaussianWeightCurve}. Certainly, this design is still unperfect since the final reproduced orientation trajectory may encounter fluctuations when two adjacent IOVPs are too close in the querying space. Thus a better design for $W_k\left(t\right)$ remains open in the future work. But the current design is good enough for tasks with sparse IOVPs.

Finally, let's discuss the weighted average mechanism in non-Euclidean space. Different to the Euclidean space, the weighted average mechanism in non-Euclidean space is non-trivial. Although several rotation average problems have been reported in \cite{2013IJCV_Hartley_RotationAveraging, 2010MTNS_Hartley_RotationAveraging}, the weighted average in SO(3) is still missing in existing literatures, according to our limited knowledge. {\color{black}Only in \cite{2021INFOTEH-Kapic-WeightedRotationAveraging}, an iterative algorithm was presented for weighted rotation average. However, the discontinuity problem, which would be discussed later in Section~\ref{subsec-memorybasedAlgorithm}, is not addressed. }

Here we propose to achieve weighted average in Riemannian space by utilizing the geodesic curve (as illustrated in Fig. \ref{Fig_weightedAverage}):
\begin{align}
    \label{Eq_weightSumInRiemannian}
    \begin{split}
        W_i {\vec R}_i \oplus W_j {\vec R}_j &= \vec{R}_i\exp\left(d {\bar {\vec \psi}} \right),
    \end{split}
\end{align}
{\color{black}where ${\bar {\vec \psi}} = \frac{1}{d\left({\vec R}_i, {\vec R}_j\right)}\log\left({\vec R}^\top_i {\vec R}_j\right)$ represents the traverse direction. This traverse direction indicates that the weighted average result of $\vec{R}_i$ and $\vec{R}_j$ lies on the geodesic curve from $\vec{R}_i$ to $\vec{R}_j$. While the distance of the averaging result to $\vec{R}_i$ is given by $d$, which is determined by $W_i$, $W_j$, and $d\left({\vec R}_i, {\vec R}_j\right)$:
\begin{align}
    \label{Eq_distance}
    \begin{split}
        d &= \frac{W_j}{W_i + W_j} d\left({\vec R}_i, {\vec R}_j\right),
    \end{split}
\end{align}
where  $d\left({\vec R}_i, {\vec R}_j\right) = \left\|\log\left({\vec R}^\top_i {\vec R}_j\right)\right\|$ is the geodesic distance from ${\vec R}_i$ to ${\vec R}_j$.
}

\begin{figure}[h]
    \centering
    \includegraphics[width = 0.85\hsize]{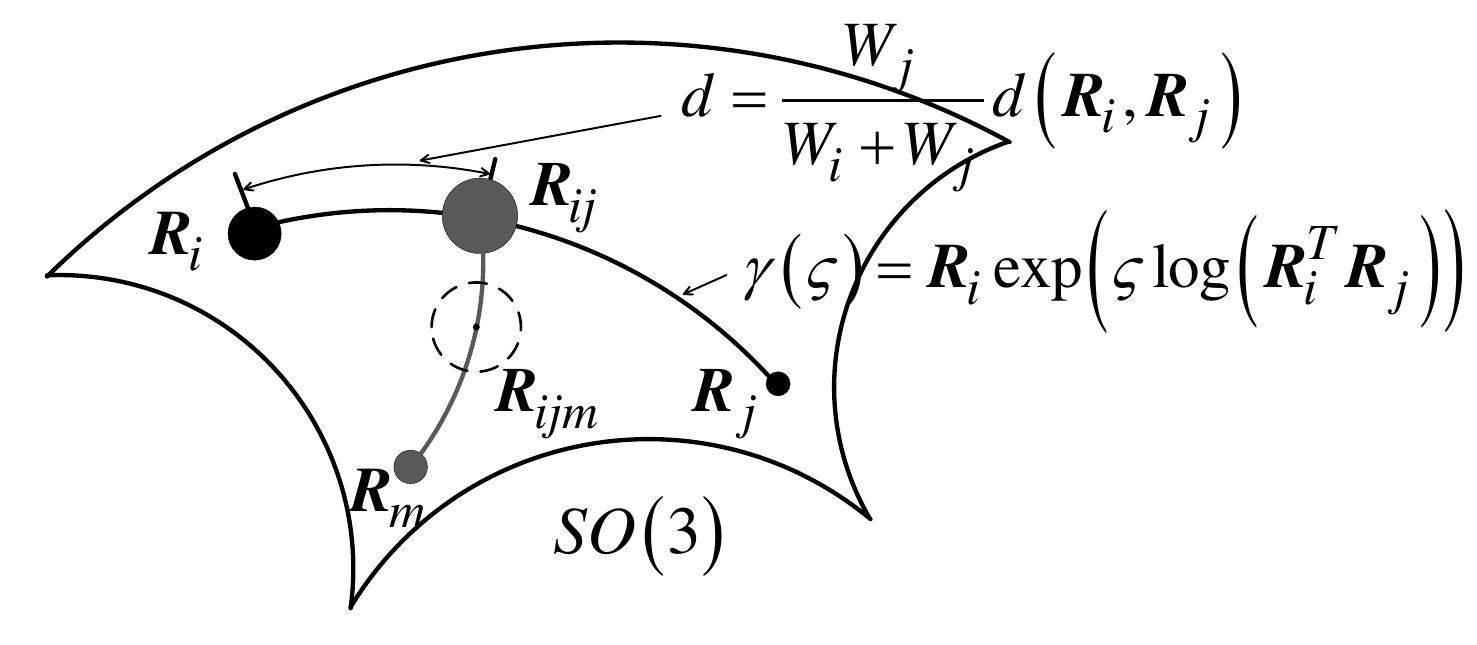}
    \caption{Illustration of the weighted average mechanism in SO(3). As shown, $\gamma\left(\varsigma\right)$ ($\varsigma \in \left[0,1\right]$ is a varying parameter) gives the geodesic curve from $\vec{R}_i$ to $\vec{R}_j$. The weighted average result $\vec{R}_{ij}$ is a point away from $\vec{R}_i$ with a distance $d$. Here the points with different weights are graphically represented as circles with different radii.}
    \label{Fig_weightedAverage}
\end{figure}

Let's denote $\vec{R}_{ij} := W_i {\vec R}_i \oplus W_j {\vec R}_j$. Obviously, when $W_i = W_j$, $\vec{R}_{ij}$ is the on the middle of the geodesic curve; when $W_i = 0$, we have $\vec{R}_{ij} = \vec{R}_j$; when $W_j = 0$, we have $\vec{R}_{ij} = \vec{R}_i$. These results are matched with our intuitiveness.

To make the weighted sum of multiple points be possible, we can assign the weight $W_i + W_j$ to $\vec{R}_{ij}$. Then $\vec{R}_{ij}$ can be regarded as a new point with a new weight $W_i + W_j$, such that it can do weighted average with the next point $\vec{R}_m$. By continuing this process, the weighted sum of $K$ points, as shown in Eq. (\ref{Eq_FusionAllIOVPs}), can be achieved. Here we assume that the sum of all weights is equal to 1. If not, all the weights can be regularized to guarantee that the sum is 1.

{\color{black}
\subsection{A Memory-based Algorithm to Address the Discontinuity Issue in  Weighted Rotation Average Mechanism}
\label{subsec-memorybasedAlgorithm}

\begin{figure}[b]
    \centering
    \includegraphics[width = \hsize]{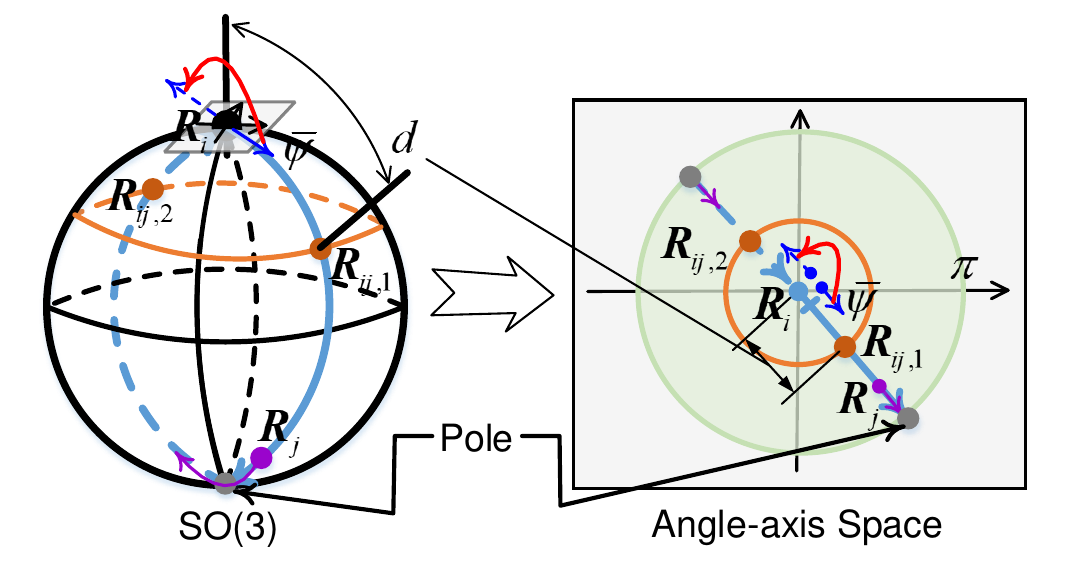}
    \caption{ {\color{black}Illustration of the discontinuity issue in our proposed weighted average mechanism. When ${\vec R}_j$ crosses over the pole, the direction ${\bar {\vec \psi}}$ would flip to the opposite direction, leading to the `jumping' of the averaging result from ${\vec R}_{ij, 1}$ to ${\vec R}_{ij, 2}$.} }
    \label{Fig_weightedAverage}
\end{figure}

\begin{figure*}[hb]
    \centering
    \subfigure[]{\includegraphics[width = 0.49\hsize]{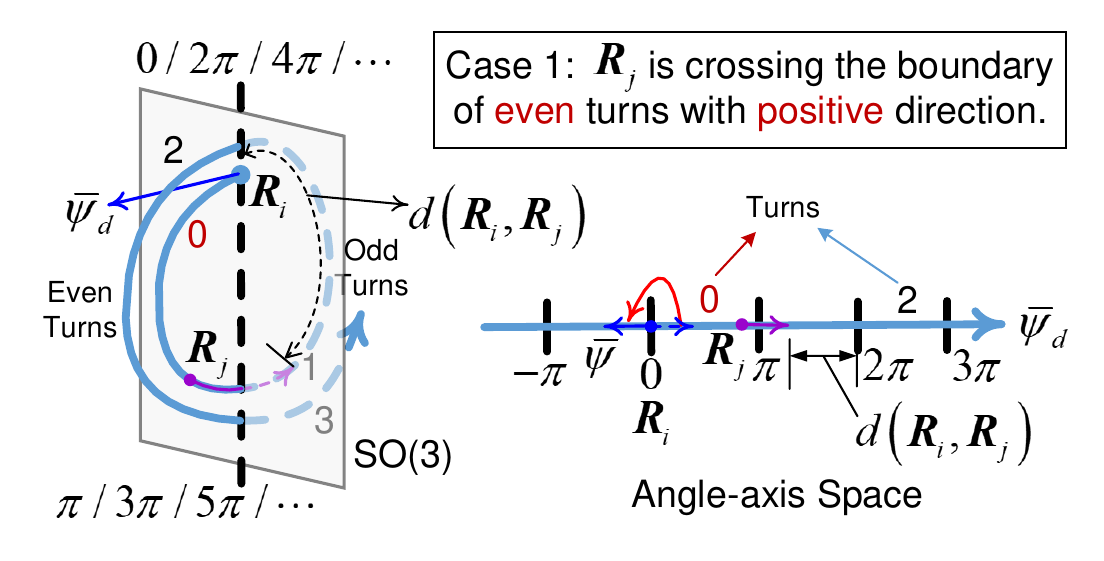}}
    \subfigure[]{\includegraphics[width = 0.49\hsize]{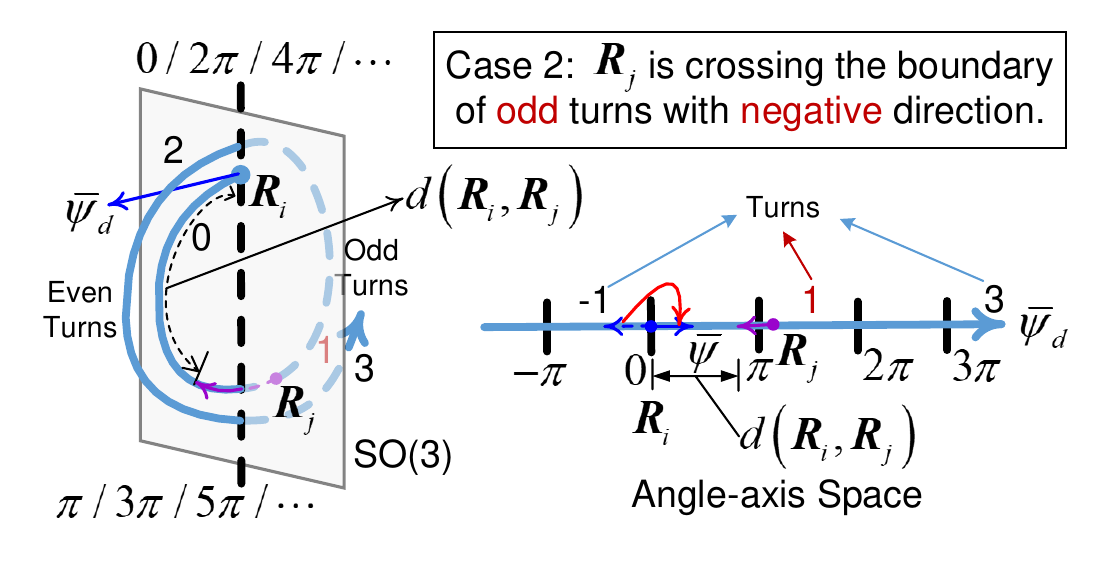}}
    \subfigure[]{\includegraphics[width = 0.49\hsize]{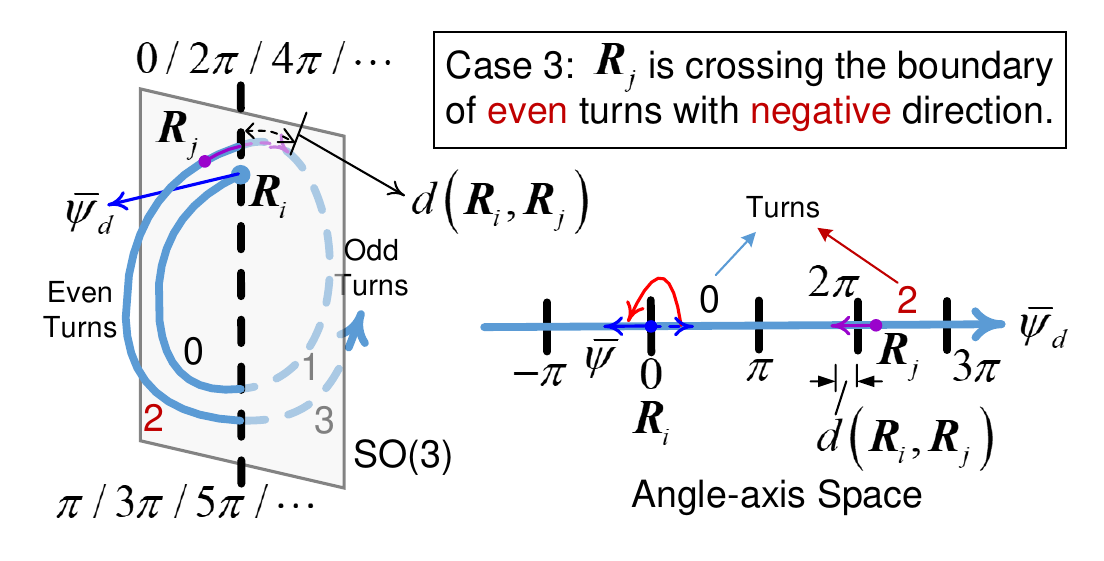}}
    \subfigure[]{\includegraphics[width = 0.49\hsize]{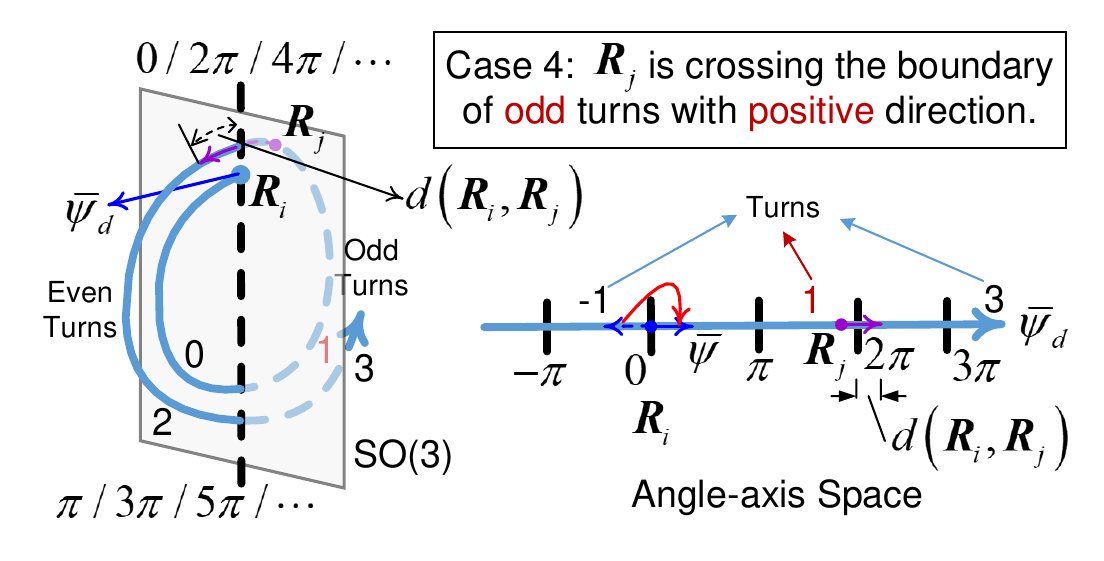}}
    \caption{{\color{black}(a) Case 1: The turns $N_{turns}$ is an even number and $d\left({\vec R}_i, {\vec R}_j\right) > d_{th}$. The default distance and averaging result are obtained according to lines 3-4 in {\bf Algorithm \ref{algorithm1:Update1}}. When the flipping is detected, i.e., the ${\bar {\vec \psi}}\left(t_c\right)$ (Marked as ${\bar {\vec \psi}}$ in the figure) is opposite with the historical data ($-{\bar{\vec \psi}}^\top_p {\bar{\vec \psi}}\left(t_c\right) > e_\psi$), we set $N_{turns} = N_{turns}+1$ and the distance would be updated according to line 12 in {\bf Algorithm \ref{algorithm1:Update1}} and the traverse direction is changed to $-{\bar {\vec \psi}}\left(t_c\right)$, as given at line 13 in {\bf Algorithm \ref{algorithm1:Update1}}.   (b) Case 2: The turns $N_{turns}$ is an odd number and $d\left({\vec R}_i, {\vec R}_j\right) > d_{th}$. The default update strategy is given at lines 3-4 in {\bf Algorithm \ref{algorithm2:Update2}}. When the flipping is detected, we set $N_{turns} = N_{turns}-1$ and the update strategy is given at lines 12-13 in {\bf Algorithm \ref{algorithm2:Update2}}. (c) Case 3: The turns $N_{turns}$ is an even number and $d\left({\vec R}_i, {\vec R}_j\right) \leq d_{th}$. The default update strategy is given at lines 3-4 in {\bf Algorithm \ref{algorithm1:Update1}}. When the flipping is detected, we set $N_{turns} = N_{turns}-1$ and the update strategy is given at lines 12-13 in {\bf Algorithm \ref{algorithm1:Update1}}. (d) Case 4: The turns $N_{turns}$ is an odd number and $d\left({\vec R}_i, {\vec R}_j\right) \leq d_{th}$. The default update strategy is given at lines 3-4 in {\bf Algorithm \ref{algorithm2:Update2}}. When the flipping is detected, we set $N_{turns} = N_{turns}+1$ and the update strategy is given at lines 12-13 in {\bf Algorithm \ref{algorithm2:Update2}}. }}
    \label{Fig_MemorybasedWeightedAverage}
\end{figure*}

Although the ${\vec R}_k\left(t\right)$ reproduced by KMP and the weighted curve $W_k\left(t\right)$ are both smooth, the weighted rotation average may also encounter discontinuity issue due to the non-linear structure of SO(3). As illustrated in Fig. \ref{Fig_weightedAverage}, When ${\vec R}_j$ crosses over the pole, the traverse direction ${\bar {\vec \psi}}$ obtained by the logarithm mapping (given in Eq. (\ref{log-map})) would flip to the opposite direction. As a result, the averaging result ${\vec R}_{ij,1}$ would flip to ${\vec R}_{ij,2}$ suddenly, leading to the discontinuity of the reproduced trajectory. More severely, when ${\vec R}_j$ is perturbated around the pole, the averaging result ${\vec R}_{ij}$ may jump to any points on the orange curve. Here the orange curve is the set of orientations that keep distances $d$ to ${\vec R}_i$, as shown in Fig. \ref{Fig_weightedAverage}. This is because the Angle-Axis Space is limited within a ball region ${\vec B}_\pi$ and the distance $d\left({\vec R}_i, {\vec R}_j\right)$ is bounded within $\left[0, \pi\right]$. This discontinuity issue is not specific to the angle-axis representation, but also exists in other representations, as it stems from the Riemannian structure of SO(3).

To address the discontinuity issue, we propose a memory-based algorithm to improve the robustness of our weighted rotation average mechanism. Our proposed algorithm is inspired by \cite{2022ICRA-Jeong-MemoryBasedSO3}. In \cite{2022ICRA-Jeong-MemoryBasedSO3}, an Extended Exponential Coordinates (EEC) by representing SO(3) with continuously many turns was proposed. Based on EEC, a radially unbounded potential function was designed to implement a 6D impedance controller. However, the rotation average is not involved in \cite{2022ICRA-Jeong-MemoryBasedSO3}. By borrowing the idea of EEC, we also extend the geodesic distance $d\left({\vec R}_i, {\vec R}_j\right)$ to many turns to avoid the flipping problem. 

Our basic idea is to update the distance $d$ and the traverse direction ${\bar {\vec \psi}}$, which are used in Eq. (\ref{Eq_weightSumInRiemannian}), according to the past trajectory of exponential coordinates to express multiple turns and keep the axis of rotation. In this way, except for the instantaneous state $\log\left({\vec R}^\top_i {\vec R}_j\right)$, the past state knowledge is also used. This is the reason that we name the proposed algorithm by ``Memory-based Algorithm". 


\begin{algorithm}[b!]
  \caption{Update Strategy for Case 1 and Case 3} \label{algorithm1:Update1}
  \begin{algorithmic}[1]
        \Function {\tt Update1}{ } 
            \If {${\bar{\vec \psi}}^\top_p {\bar{\vec \psi}}\left(t_c\right) > e_\psi$} 
                \State $d = \frac{W_j \left(N_{turns}\pi + d\left({\vec R}_i, {\vec R}_j\right)\right)}{W_i + W_j}$;
                \State ${\vec R}_{ij} = {\vec R}_i \exp\left( d {\bar{\vec \psi}}\left(t_c\right) \right)$;
            \Else
                \If {$-{\bar{\vec \psi}}^\top_p {\bar{\vec \psi}}\left(t_c\right) > e_\psi$} \Comment{Flipping happens}
                    \If {$d\left({\vec R}_i, {\vec R}_j\right) > d_{th}$} \Comment{Case 1}
                        \State $N_{turns} =  N_{turns}+1$;
                    \Else \Comment{Case 3}
                        \State $N_{turns} =  N_{turns}-1$; 
                    \EndIf
                    \State $d = \frac{W_j \left( \left(N_{turns}+1\right)\pi - d\left({\vec R}_i, {\vec R}_j\right)\right)}{W_i + W_j} $;
                    \State ${\vec R}_{ij} = {\vec R}_i \exp\left( -d {\bar{\vec \psi}}\left(t_c\right) \right)$;
                    \State $\Psi \leftarrow \emptyset$; \Comment{historical data should be clean up after the flipping}
                \Else \Comment{${\bar{\vec \psi}}\left(t_c\right)$ is an outlier}
                    \State $d = \frac{W_j \left(N_{turns}\pi + d\left({\vec R}_i, {\vec R}_j\right)\right)}{W_i + W_j} $;
                    \State ${\vec R}_{ij} = {\vec R}_i \exp\left( d {\bar{\vec \psi}}_p \right)$; \Comment{historical data ${\bar{\vec \psi}}_p$ is used}
                \EndIf
            \EndIf
        \EndFunction
  \end{algorithmic}
\end{algorithm}

\begin{algorithm}[h!]
  \caption{Update Strategy for Case 2 and Case 4} \label{algorithm2:Update2}
  \begin{algorithmic}[1]
        \Function {\tt Update2}{ }
            \If {${\bar{\vec \psi}}^\top_p {\bar{\vec \psi}}\left(t_c\right) > e_\psi$}
                \State $d = \frac{W_j \left( \left(N_{turns}+1\right)\pi - d\left({\vec R}_i, {\vec R}_j\right)\right)}{W_i + W_j} $;
                \State ${\vec R}_{ij} = {\vec R}_i \exp\left( -d {\bar{\vec \psi}}\left(t_c\right) \right)$;
            \Else
                \If {$-{\bar{\vec \psi}}^\top_p {\bar{\vec \psi}}\left(t_c\right) > e_\psi$}
                    \If {$d\left({\vec R}_i, {\vec R}_j\right) > d_{th}$} \Comment{Case 2}
                        \State $N_{turns} =  N_{turns}-1$;
                    \Else \Comment{Case 4}
                        \State $N_{turns} =  N_{turns}+1$; 
                    \EndIf
                    \State $d = \frac{W_j \left(N_{turns}\pi + d\left({\vec R}_i, {\vec R}_j\right)\right)}{W_i + W_j} $;
                    \State ${\vec R}_{ij} = {\vec R}_i \exp\left( d {\bar{\vec \psi}}\left(t_c\right) \right)$;
                    \State $\Psi \leftarrow \emptyset$; 
                \Else \Comment{${\bar{\vec \psi}}\left(t_c\right)$ is an outlier}
                    \State $d = \frac{W_j \left( \left(N_{turns}+1\right)\pi - d\left({\vec R}_i, {\vec R}_j\right)\right)}{W_i + W_j} $;
                    \State ${\vec R}_{ij} = {\vec R}_i \exp\left( -d {\bar{\vec \psi}}_p \right)$; 
                \EndIf
            \EndIf
        \EndFunction
  \end{algorithmic}
\end{algorithm}

\begin{algorithm}[t!]
  \caption{Memory-based Weighted Rotation Average} \label{algorithm1:MemorybasedWeighted Average}
  \begin{algorithmic}[1]
        \State \textbf{Initialization:}
        \State $N_{turns} = 0$;
        \If {$d\left({\vec R}_i\left(t_0\right), {\vec R}_j\left(t_0\right)\right) == 0$}
            \State ${\bar{\vec \psi}}_d = {\vec 0}_{3 \times 1}$;
        \Else
            \State ${\bar{\vec \psi}}_d = \log\left( \left({\vec R}_i\left(t_0\right)\right)^\top {\vec R}_j\left(t_0\right)\right)/d\left({\vec R}_i\left(t_0\right), {\vec R}_j\left(t_0\right)\right)$;
        \EndIf
        \State $\Psi \leftarrow \emptyset$; 
        \State $d_{th} = 0.15$ radian;
        \State $e_\psi = \cos\left(\frac{50\pi}{180}\right)$;
        \State 
        
        \Function {\tt WeightedRotationAverage}{${\vec R}_i$, ${\vec R}_j$, $W_i$, $W_j$, $N_{turns}$, $\Psi$, $d_{th}$}
            \State ${\vec \psi} = \log\left({\vec R}^\top_i {\vec R}_j\right)$;
            \State $d\left({\vec R}_i, {\vec R}_j\right) = \left\|{\vec \psi}\right\|$;
            \If {$d\left({\vec R}_i, {\vec R}_j\right) == 0$}
                \State ${\bar{\vec \psi}}\left(t_c\right) = {\vec 0}_{3 \times 1}$;
            \Else
                \State ${\bar{\vec \psi}}\left(t_c\right) = {\vec \psi}/d\left({\vec R}_i, {\vec R}_j\right)$;
            \EndIf

            \State 
            \If { $size\left(\Psi\right) == 0$ } \Comment{$size\left(\Psi\right)$ gives the length of $\Psi$}
                \State ${\bar{\vec \psi}}_p = {\bar{\vec \psi}}\left(t_c\right)$;                
            \Else
                \State ${\bar{\vec \psi}}_p = {\text {avg}}\left({\bar {\vec \psi}} \left| {\bar {\vec \psi}} \in \Psi\right.\right)$;               
            \EndIf

            \State
            \If {$N_{turns}$ is even number} 
                \State Update1(); \Comment{Case 1 and 3}
            \Else \Comment{$N_{turns}$ is odd bumber}
                \State Update2(); \Comment{Case 2 and 4}
            \EndIf

            \State $\Psi \Leftarrow {\bar{\vec \psi}}\left(t_c\right)$; \Comment{Add ${\bar{\vec \psi}}\left(t_c\right)$ to $\Psi$}
        \EndFunction
  \end{algorithmic}
\end{algorithm}


As shown in Fig. \ref{Fig_MemorybasedWeightedAverage}, an integer variable, denoted by $N_{turns}$, is used to memory the turns of the historic trajectory of ${\vec R}_j$. $N_{turns}$ is initialized as 0 and would be increased/decreased every time when ${\vec R}_j$ crosses the distances of $N_{turns}\pi$. For the sake of convenience, the following notations are defined:
\begin{itemize}
    \item ${\bar {\vec \psi}}\left(t_c\right) := \frac{1}{d\left({\vec R}_i, {\vec R}_j\right)} \log\left({\vec R}^\top_i {\vec R}_j\right)$ is the current traverse direction at time $t_c$. Obviously, ${\bar {\vec \psi}}\left(t_c\right)$ is an unit vector.
    \item ${\bar {\vec \psi}}_d := {\bar {\vec \psi}}\left(t_0\right)$ gives the default positive traverse direction determined by the initial ${\bar {\vec \psi}}\left(t\right)$.
    \item $\Psi := \left\{{\bar {\vec \psi}}\left(t_{c-5}\right), \cdots, {\bar {\vec \psi}}\left(t_{c-1}\right)\right\}$ records the past traverse directions of ${\bar {\vec \psi}}\left(t\right)$. In this paper, at most 5 past data are used.
    \item ${\bar {\vec \psi}}_p := {\text {avg}}\left({\bar {\vec \psi}} \left| \, {\bar {\vec \psi}} \in \Psi \right. \right)$ gives the average direction of the historical data. ${\bar {\vec \psi}}_p$ would be normalized to an unit vector.
    \item $d_{th}$ is a threshold to distinguish the flipping happens at $d\left({\vec R}_i, {\vec R}_j\right) == 0$ or $d\left({\vec R}_i, {\vec R}_j\right) == \pi$. Our algorithm is not sensitive to the choice of $d_{th}$. $d_{th}$ can be any positive value between 0 and $\pi$. 
    \item $e_\psi$ is a threshold to evaluate whether the current traverse direction is aligned with history by using dot products.
\end{itemize}

According to ${\bar {\vec \psi}}\left(t_c\right)$, $d\left({\vec R}_i, {\vec R}_j\right)$, ${\bar {\vec \psi}}_p$, and $N_{turns}$, 4 cases and 2 update strategies for $d$ and ${\bar {\vec \psi}}$ are designed, as given in Fig. \ref{Fig_MemorybasedWeightedAverage}, {\bf Algorithm \ref{algorithm1:Update1}}, and {\bf Algorithm \ref{algorithm2:Update2}}.

{\bf Update Strategy 1: The turns $N_{turns}$ is an even number}

As shown in Fig. \ref{Fig_MemorybasedWeightedAverage}(a) and (c), when the turns $N_{turns}$ is an even number, the trajectory of ${\vec R}_j$ has traversed a length of $N_{turns}\pi$ with an additional distance $d\left({\vec R}_i, {\vec R}_j\right)$. Therefore, the default distance is given by:
\begin{align}
    \label{Eq_DefaultDistanceInEvenNumber}
    \begin{split}
        d = \frac{W_j \left(N_{turns}\pi + d\left({\vec R}_i, {\vec R}_j\right)\right)}{ W_i + W_j }.
    \end{split}
\end{align}

In this case, the ${\bar {\vec \psi}}\left(t_c\right)$ obtained by the logarithm mapping is aligned with the initial direction ${\bar {\vec \psi}}_d$. Therefore, the averaging result is obtained by:
\begin{align}
    \label{Eq_DefaultRijInEvenNumber}
    \begin{split}
        {\vec R}_{ij} = {\vec R}_i \exp\left( d {\bar {\vec \psi}}\left(t_c\right) \right).
    \end{split}
\end{align}

When $d\left({\vec R}_i, {\vec R}_j\right) > d_{th}$, ${\vec R}_j$ may traverse across the boundary of even turns at the points of $\pi/3\pi/5\pi/\cdots$ with positive direction, leading to the flipping of ${\bar {\vec \psi}}\left(t_c\right)$. Hence, we can monitor the dot product of ${\bar {\vec \psi}}\left(t_c\right)$ and the historical data ${\bar {\vec \psi}}_p$ to detect the flipping event. Considering that ${\vec R}_i$ and ${\vec R}_j$ are both time-varying, we set $50^\circ$ (i.e., $e_\psi = \cos\left(\frac{50\pi}{180}\right)$) as a threshold to evaluate whether the current traverse direction is aligned with history. Therefore, if ${\bar{\vec \psi}}^\top_p {\bar{\vec \psi}}\left(t_c\right) > e_\psi$ still holds, the default distance and averaging result are adopted, as given at lines 3-4 in {\bf Algorithm \ref{algorithm1:Update1}}. However, when $-{\bar{\vec \psi}}^\top_p {\bar{\vec \psi}}\left(t_c\right) > e_\psi$ is detected, it indicates that the flipping has happened. In this case (Case 1), $N_{turns}$ needs to be increased by 1 because ${\vec R}_j$ traverses pass the even turns with positive direction, as illustrated in Fig. \ref{Fig_MemorybasedWeightedAverage}(a) and line 8 in {\bf Algorithm \ref{algorithm1:Update1}}:
\begin{align}
    \label{Eq_TurnsIncreasedBy1}
    \begin{split}
        N_{turns} = N_{turns} + 1.
    \end{split}
\end{align}

Then the distance and the averaging result are given by (as shown at lines 12-13 in {\bf Algorithm \ref{algorithm1:Update1}}):
\begin{align}
    \label{Eq_FlippingUpdateInEvenNumber}
    \begin{split}
        &d = \frac{W_j \left( \left(N_{turns}+1\right)\pi - d\left({\vec R}_i, {\vec R}_j\right)\right)}{W_i + W_j}, \\
        &{\vec R}_{ij} = {\vec R}_i \exp\left( -d {\bar{\vec \psi}}\left(t_c\right) \right).
    \end{split}
\end{align}
Here $-{\bar{\vec \psi}}\left(t_c\right)$ is used as the traverse direction because ${\bar {\vec \psi}}\left(t_c\right)$ is opposite with the default positive direction ${\bar {\vec \psi}}_d$ for odd turns. In this way, the distance and the traverse direction are guaranteed to be smooth to avoid sudden jump.

In some extreme case (e.g., ${\bar{\vec \psi}}\left(t_c\right) = {\vec 0}_{3 \times 1}$), the current traverse direction ${\bar {\vec \psi}}\left(t_c\right)$ may be an outlier. In this case, the historical data ${\bar {\vec \psi}}_p$ would be used to obtain the averaging result, as shown at lines 16-17 in {\bf Algorithm \ref{algorithm1:Update1}}.

In contrast, When $d\left({\vec R}_i, {\vec R}_j\right) \leq d_{th}$, ${\vec R}_j$ may traverse across the boundary of even turns at the points of $0/2\pi/4\pi/\cdots$ with negative direction, as illustrated in Fig. \ref{Fig_MemorybasedWeightedAverage}(c). In this case (Case 3), $N_{turns}$ need to be decreased by 1, as given at line 10 in {\bf Algorithm \ref{algorithm1:Update1}}:
\begin{align}
    \label{Eq_TurnsDecreasedBy1}
    \begin{split}
        N_{turns} = N_{turns} - 1.
    \end{split}
\end{align}
Then the distance and averaging result are obtained according to Eq. (\ref{Eq_FlippingUpdateInEvenNumber}).

This update strategy is summarized in {\bf Algorithm \ref{algorithm1:Update1}}.

{\bf Update Strategy 2: The turns $N_{turns}$ is an odd number}

Similar to Update Strategy 1, when the turns $N_{turns}$ is an odd number, the update strategy is summarized in {\bf Algorithm \ref{algorithm2:Update2}}. The default distance and averaging result are given by Eq. (\ref{Eq_FlippingUpdateInEvenNumber}), as given at lines 3-4 in {\bf Algorithm \ref{algorithm2:Update2}}. When the flipping is detected, $N_{turns}$ need to be decreased/increased by 1, according to $d\left({\vec R}_i, {\vec R}_j\right) > d_{th}$ (Case 2) or $d\left({\vec R}_i, {\vec R}_j\right) \leq d_{th}$ (Case 4). Then the distance and averaging result are given by Eq. (\ref{Eq_DefaultDistanceInEvenNumber}) and (\ref{Eq_DefaultRijInEvenNumber}), as illustrated at lines 7-14 in {\bf Algorithm \ref{algorithm2:Update2}}. The situation that ${\bar {\vec \psi}}\left(t_c\right)$ is an outlier is handled at lines 16-17 in {\bf Algorithm \ref{algorithm2:Update2}}.

\vspace{10pt}
Finally, our proposed memory-based weighted rotation average algorithm is summarized in {\bf Algorithm \ref{algorithm1:MemorybasedWeighted Average}}.

\begin{theorem} \label{Theorem-continuity}
Our proposed memory-based algorithm can guarantee the continuity of the weighted averaging results, if and only if the trajectories ${\vec R}_i\left(t\right)$, ${\vec R}_j\left(t\right)$ and the weighted curves $W_i\left(t\right)$, $W_j\left(t\right)$ are all continuous.
\end{theorem}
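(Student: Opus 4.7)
\emph{Proof plan.} The statement is an ``if and only if'', and I would prove the two implications separately. The forward direction---that continuous ${\vec R}_i(t),{\vec R}_j(t),W_i(t),W_j(t)$ imply continuous ${\vec R}_{ij}(t)$---is where the real work lies; the reverse can be handled by explicit counterexamples.

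For the forward direction, I would first isolate the (at most countable) set of flipping times $\{t_c\}$ at which the detector $-{\bar{\vec\psi}}_p^\top {\bar{\vec\psi}}(t_c) > e_\psi$ triggers. Away from these times the output is produced by the composition
\begin{equation*}
({\vec R}_i,{\vec R}_j,W_i,W_j)\;\longmapsto\;{\vec R}_i \exp\!\bigl(d\,{\bar{\vec\psi}}\bigr),
\end{equation*}
in which $N_{\text{turns}}$ is locally constant and $d,{\bar{\vec\psi}}$ depend continuously on the inputs whenever $d({\vec R}_i,{\vec R}_j)\notin\{0,\pi\}$; continuity of the output on such intervals then follows from continuity of $\log$, $\exp$, and elementary arithmetic. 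The substantive step is to show that at each $t_c$ the left- and right-hand limits of ${\vec R}_{ij}(t)$ coincide. I would carry this out case by case over the four scenarios in Fig.~\ref{Fig_MemorybasedWeightedAverage}, showing that each branch of Algorithms~\ref{algorithm1:Update1} and~\ref{algorithm2:Update2} is precisely the one that compensates the discontinuity of $\log({\vec R}_i^\top{\vec R}_j)$ at the pole: the update of $N_{\text{turns}}$ absorbs the $\pi$ jump in the geodesic distance, the switch between the formulas $d\propto N_{\text{turns}}\pi+d({\vec R}_i,{\vec R}_j)$ and $d\propto (N_{\text{turns}}+1)\pi - d({\vec R}_i,{\vec R}_j)$ absorbs the ``bounce'' of $d({\vec R}_i,{\vec R}_j)$ off $0$ or $\pi$, and the sign flip of ${\bar{\vec\psi}}$ absorbs the axis reversal, so that the product $d\,{\bar{\vec\psi}}$ varies continuously through the flipping event.

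The main obstacle, in my view, is bookkeeping rather than conceptual: I must check for all four cases that the detector fires at exactly the right moment, that the algebra of the two distance formulas matches across $t_c$, and that the memory buffer $\Psi$ can be safely cleared (line~14 of Algorithms~\ref{algorithm1:Update1}--\ref{algorithm2:Update2}) without destroying continuity. The last point is where the continuity hypothesis on ${\vec R}_i,{\vec R}_j$ is essential---it guarantees that ${\bar{\vec\psi}}(t_c^+)$ exists and equals $-{\bar{\vec\psi}}(t_c^-)$ in the limit, so the re-initialized ${\bar{\vec\psi}}_p$ on the right of $t_c$ remains compatible with the left limit; continuity of $W_i,W_j$ enters separately to force the ratio $W_j/(W_i+W_j)$ to match across the flip. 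For the reverse direction, I would exhibit simple counterexamples: a jump in $W_j$ at an interior time (with ${\vec R}_i\neq{\vec R}_j$ held fixed) directly produces a jump in $d$ and hence in ${\vec R}_{ij}$, and analogous constructions handle discontinuities in $W_i$, ${\vec R}_i$, or ${\vec R}_j$ that are not pole-crossings. These show that continuity of every one of the four inputs is indeed necessary.
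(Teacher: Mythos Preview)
Your plan is sound and, in fact, considerably more careful than the paper's own proof. The paper handles only the ``if'' direction and does so via a short $\varepsilon$--$\delta$ argument: it writes
\[
d\bigl({\vec R}_{ij}(t_k),{\vec R}_{ij}(t_{k+1})\bigr)
\;\le\; \bigl\|d_{t_{k+1}}{\bar{\vec\psi}}_{t_{k+1}}-d_{t_k}{\bar{\vec\psi}}_{t_k}\bigr\| + d\bigl({\vec R}_i(t_k),{\vec R}_i(t_{k+1})\bigr),
\]
and then simply \emph{asserts} that the memory mechanism makes $t\mapsto d(t){\bar{\vec\psi}}(t)$ continuous, without performing the four-case verification at the flipping instants that you outline. (The intermediate step in the paper also treats $\log$ of a product of rotations as a sum of logs, which is not literally valid; your composition-of-continuous-maps argument away from flips avoids this issue.) So your decomposition into ``away from flips'' versus ``at flips'', with the algebra check that the $N_{\text{turns}}$ update, the distance-formula switch, and the sign flip of ${\bar{\vec\psi}}$ jointly cancel the discontinuity of $\log({\vec R}_i^\top{\vec R}_j)$, is exactly the content that the paper's proof sweeps under the rug.

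One caution on the reverse direction: your counterexamples establish that a discontinuity in any single input \emph{can} force a discontinuity in ${\vec R}_{ij}$, but the ``only if'' as literally stated would require that it \emph{always} does, and that is false in degenerate configurations (e.g.\ $W_j\equiv 0$ makes ${\vec R}_{ij}\equiv{\vec R}_i$ regardless of ${\vec R}_j$, and ${\vec R}_i\equiv{\vec R}_j$ makes the weights irrelevant). The paper does not address the ``only if'' at all, so you are already going beyond it; just be explicit that your counterexamples demonstrate generic necessity rather than unconditional necessity, or add a mild non-degeneracy hypothesis.
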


\begin{proof}
Denote $t_k$ and $t_{k+1}$ as two time moments, then the weighted averaging results ${\vec R}_{ij}\left(t_k\right)$ and ${\vec R}_{ij}\left(t_{k+1}\right)$ are given as:
\begin{align}
    \label{Eq_proof_Rij}
    \begin{split}
        {\vec R}_{ij}\left(t_k\right) &= {\vec R}_i \left(t_k\right) \exp\left(d_{t_k} {\bar {\vec \psi}}_{t_k} \right),\\
        {\vec R}_{ij}\left(t_{k+1}\right) &= {\vec R}_i\left(t_{k+1}\right) \exp\left(d_{t_{k+1}} {\bar {\vec \psi}}_{t_{k+1}} \right),
    \end{split}
\end{align}
where ${\vec R}_i \left(t_k\right)$ and ${\vec R}_{i}\left(t_{k+1}\right)$ are the points of the trajectory ${\vec R}_i\left(t\right)$ at time $t_k$ and $t_{k+1}$, respectively. $d_{t_k}$, ${\bar {\vec \psi}}_{t_k}$ and $d_{t_{k+1}}$, ${\bar {\vec \psi}}_{t_{k+1}}$ are the distances and traverse directions generated by our proposed memory-based algorithm according to ${\vec R}_i\left(t_k\right), {\vec R}_j\left(t_k\right)$ and ${\vec R}_i\left(t_{k+1}\right), {\vec R}_j\left(t_{k+1}\right)$, respectively.

Then the distance between ${\vec R}_{ij}\left(t_k\right)$ and ${\vec R}_{ij}\left(t_{k+1}\right)$ is given by:
\begin{align}
    \label{Eq_proof_distance}
    \begin{split}
        &d\left({\vec R}_{ij}\left(t_k\right), {\vec R}_{ij}\left(t_{k+1}\right)\right) = \left\| \log\left( {\vec R}^\top_{ij}\left(t_k\right) {\vec R}_{ij}\left(t_{k+1}\right)  \right)\right\| \\
        &= \left\| \log\left( \exp^\top\left(d_{t_k} {\bar {\vec \psi}}_{t_k} \right) {\vec R}^\top_i\left(t_k\right) {\vec R}_{ij}\left(t_{k+1}\right) \right)\right\| \\
        &= \left\| -d_{t_k} {\bar {\vec \psi}}_{t_k} + \log\left( {\vec R}^\top_i\left(t_k\right) {\vec R}_i\left(t_{k+1}\right)\right) + d_{t_{k+1}} {\bar {\vec \psi}}_{t_{k+1}} \right\| \\
        &= \left\| d_{t_{k+1}} {\bar {\vec \psi}}_{t_{k+1}}-d_{t_k} {\bar {\vec \psi}}_{t_k} + \log\left( {\vec R}^\top_i\left(t_k\right) {\vec R}_i\left(t_{k+1}\right)\right) \right\| \\
    \end{split}
\end{align}

Since ${\vec R}_i\left(t\right)$ is continuous, for any given positive real number $\varepsilon_{\vec R} > 0$, however small, we can find a positive real number $\delta_{\vec R} > 0$ such that for $\left|t_{k+1} - t_k\right| < \delta_{\vec R}$, we have $d\left( {\vec R}_i\left(t_k\right), {\vec R}_i\left(t_{k+1}\right) \right) < \varepsilon_{\vec R}$. Since the trajectories ${\vec R}_i\left(t\right)$, ${\vec R}_j\left(t\right)$ and the weighted curves $W_i\left(t\right)$, $W_j\left(t\right)$ are all continuous, our weighted average algorithm by memorying the turns $N_{turns}$, the distance $d\left(t\right)$ and the traverse direction ${\bar {\vec \psi}}\left(t\right)$ are also continuous. Therefore, for any given positive real number $\varepsilon - \varepsilon_{\vec R} > 0$, however small, we can find a positive real number $\delta_d > 0$ such that for $\left|t_{k+1} - t_k\right| < \delta_d$, we have $\left\| d_{t_{k+1}} {\bar {\vec \psi}}_{t_{k+1}}-d_{t_k} {\bar {\vec \psi}}_{t_k}\right\| < \varepsilon - \varepsilon_{\vec R}$.

Hence for any given positive number $\varepsilon > 0$, there exists $\sigma = \min\left(\sigma_{\vec R}, \sigma_d\right)$ such that for $\left|t_{k+1} - t_k\right| < \delta$, we have
\begin{align}
    \label{Eq_proof_distance}
    \begin{split}
        &d\left({\vec R}_{ij}\left(t_k\right), {\vec R}_{ij}\left(t_{k+1}\right)\right) \\
        &\leq \left\| d_{t_{k+1}} {\bar {\vec \psi}}_{t_{k+1}}-d_{t_k} {\bar {\vec \psi}}_{t_k} \right\| + d\left( {\vec R}_i\left(t_k\right), {\vec R}_i\left(t_{k+1}\right)\right) \\
        &\leq \varepsilon - \varepsilon_{\vec R} + \varepsilon_{\vec R} = \varepsilon.
    \end{split}
\end{align}

Therefore, the continuity of the weighted averaging results is proofed.

\end{proof}
}

\section{Simulation Evaluations}
\label{Section_Simulations}

We test the performance of our approach via the following evaluations: 1) Orientation adaptation towards arbitrary desired points in terms of orientations and angular velocities (Section \ref{subsection_evaluation_a}); 2) Orientation adaptations with angular acceleration constraints (Section \ref{subsection_evaluation_b}); 3) Orientation adaptation {\color{black} with single (Section \ref{subsection_evaluation_c}) and multiple IOVPs (Section \ref{subsection_evaluation_D})}. The first and second evaluations are used to demonstrate that our proposed method inherits the KMP's capability of adapting with any via-points and angular acceleration constraints, whose performance is equivalent to the previous work shown in \cite{2020TRO-KMP-Orientation}. {\color{black}The third and fourth evaluations demonstrate the extra benefit of our method that we can incorporate multiple IOVPs easily}. All demonstrations are time-driven trajectories.
The Gaussian kernel $k\left(t_i, t_j\right) = \exp\left(-l \left(t_i - t_j\right)^2\right)$ with $l = 0.01$ and the regularization factor $\lambda = 1$ are used.

\subsection{Evaluations on Adaptation with Via-Points}
\label{subsection_evaluation_a}

\begin{figure}[b]
    \centering
    \includegraphics[width=0.95\hsize]{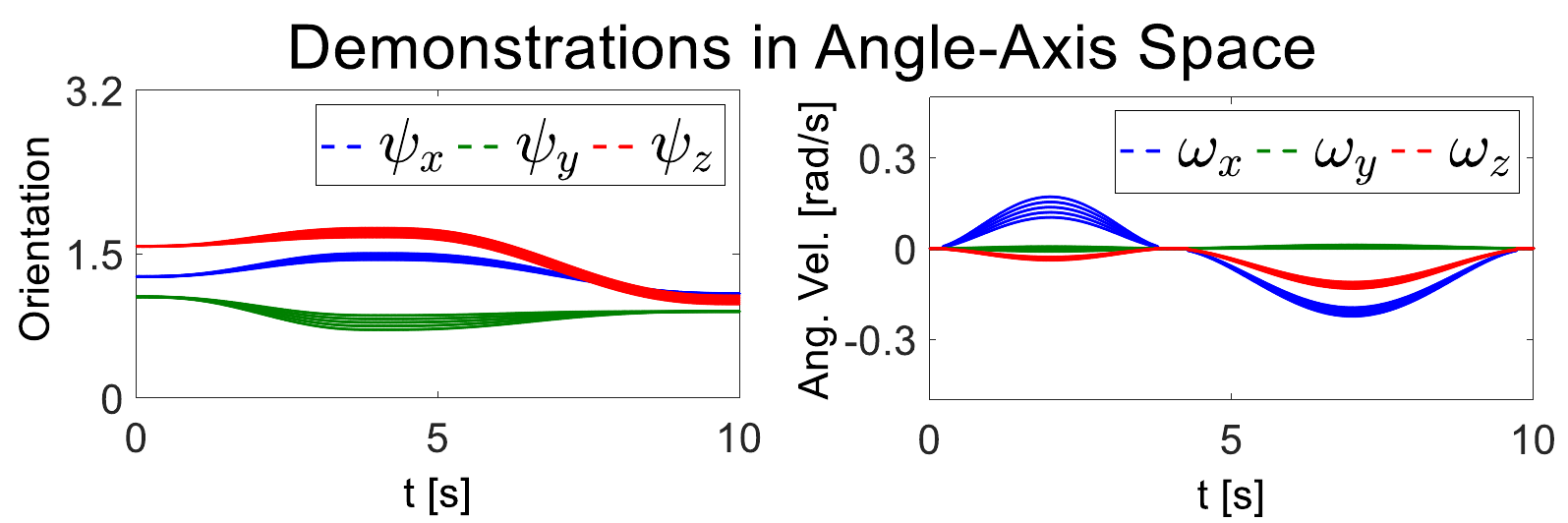}
    \caption{The simulated orientation trajectories in Angle-Axis Space and their corresponding angular velocities, expressed in the world frame.}
    \label{Fig_s1_Demonstrations}
\end{figure}

In this subsection, we consider the orientation adaptation towards multiple desired via-points. We collected five simulated orientation trajectories, all of them have 10 seconds duration, as depicted in Fig. \ref{Fig_s1_Demonstrations}. For better comparison, the simulated demonstrations are the same as the ones adopted in \cite{2020TRO-KMP-Orientation}.

To alleviate possible distortion, the auxiliary frame $\vec{R}_a$ is set as the initial value of the simulated trajectories. Then all demonstrations are transformed into this local frame and projected into the Angle-Axis Space. The results of the probabilistic modeling are depicted in Fig. \ref{Fig_s2_GMM+GMR}. GMM and GMR methods (\cite{2016IntellSerRob-Tutorial-TPGMM, 1996JAIR-GMR}) are used to extract the reference trajectory $D_r$.  Here five Gaussian components are used to estimate the joint probability distribution of the demonstrations. As shown in Fig. \ref{Fig_s2_GMM+GMR}, the extracted reference trajectory is able to encapsulate the distribution of the demonstrations.

\begin{figure}[t]
    \centering
    \includegraphics[width = \hsize]{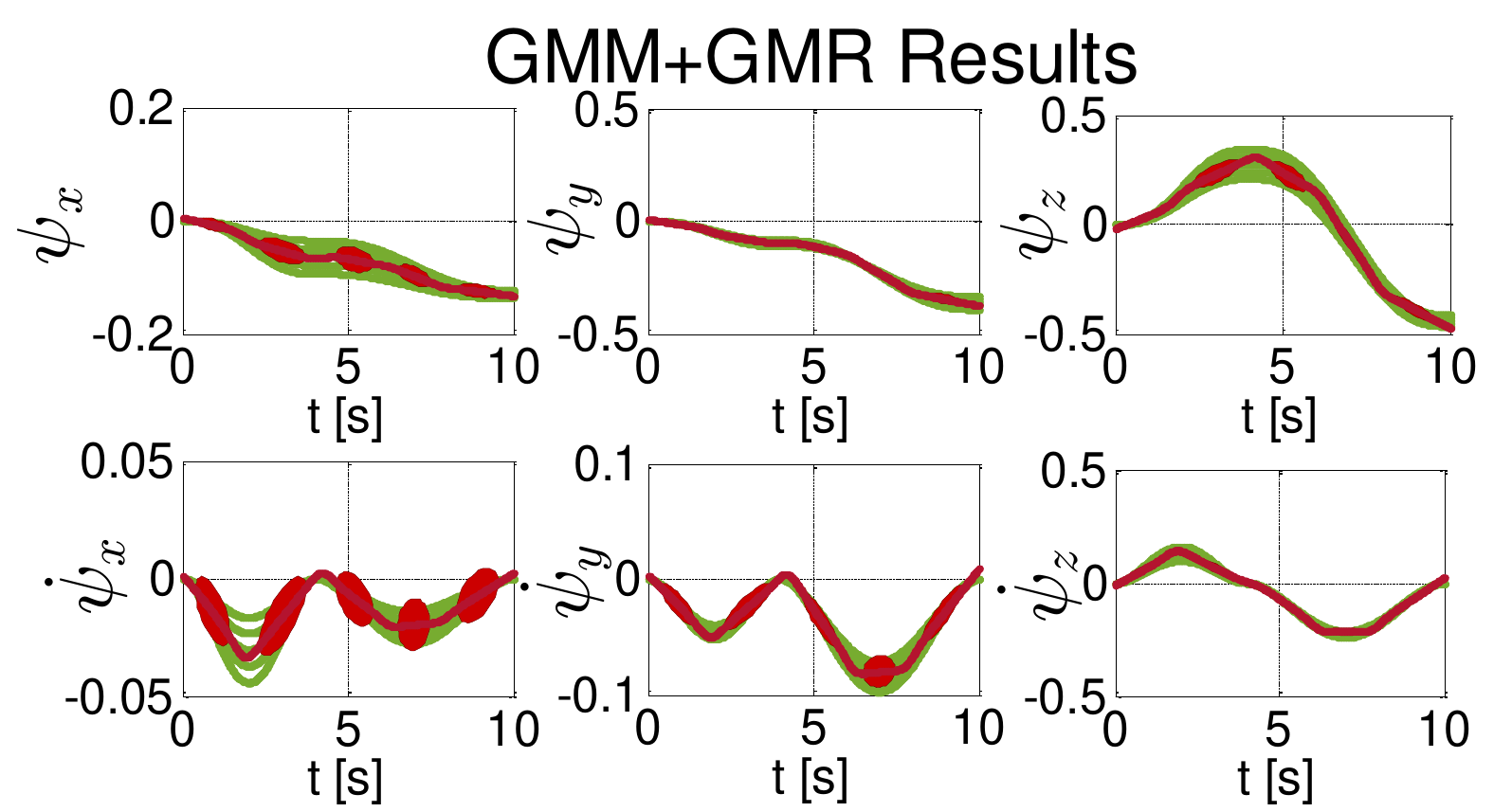}
    \caption{The probabilistic modeling of the demonstrations. Green curves represent the simulated orientation trajectories and their derivatives (not the angular velocities). Red ellipses are the 5 Gaussian components used to encapsulate the probabilistic distribution of demonstrations by using GMM. Red curves are the means ($\hat{\vec \mu}_n$) of the extracted reference trajectory $D_r$ (the extracted covariances $\hat{\vec \Sigma}_n$ are ignored for better readability). Please note that all the orientation trajectories and their derivatives have been transformed into the auxiliary frame and projected into the Angle-Axis Space. Hence the shapes of the demonstrations are different to the ones shown in Fig. \ref{Fig_s1_Demonstrations}. But the final reproduced trajectories can be transferred into the world frame by Eq. (\ref{Eq_OriRecover}).}
    \label{Fig_s2_GMM+GMR}
\end{figure}



In order to validate the capability of adapting orientation trajectories towards new targets with desired angular velocities, two adaptation tasks are carried out. In each adaptation task, three desired via-points are given as:
\begin{align}
    \label{Eq_via-point-NormalViaPoints}
    \begin{split}
        \tilde{t}_1 = 0.0, \tilde{\vec \psi}_1 &= [1.2614, 1.0512, 1.5767]^\top, \\ \tilde{\vec \omega}_1 &= [0.0, 0.0, 0.0]^\top;\\
        \tilde{t}_2 = 4.0, \tilde{\vec \psi}_2 &= [1.5456, 1.0304, 2.0608]^\top, \\ \tilde{\vec \omega}_2 &= [0.1, 0.1, 0.0]^\top; \\
        \tilde{t}_3 = 10.0, \tilde{\vec \psi}_3 &= [0.9137, 1.3705, 0.9137]^\top,\\ \tilde{\vec \omega}_3 &= [0.0, 0.3, 0.3]^\top,
    \end{split}
\end{align}
and
\begin{align}
    \label{Eq_via-point-NormalViaPoints}
    \begin{split}
        \tilde{t}_1 = 0.0, \tilde{\vec \psi}_1 &= [1.2614, 1.0512, 1.5767]^\top, \\ \tilde{\vec \omega}_1 &= [0.0, 0.0, 0.0]^\top;\\
        \tilde{t}_2 = 6.0, \tilde{\vec \psi}_2 &= [0.7028, 1.1713, 0.4685]^\top, \\ \tilde{\vec \omega}_2 &= [0.1, 0.2, 0.0]^\top; \\
        \tilde{t}_3 = 10.0, \tilde{\vec \psi}_3 &= [0.9137, 1.3705, 0.9137]^\top,\\ \tilde{\vec \omega}_3 &= [0.0, 0.3, 0.3]^\top,
    \end{split}
\end{align}
respectively.

The covariance $\tilde{\vec \Sigma}_h$ for each via-point is designed as: $\tilde{\vec \Sigma}_h = \text{blockdiag}\left(\tilde{\vec \Sigma}_{rh}, \tilde{\vec \Sigma}_{vh}\right)$, where $h = 1, 2, 3$, $\tilde{\vec \Sigma}_{rh}$ and $\tilde{\vec \Sigma}_{vh}$ represent the covariances for orientation and angular velocity, respectively. In this subsection, $\tilde{\vec \Sigma}_{rh}$ and $\tilde{\vec \Sigma}_{vh}$ are both set as $\text{diag}\left(10^{-10}, 10^{-10}, 10^{-10}\right)$, i.e., $10^{-10}*I_3$.

The adaptation results are given in Fig. \ref{Fig_s3_ViaPoints}. We can see that our approach modulates orientations and angular velocities simultaneously to go through all desired points. Moreover, the shape of demonstrations including orientations (shown in Angle-Axis Space) and angular velocity are kept in the adaptation tasks. 

\begin{figure}[!t]
    \centering
    \subfigure[]{\includegraphics[width = \hsize]{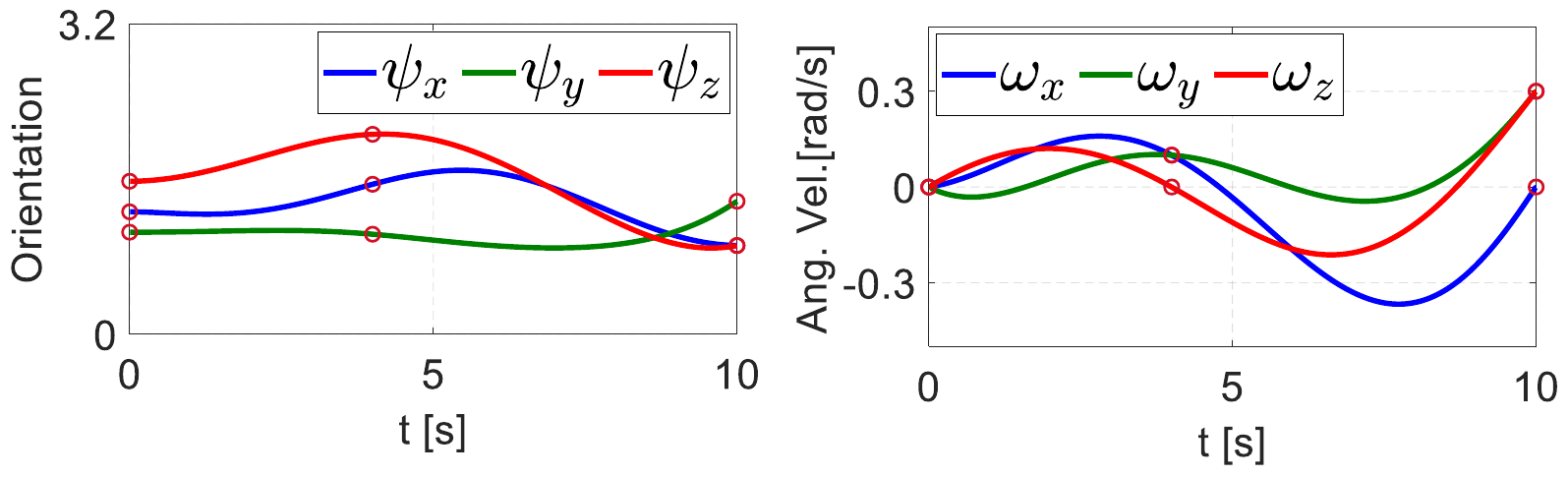}}
    \subfigure[]{\includegraphics[width = \hsize]{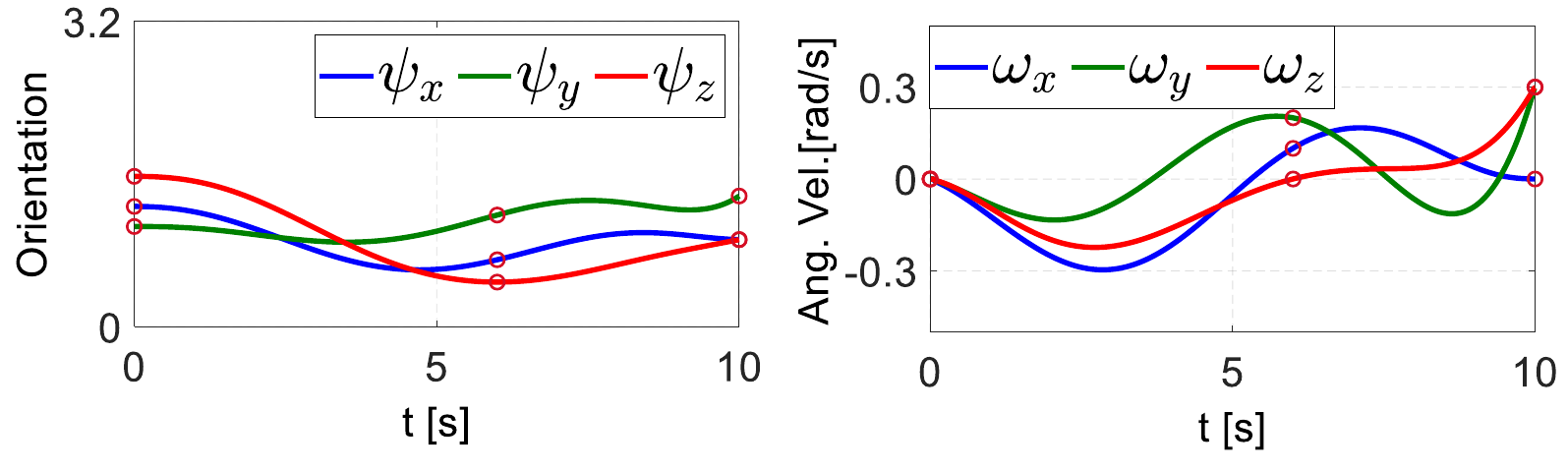}}
    \caption{Adaptations of orientations and angular-velocity profiles with various constraints of desired points (depicted by circles). (a) the first group of via-points. (b) the second group of via-points.}
    \label{Fig_s3_ViaPoints}
\end{figure}

\subsection{Evaluation on Angular Acceleration Constraints}
\label{subsection_evaluation_b}

In this subsection, we aim to adapt the orientation profile towards desired points while considering the angular acceleration constraints. We use the same demonstrations as shown in Fig. \ref{Fig_s1_Demonstrations} and the same auxiliary frame $\vec{R}_a$. The desired points are randomly picked as: 
\begin{align}
    \label{Eq_via-point-AccCost}
    \begin{split}
        \tilde{t}_1 = 0.0, \tilde{\vec \psi}_1 &= [1.2614, 1.0512, 1.5767]^\top, \\ \tilde{\vec \omega}_1 &= [0.0, 0.0, 0.0]^\top;\\
        \tilde{t}_2 = 5.0, \tilde{\vec \psi}_2 &= [1.7639, 0.7560, 2.0159]^\top, \\ \tilde{\vec \omega}_2 &= [0.1, 0.0, 0.0]^\top; \\
        \tilde{t}_3 = 10.0, \tilde{\vec \psi}_3 &= [0.7935, 1.3224, 0.0000]^\top,\\ \tilde{\vec \omega}_3 &= [-0.1, 0.0, 0.0]^\top
    \end{split}
\end{align}
The covariance for each via-point is designed as: $\tilde{\vec \Sigma}_h = \text{blockdiag}\left(\tilde{\vec \Sigma}_{rh}, \tilde{\vec \Sigma}_{vh}, \tilde{\vec \Sigma}_{ah}\right)$, where $\tilde{\vec \Sigma}_{rh} = \text{diag}\left(10^{-10}, 10^{-10}, 10^{-10}\right)$ represents the selected covariance for orientation, $\tilde{\vec \Sigma}_{vh} = \text{diag}\left(10^{3}, 10^{3}, 10^{3}\right)$ represents the selected covariance for angular velocity, and $\tilde{\vec \Sigma}_{ah} = \frac{1}{\lambda_a}I_3$ is a user-specified weight matrix to consider the angular acceleration minimization. The precisions on angular velocities are relaxed because they would be regulated to consider the angular acceleration cost. To better illustrate the impact of considering the angular acceleration constraints, 5 different $\lambda_a$ ($10, 10^2, 10^3, 10^4, 10^5$, respectively) are chosen to compare the smoothness of the reproduced trajectories.


\begin{figure}[!t]
    \centering
    \includegraphics[width = \hsize]{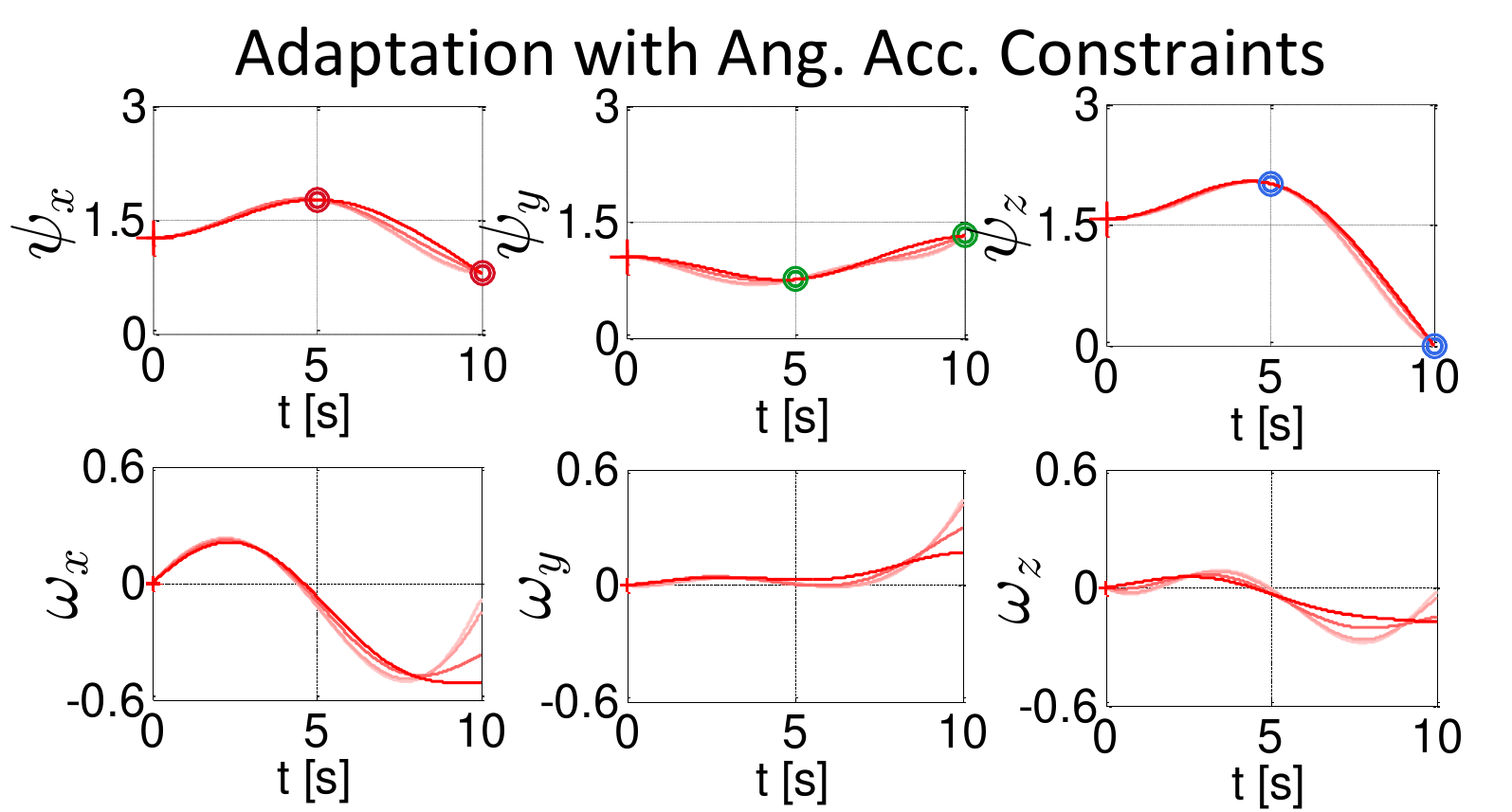}
    \caption{Adapting orientations while considering angular acceleration constraints. Circles plots the desired points. `+' marks the initial points of trajectories. The curves evolving from light red to dark red correspond to the increasing direction of $\lambda_a$.}
    \label{Fig_s5_PsiVel_Acc}
\end{figure}

\begin{figure}[!t]
    \centering
    \includegraphics[width = \hsize]{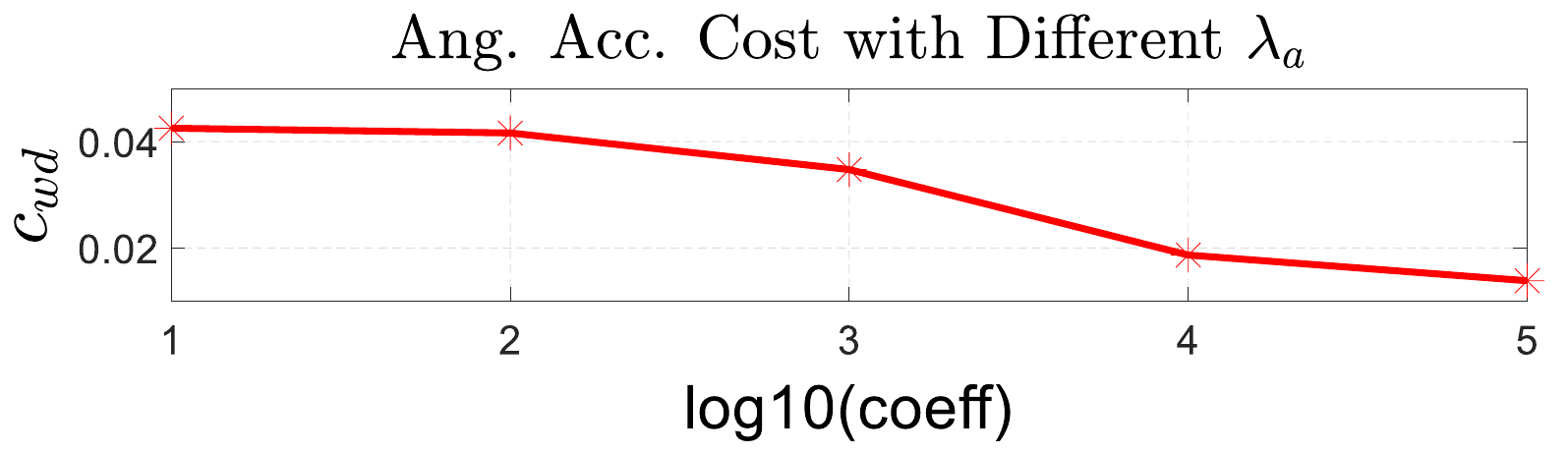}
    \caption{The angular acceleration costs under different $\lambda_a$. The abscissa is shown in log-scale, i.e., $\log_{10}\left(\lambda_a\right)$.}
    \label{Fig_s8_jerkCost}
\end{figure}

Figure \ref{Fig_s5_PsiVel_Acc} plots the adapted orientation trajectories as well as their corresponding angular velocities. The change of color from light to dark indicates the update direction as the coefficient $\lambda_a$ increases.
We can see in Fig. \ref{Fig_s5_PsiVel_Acc} that the adapted orientation trajectories can pass through desired points precisely. Meanwhile, the angular velocities become smoother when $\lambda_a$ is larger. The corresponding angular acceleration costs are reported in Fig. \ref{Fig_s8_jerkCost}, where the acceleration cost is defined as
\begin{align}
    \label{Eq_AngCost}
    \begin{split}
        c_{wd} = \frac{1}{N}\sum^{N}_{n = 1} {\left\|\dot{\vec \omega} \left(t_n\right) \right\|^2},
    \end{split}
\end{align}
The decreasing trend of $c_{wd}$ in Fig.~\ref{Fig_s8_jerkCost} coincides with our expectation as the larger the $\lambda_a$ is, the larger the impact of acceleration minimization is.

\subsection{Evaluation on Adaptation with Single IOVP}
\label{subsection_evaluation_c}

In this subsection, we aim to validate the capability and benefits in incorporating the IOVP, which is inapplicable for previous methods. The same demonstrations, as depicted in Fig. \ref{Fig_s1_Demonstrations}, are employed. The desired points, which are randomly picked,  are given as: 
\begin{align}
    \label{Eq_via-point-SingleIOVP}
    \begin{split}
    \tilde{t}_0 = 0.0, \tilde{\vec \psi}_0 &= [1.2614, 1.0512, 1.5767]^\top, \\ \tilde{\vec \omega}_0 &= [0.0, 0.0, 0.0]^\top; \\
    \tilde{t}_1 = 5.0, \tilde{\vec \psi}_1 &= [0.7028, 1.1713, 0.4685]^\top, \\ \tilde{\vec \omega}_1 &= \vec{R}^\top_{d1}[0.0, 0.0, 0.3]^\top;\\
    \tilde{t}_2 = 10.0, \tilde{\vec \psi}_2 &= [0.9137, 1.3705, 0.9137]^\top, \\ \tilde{\vec \omega}_2 &= [0.0, 0.3, 0.3]^\top. 
    \end{split}
\end{align}
Similar to Section \ref{subsection_evaluation_b}, the angular acceleration constraints are also taken into account, with 5 different $\lambda_a$. In order to incorporate the IOVP, $\vec{R}_{d1}$ is set as the auxiliary frame.

\begin{figure}[!t]
    \centering
    \includegraphics[width = 0.98\hsize]{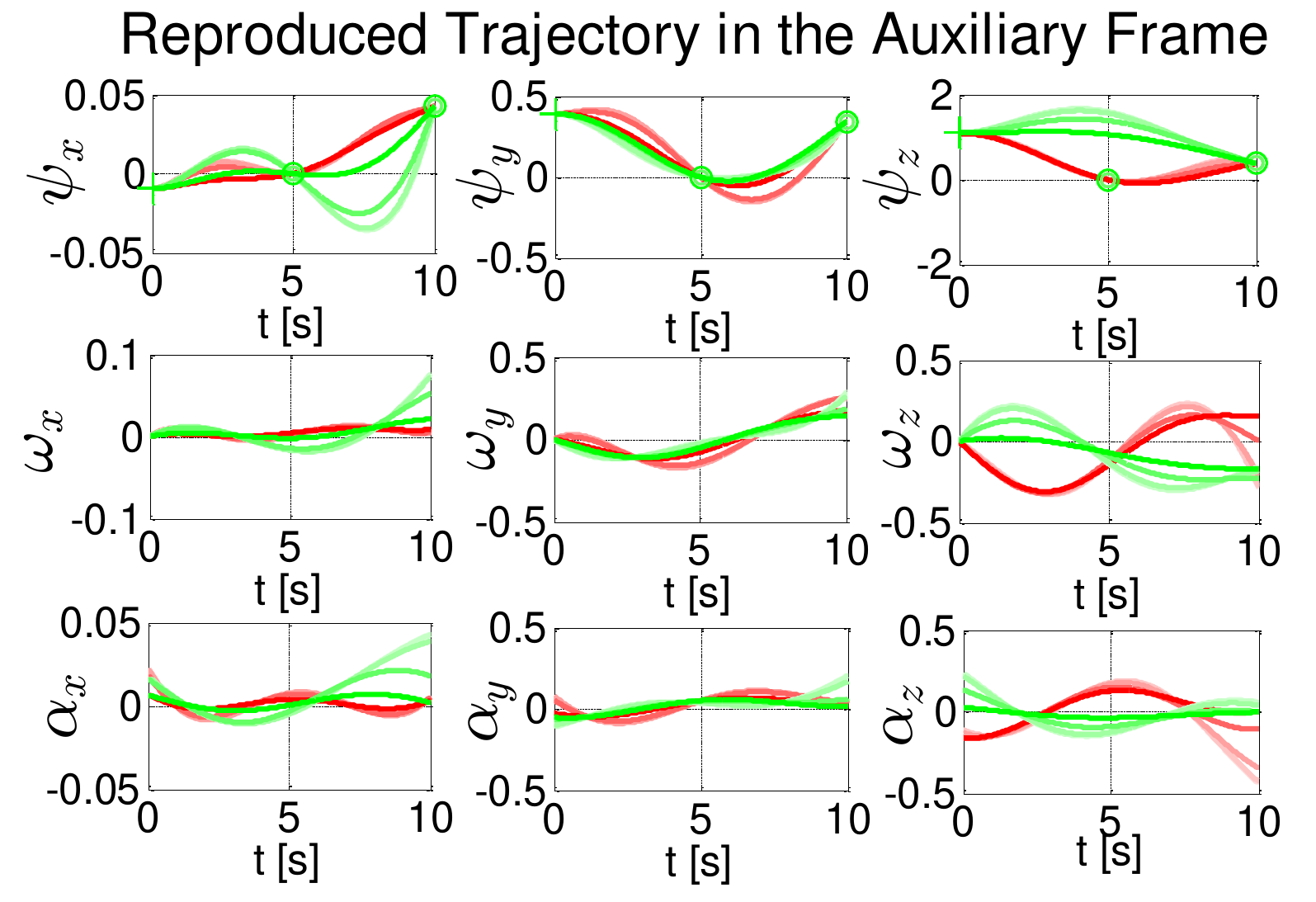}
    \caption{The reproduced trajectories expressed in the auxiliary frame, which have been transformed into the Angle-Axis Space. Here the red curves is the results without incorporating the IOVP. The green curves is the results with incorporating IOVP. The color changes from light to dark as $\lambda_a$ increases.}
    \label{Fig_s9_IOCPreAux}
\end{figure}

\begin{figure}[!t]
    \centering
    \includegraphics[width = 0.65\hsize]{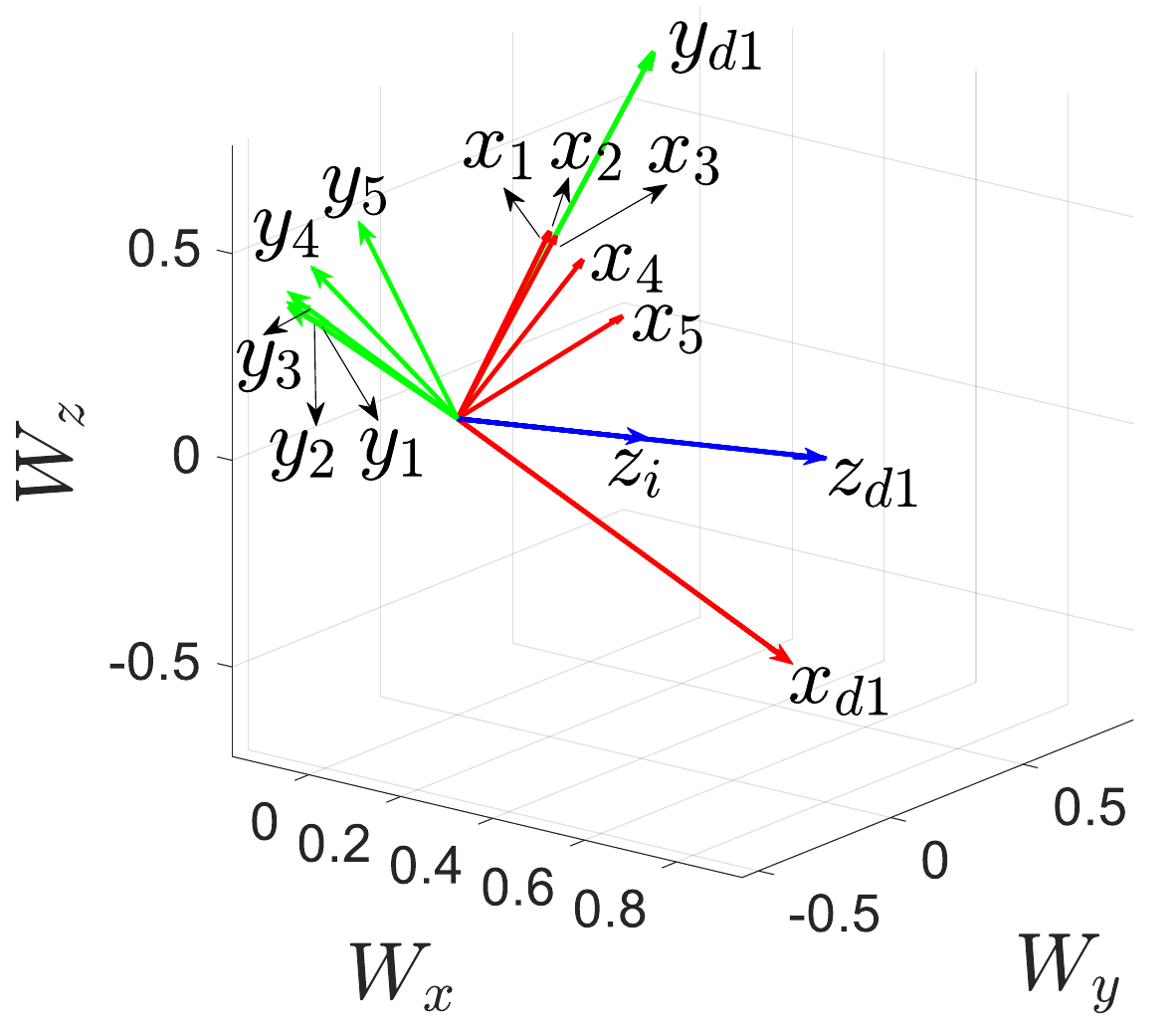}
    \caption{The desired orientation $R_{d1}$ ($\{x_{d1} y_{d1} z_{d1}\}$) and the reproduced orientations ($\{x_i y_i z_i\}, \, i = 1, 2, \cdots, 5$) at $\tilde{t}_1$ with 5 different $\lambda_a$. All $z_i$ share the same $z$ direction with $z_{d1}$, which means that the strict constraints in $x$ and $y$ are guaranteed. While the rotation around the $z$ direction is released, indicated by the different $x_i$ and $y_i$.}
    \label{Fig_s11_frames_IOVP}
\end{figure}

\begin{figure}[!t]
    \centering
    \includegraphics[width = \hsize]{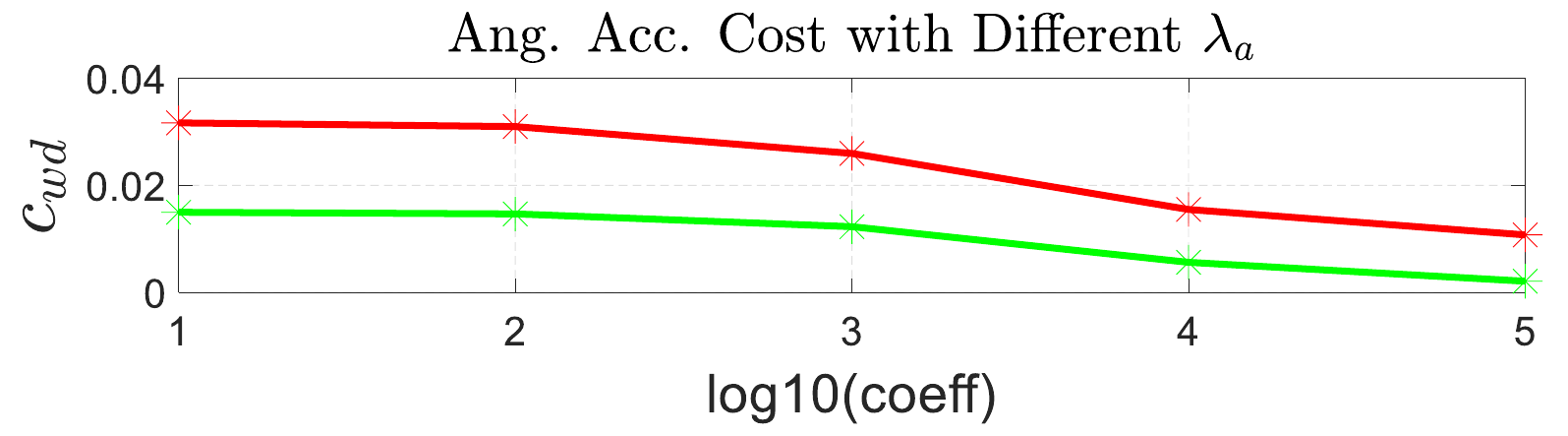}
    \caption{The angular acceleration costs with (green) and without (red) considering IOVP. By incorporating IOVP, smaller angular acceleration costs are achieved.}
    \label{Fig_s10_jerkCost_IOVP}
\end{figure}

To better demonstrate the benefits by considering the IOVP, two adaptation tasks are carried out. In the first one, $\vec{R}_{d1}$ is regarded as a normal via-point. Therefore, all the covariances $\tilde{\vec \Sigma}_k$ ($k = 0, 1, 2$) are set as the same as the ones adopted in Section \ref{subsection_evaluation_b}. In the second one, $\vec{R}_{d1}$ is regarded as an IOVP. Therefore, $\tilde{\vec \Sigma}_{r1}$ is adjusted as $\tilde{\vec \Sigma}_{r1} = \text{diag}\left(10^{-10}, 10^{-10}, 10^{3}\right)$ while the other parts keep the same.

The reproduced trajectories expressed in the auxiliary frame are provided in Fig. \ref{Fig_s9_IOCPreAux}. As shown in the first row of Fig. \ref{Fig_s9_IOCPreAux}, the red curves pass through all the desired circles since $\vec{R}_{d1}$ is regarded as a normal via-point. In contrast, the green curves in $x$ and $y$ directions pass through the desired circles since a high precision is set at $x$ and $y$ directions, while the constraints on the $z$ direction is released.

To better validate the effectiveness, the orientation $\vec{R}_{d1}$ and the reproduced orientations at $\tilde{t}_1$ with 5 different $\lambda_a$ are plotted in Fig. \ref{Fig_s11_frames_IOVP}. The $z$ directions of all the reproduced orientations remains the same with $z_{d1}$, proving that the precision on the $x$ and $y$ directions are high enough. However, the rotation around the $z$ direction is free. The comparisons on angular acceleration costs between the two tasks are provided in Fig. \ref{Fig_s10_jerkCost_IOVP}, showing that we can achieve smaller angular acceleration costs by incorporating the IOVP. These results validate that it gives us more flexibility by releasing constraints on the unnecessary direction. By incorporating the IOVP, we can achieve extra benefits,  e.g., achieving smaller angular acceleration costs. It is also possible to find a feasible solution when $\vec{R}_{d1}$ is unreachable.

{\color{black}
\subsection{Evaluation on Adaptation with Multiple IOVPs}
\label{subsection_evaluation_D}

In this subsection, we aim to validate the capability of our proposed approaches in incorporating multiple IOVPs. In addition, we would like to validate the smoothness of the final reproduced trajectory by using our proposed weighted rotation average mechanism. The same demonstrations, as depicted in Fig. \ref{Fig_s1_Demonstrations}, are employed. 

Four desired via-points are given as follows:  
\begin{align}
    \label{Eq_via-point-MultiIOVPs}
    \begin{split}
        \tilde{t}_0 = 0.0, \tilde{\vec \psi}_0 &= [1.2614, 1.0512, 1.5767]^\top, \\ \tilde{\vec \omega}_0 &= [0.0, 0.0, 0.0]^\top;\\
        \tilde{t}_1 = 4.0, \tilde{\vec \psi}_1 &= [0.7028, 1.1713, 0.4685]^\top, \\ \tilde{\vec \omega}_1 &= [0.0069, 0.2103, 0.2138]^\top; \\
        \tilde{t}_2 = 7.0, \tilde{\vec \psi}_2 &= [-0.5236, 0, 0]^\top, \\ \tilde{\vec \omega}_2 &= [0.0, 0.1500, 0.2598]^\top; \\
        \tilde{t}_3 = 10.0, \tilde{\vec \psi}_3 &= [0.0, 2.2214, -2.2214]^\top, \\ \tilde{\vec \omega}_3 &= [0.0, 0.0, 0.0]^\top
    \end{split}
\end{align}
For the first and the fourth via-points, the desired velocity is set as zero such the robot can be started/stopped more smooth. In addition, to validate the generalization to big variation, the three via-point $\tilde{\vec \psi}_k$ for $k = 1, 2, 3$ are selected to be far from the demonstration data. More specifically, $\tilde{\vec \psi}_3 = [0.0, 2.2214, -2.2214]^\top$ is selected to be on the boundary of ${\vec B}_\pi$, reflected by the fact that its norm equals to $\pi$. By using these given via-points, the performance of our proposed imitation learning framework and our memory-based weighted rotation average mechanism can be validated.

\begin{figure}[b]
    \centering
    \includegraphics[width = \hsize]{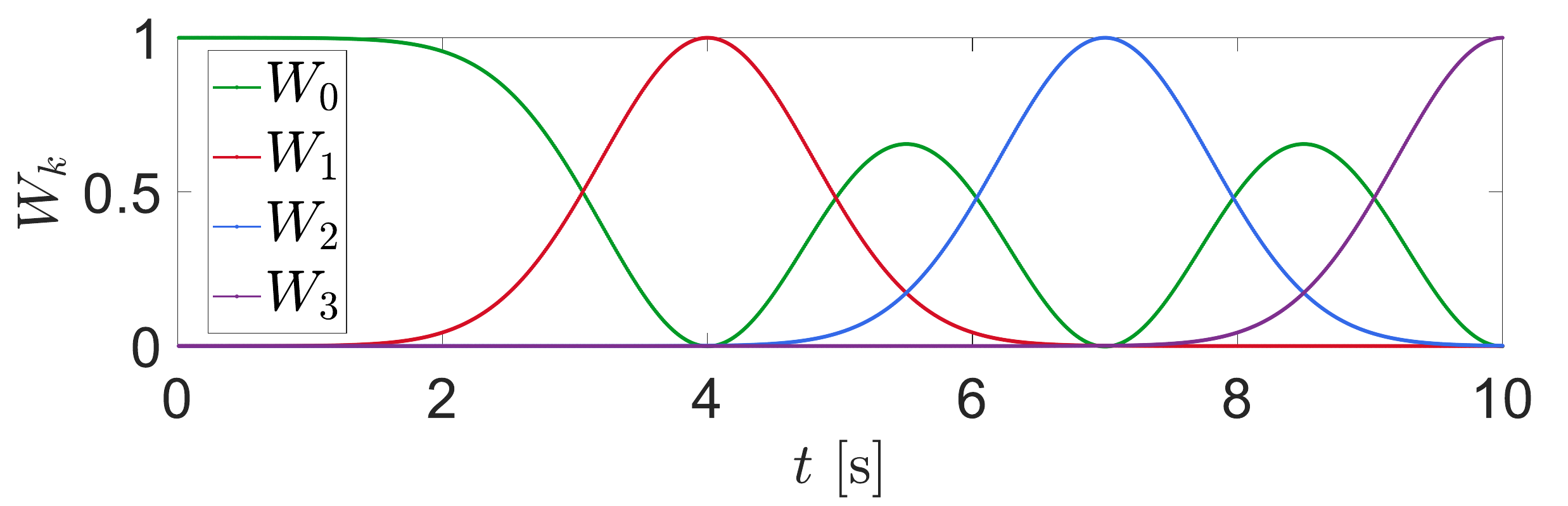}
    \caption{ {\color{black}The design of the weighted curve $W_k\left(t\right)$. In this example, $\Delta t = 2.4$s, $\bar{t}_1 = 4.0$s, $\bar{t}_2 = 7.0$s, $\bar{t}_3 = 10.0$s, $K = 3$.} }
    \label{Fig_GaussianWeightCurve-Experiments}
\end{figure}

\begin{figure*}[!t]
    \centering
    \subfigure[]{\includegraphics[width = 0.49\hsize]{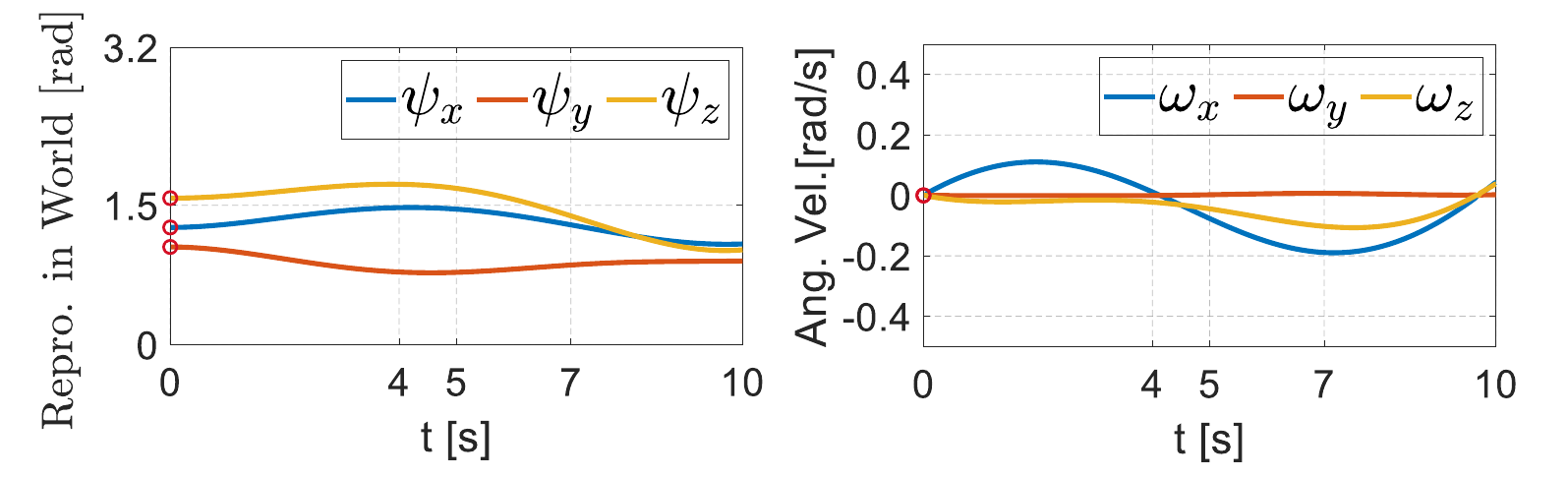}}
    \subfigure[]{\includegraphics[width = 0.49\hsize]{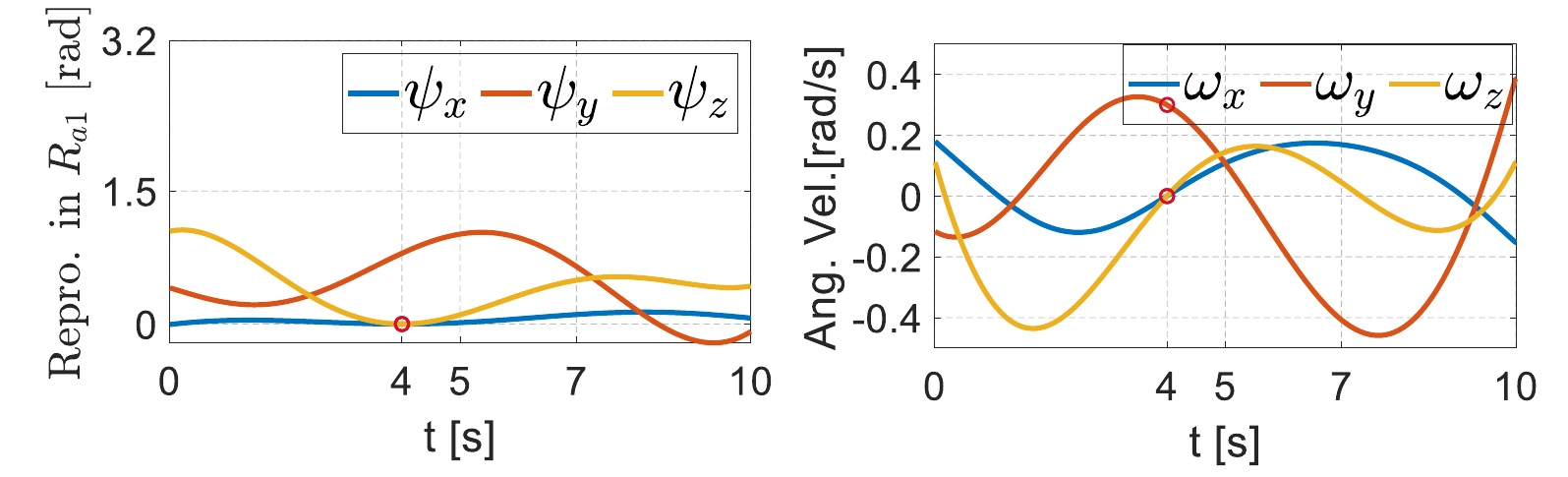}}
    \subfigure[]{\includegraphics[width = 0.49\hsize]{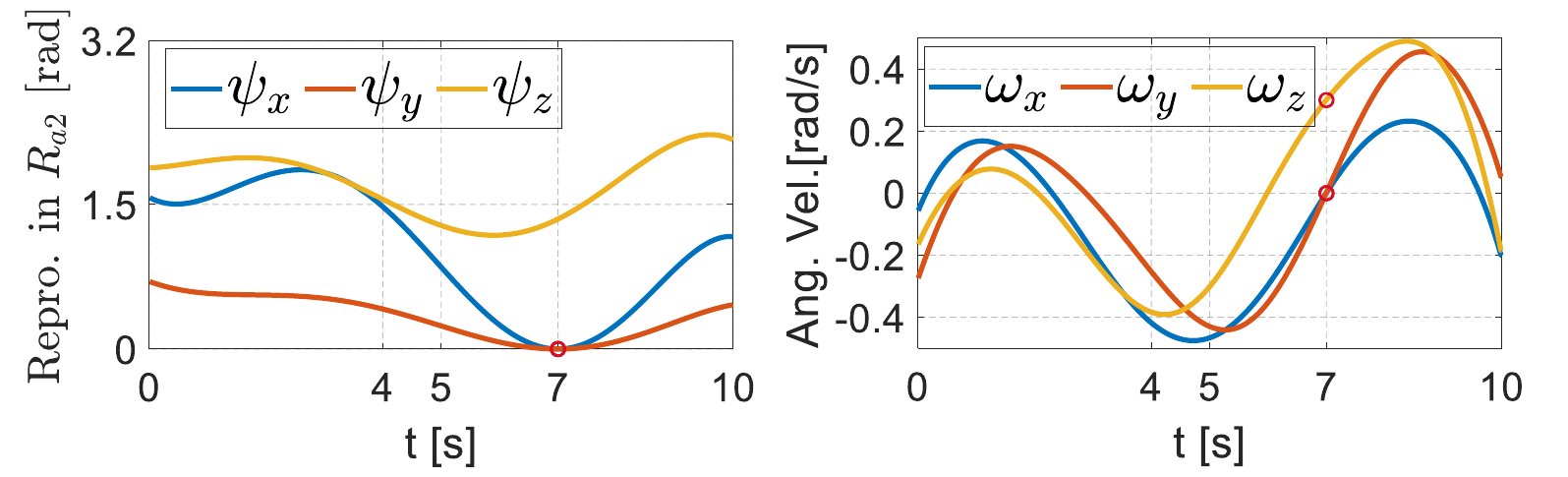}}
    \subfigure[]{\includegraphics[width = 0.49\hsize]{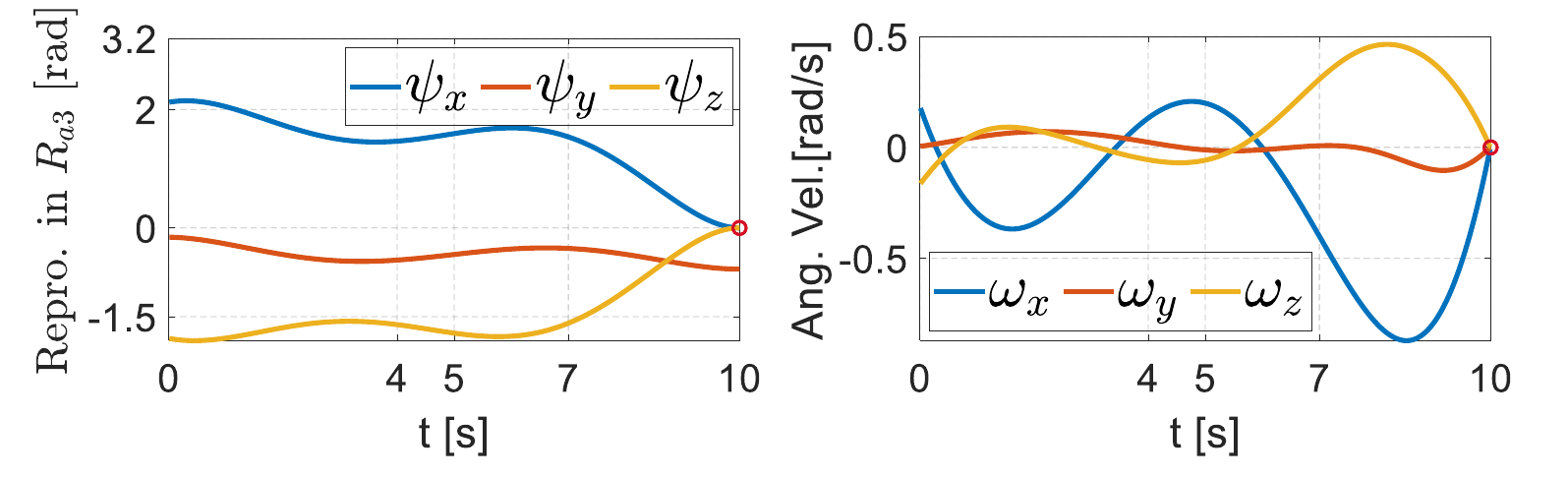}}
    \caption{ {\color{black}Reproduced trajectories expressed in different frames. (a) The base trajectory ${\vec R}_0\left(t\right)$ by considering the starting via-point. This trajectory is expressed in the world frame. (b) The trajectory ${\vec R}_1\left(t\right)$, (c) ${\vec R}_2\left(t\right)$, and (d) ${\vec R}_3\left(t\right)$ by considering the three IOVPs, respectively. These trajectories are expressed in different auxiliary frames. The desired via-points are depicted by circles. Since the desired via-points are choosen as auxiliary frames in (b), (c), and (d), the desired via-points are all transformed to identity ${\vec I}_3$. Therefore, in (b), (c), and (d), all red circles are located at position [0; 0; 0].} }
    \label{Fig_s-d1_ReproTraj-inAuxiliaryFrames}
\end{figure*}

Among which, the first via-point is set as a normal starting via-point. Therefore, the covariance $\tilde{\vec \Sigma}_{r0}$ and $\tilde{\vec \Sigma}_{v0}$ are both set as $\text{diag}\left(10^{-10}, 10^{-10}, 10^{-10}\right)$. While the other via-points are set as IOVPs. In order to better validate the capability of our proposed appraoch, the first and third IOVPs choose to release the constraints on $y$-axis, while the second IOVP chooses to release the constraints on $z$-axis. Therefore, the covariance $\tilde{\vec \Sigma}_{rk}$ and $\tilde{\vec \Sigma}_{vk}$ for $k = 1, 3$ are set as $\text{diag}\left(10^{-10}, 10^{3}, 10^{-10}\right)$ and $\text{diag}\left(10^{-10}, 10^{-10}, 10^{-10}\right)$, respectively. And the covariance $\tilde{\vec \Sigma}_{r2}$ and $\tilde{\vec \Sigma}_{v2}$ are set as $\text{diag}\left(10^{-10}, 10^{-10}, 10^{3}\right)$ and $\text{diag}\left(10^{-10}, 10^{-10}, 10^{-10}\right)$, respectively. In this test, the acceleration cost is not considered. Therefore, the covariance $\tilde{\vec \Sigma}_{ak}$ is ignored.

The weighted curves are designed as Fig. \ref{Fig_GaussianWeightCurve-Experiments}. As shown, $W_k\left(t\right)$ are all Gaussian curves centered with ${\tilde t}_k$ for $k = 1, 2, 3$. They are the weights corresponding to the reproduced trajectories ${\vec R}_k\left(t\right)$ for considering the first, second, and third IOVPs, respectively. While $W_0\left(t\right) = 1 - \sum^{3}_{k=1} {W_k\left(t\right)}$ is the weights for the base trajectory ${\vec R}_0\left(t\right)$ by considing the starting via-point. All weighted curves are smooth to guarantee the smoothness of the final reproduced trajectory.

\begin{figure}[!t]
    \centering
    \subfigure[]{\includegraphics[width = \hsize]{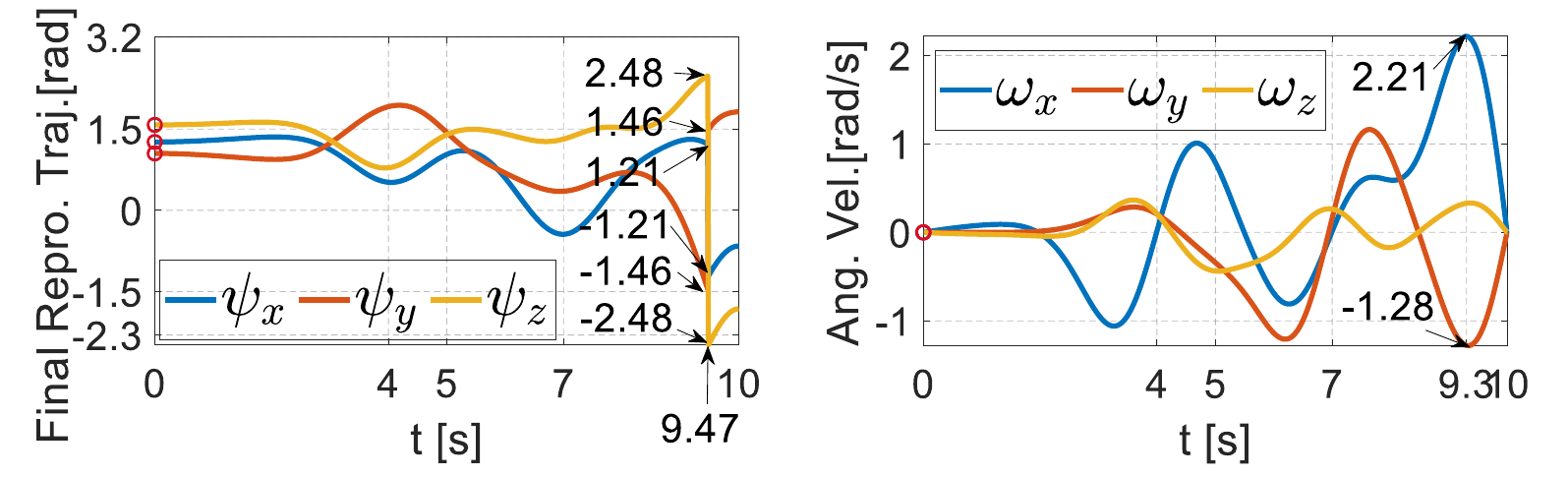}}
    \subfigure[]{\includegraphics[width = \hsize]{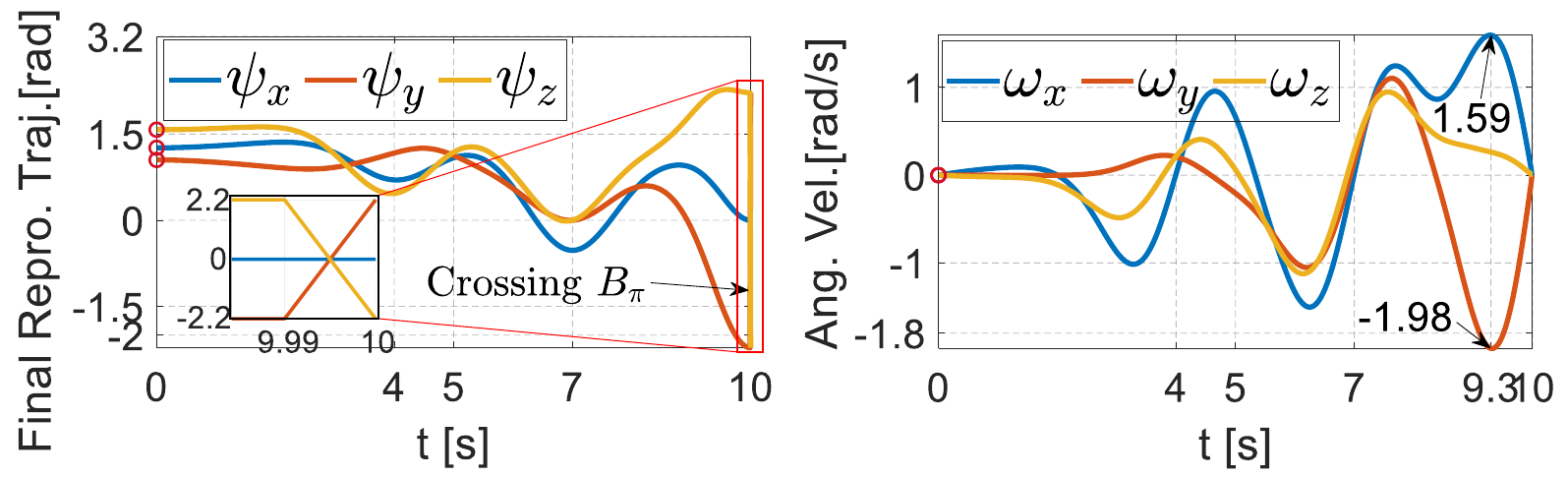}}
    \subfigure[]{\includegraphics[width = \hsize]{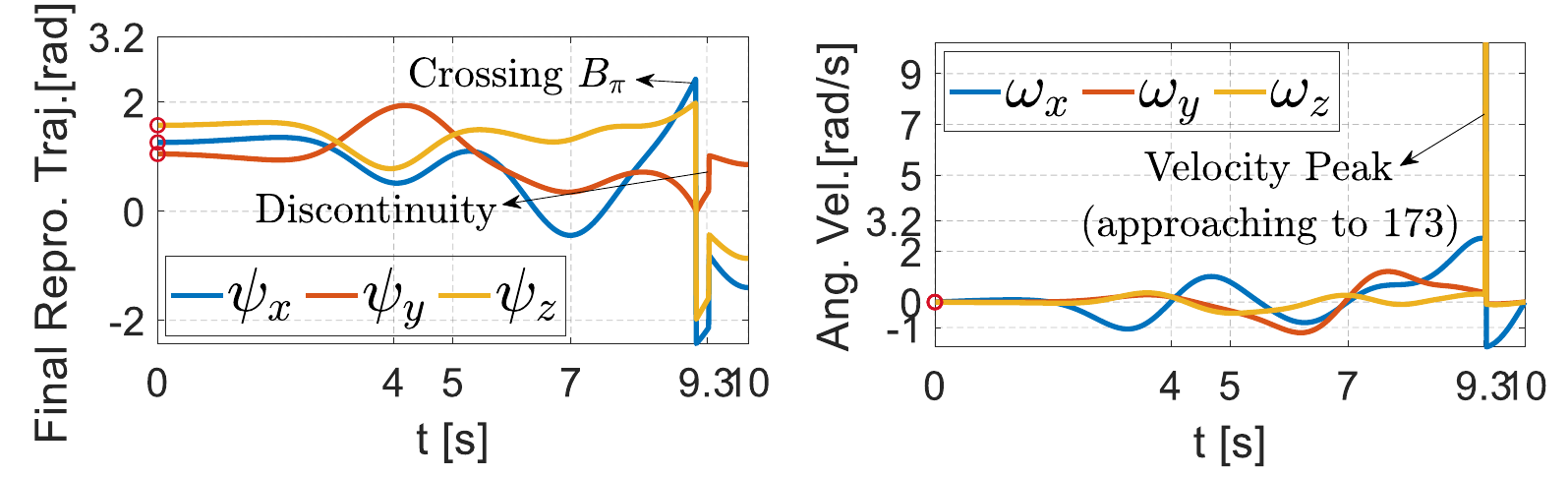}}
    \caption{ {\color{black}Final reproduced trajectory with (a) and without (b) consider the IOVPs. Obviously, the angular velocity in (a) is much smaller than the one given in (b), especially for velocity in $y$ axis around time $t = 9.32s$. The acceleration cost is increased from 1.4632 to 1.6832 with changing the IOVPs to normal via-points. (c) The final reproduced trajectory for ${\vec R}_{a3} = \exp\left( \tilde{\vec \psi}_3 \right) \exp\left([0; \frac{2\pi}{3}; 0]\right)$ by using Eq. (\ref{Eq_weightSumInRiemannian}), without our memory-based algorithm. The weighted average mechanism encounters discontinuity problem. The acceleration cost reaches to 18.9836.}}
    \label{Fig_s-d5_FinalReproTraj}
\end{figure}

\begin{figure*}[!t]
    \centering
    \subfigure[]{\includegraphics[width = 0.32\hsize]{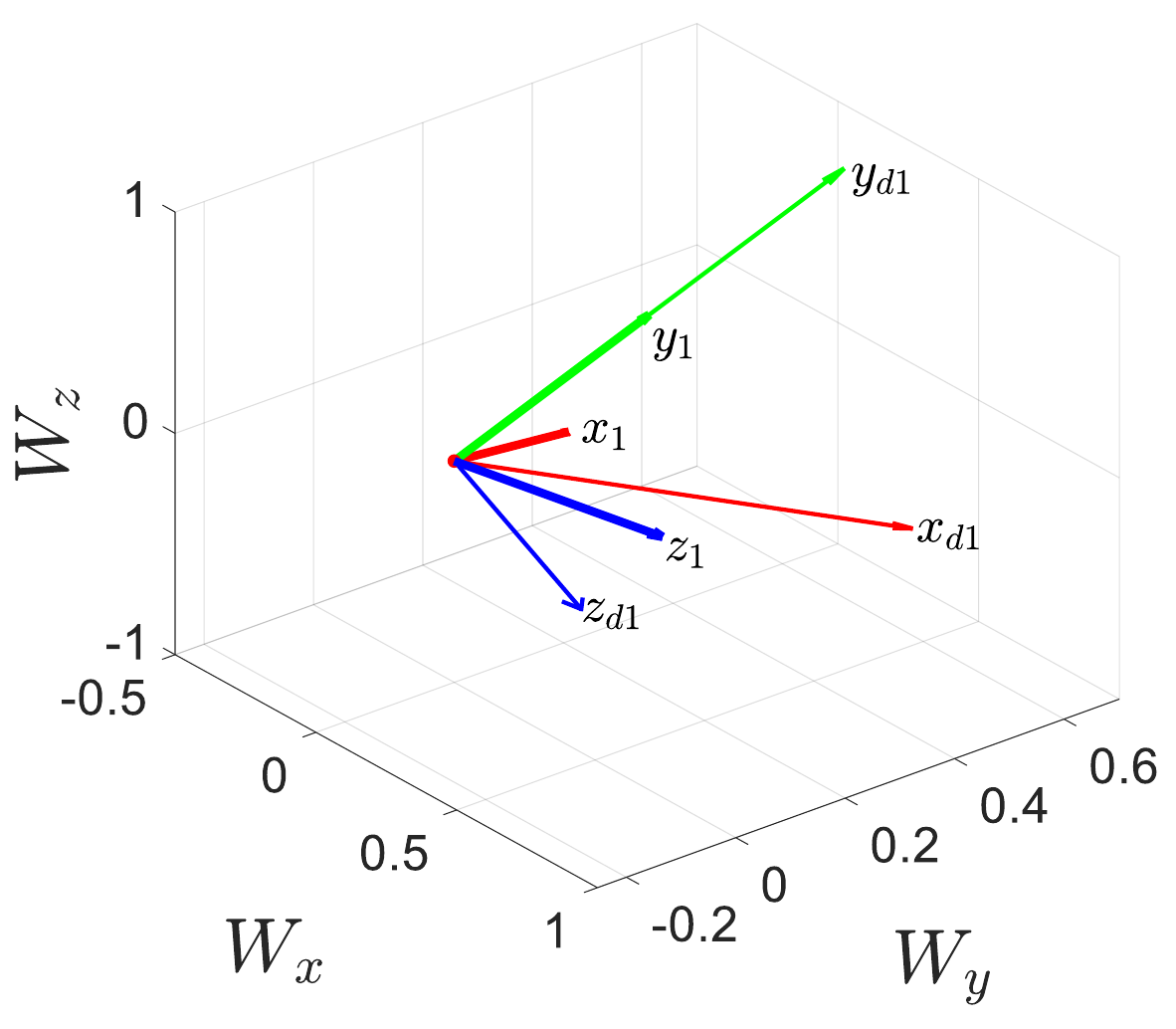}}
    \subfigure[]{\includegraphics[width = 0.32\hsize]{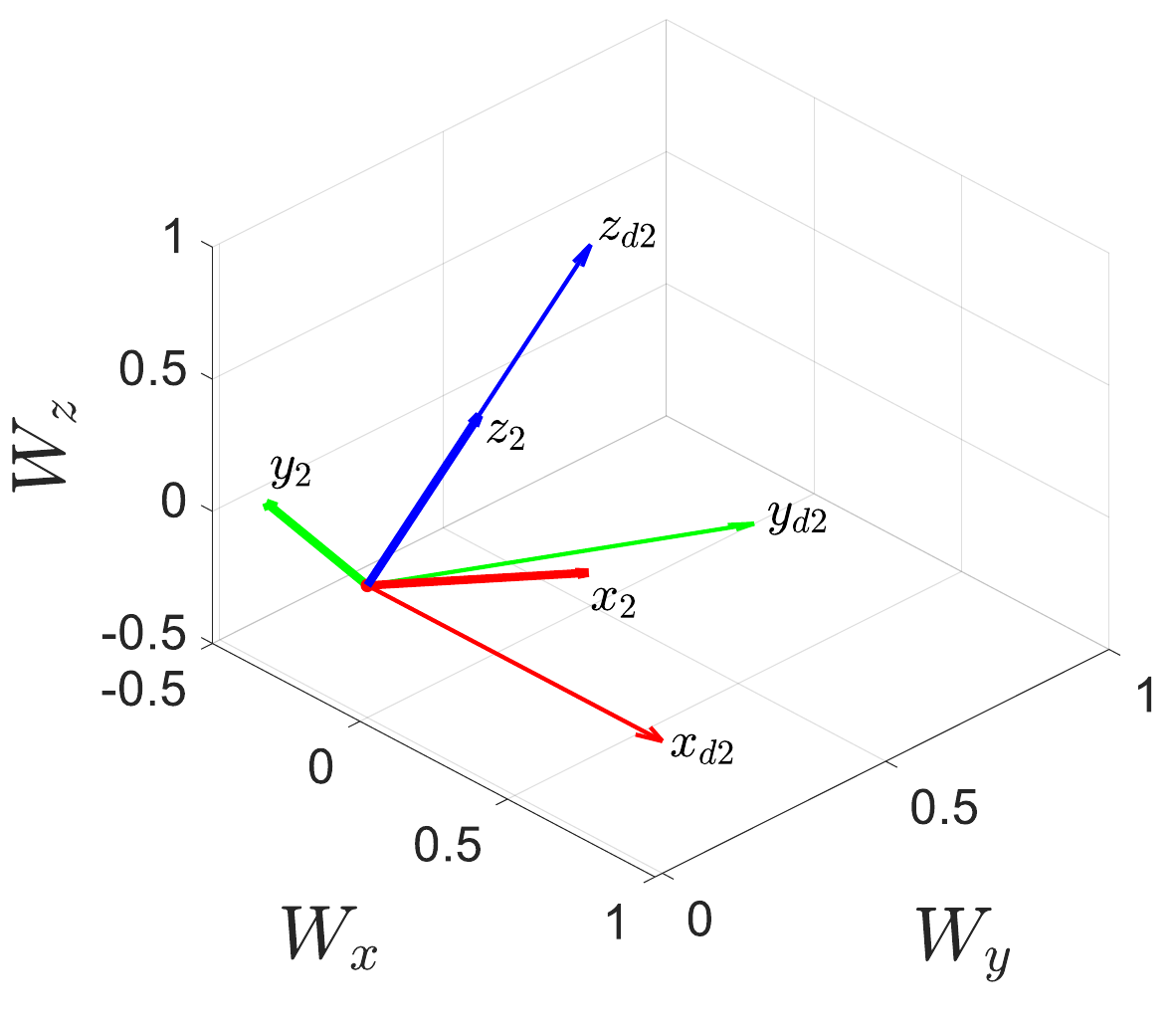}}
    \subfigure[]{\includegraphics[width = 0.32\hsize]{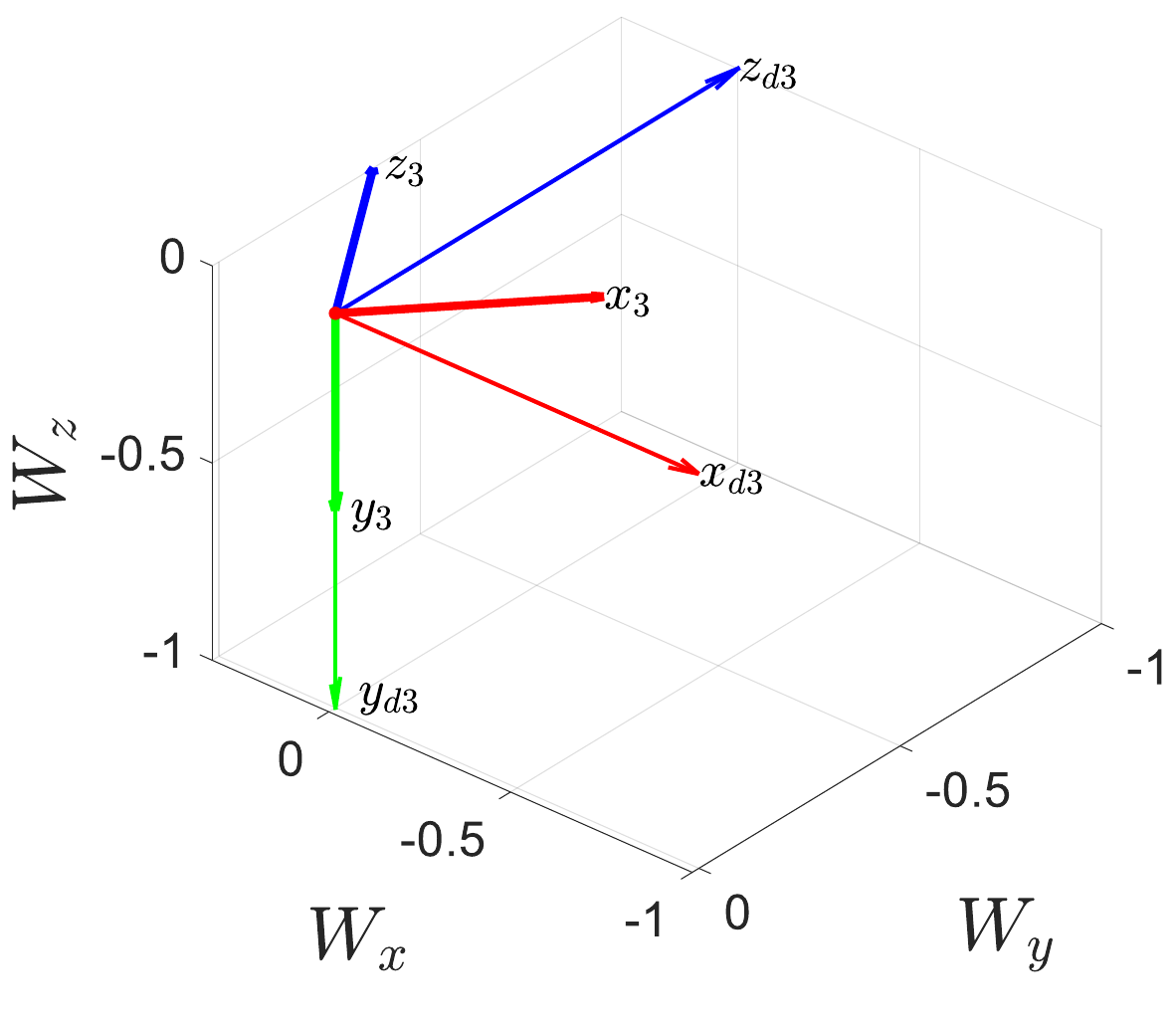}}
    \subfigure[]{\includegraphics[width = 0.32\hsize]{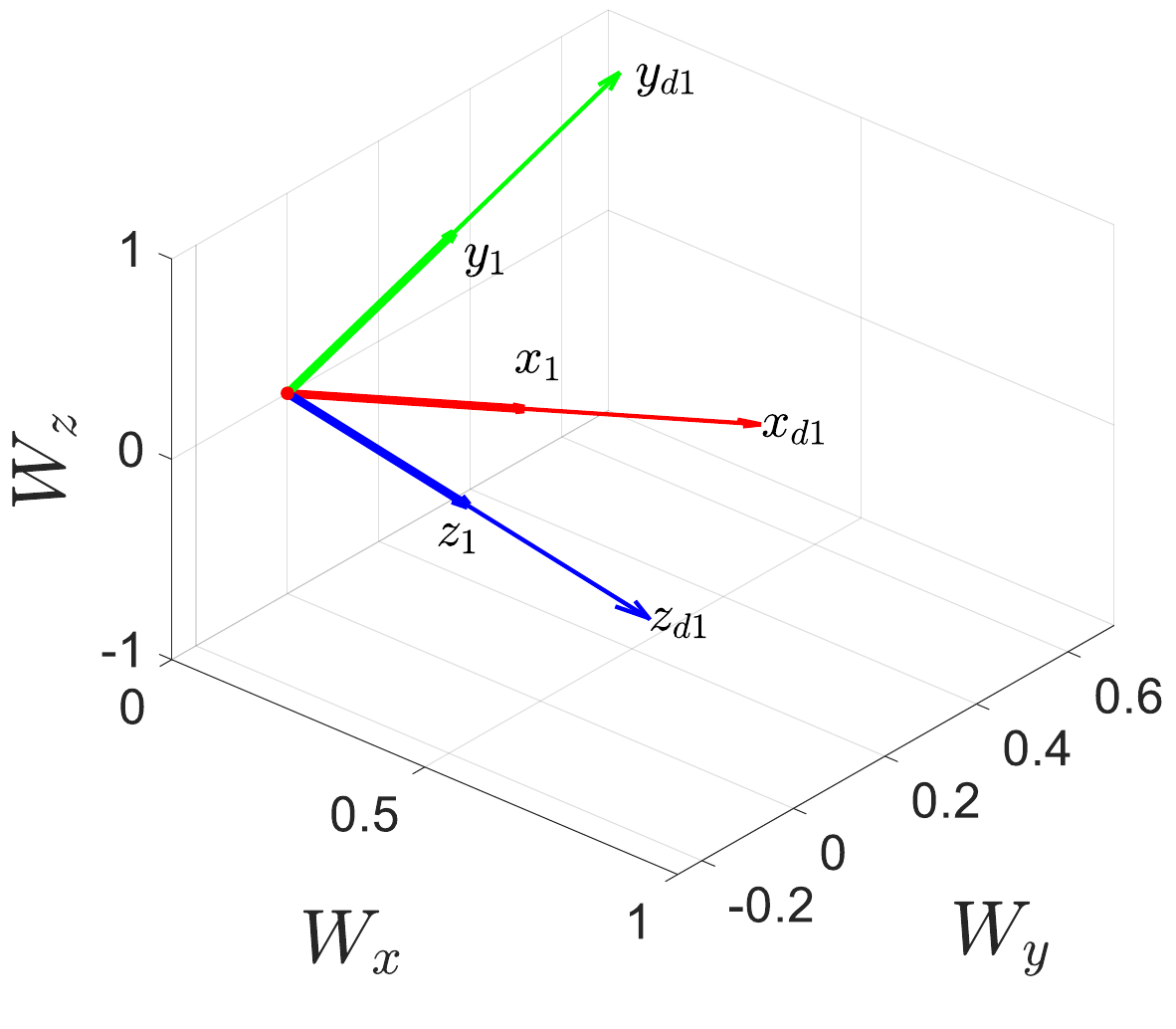}}
    \subfigure[]{\includegraphics[width = 0.32\hsize]{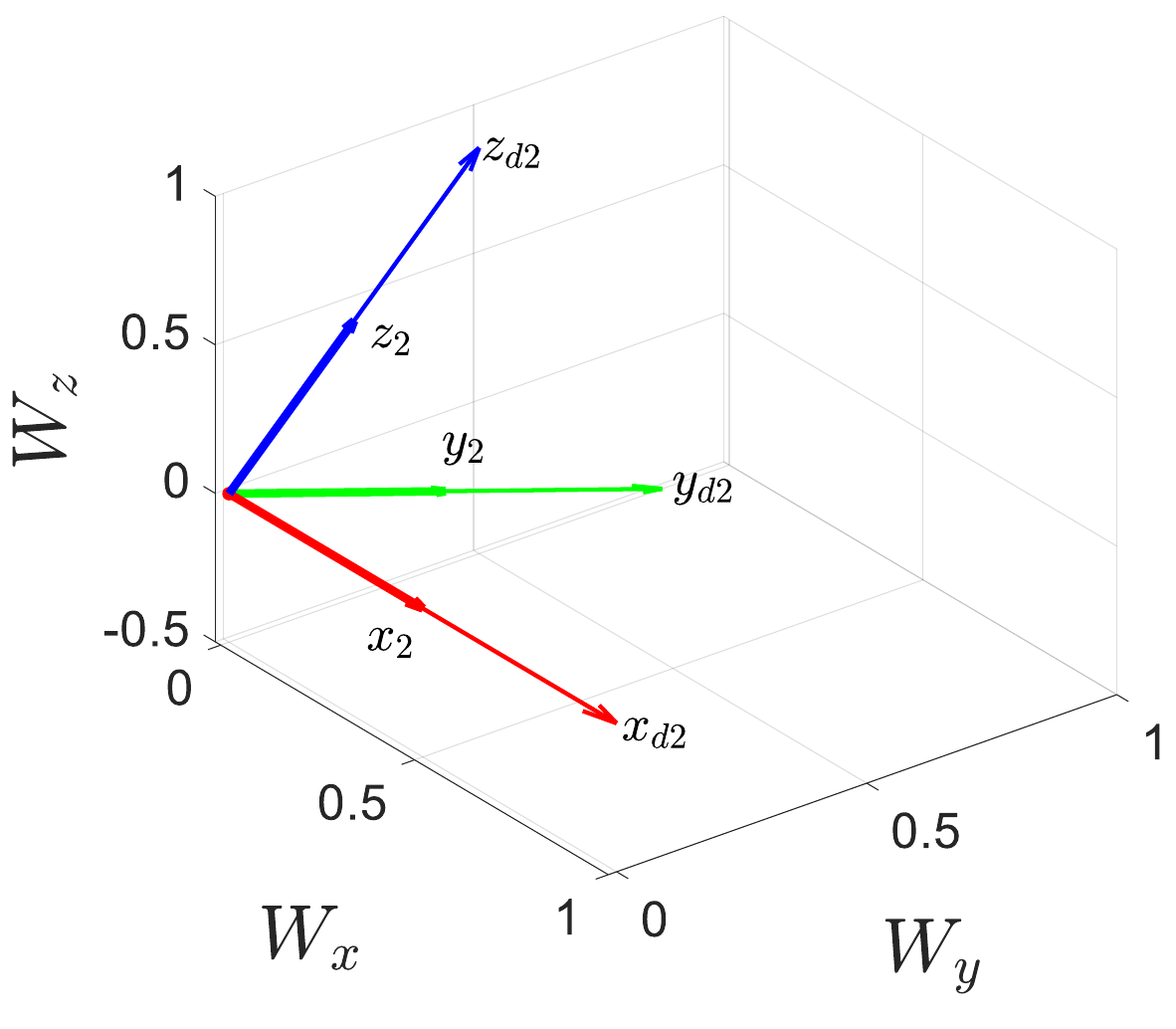}}
    \subfigure[]{\includegraphics[width = 0.32\hsize]{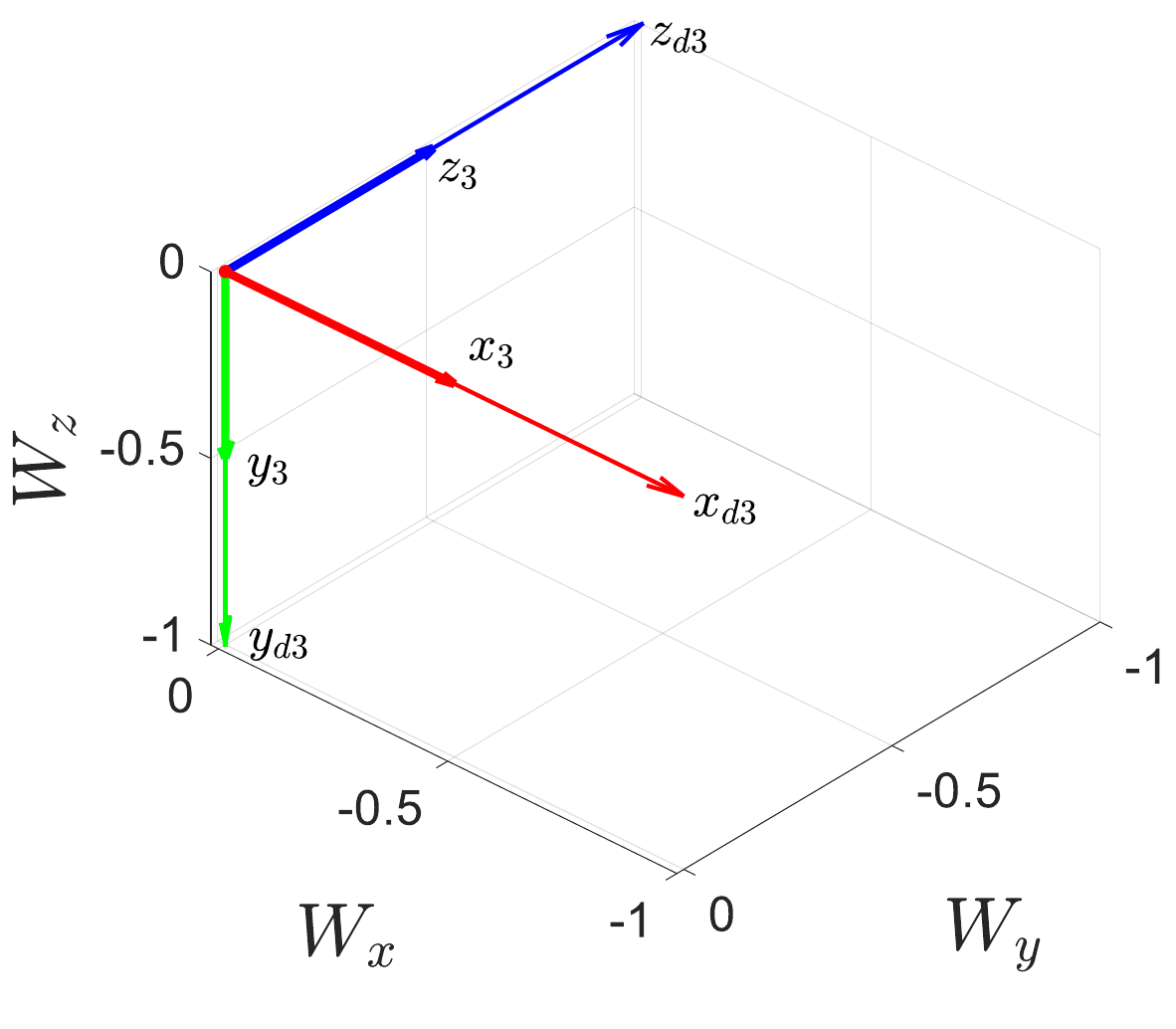}}
    \caption{ {\color{black}(a), (b), and (c) are results with considering IOVPs. (d), (e), and (f) are results without considering IOVPs. The first, second, and third columns are the reproduced orientations at time $\tilde{t}_1$, $\tilde{t}_2$, and $\tilde{t}_3$, respectively. The frames with longer arrows show the desired via-point ${\vec R}_{ak}$ and the frames with shorter arrows show the reproduced orientation. Although the reproduced orientations are perfectly aligned with the desired ones in (d)(e)(f), the acceleration cost is increased from 1.4632 to 1.6832, leading the fluctuations of the final reproduced trajectory. While by considering IOVPs, the strict constraints are guaranteed and the constraints on relaxed DoF is released. All these results can validate that our designed Gaussian weighted curves can satisfy the dominance of each trajectory at the given local time domain.} }
    \label{Fig_s-d6-d12_IOVPs}
\end{figure*}

The demonstration data is projected into 4 different tangent spaces by using 4 auxiliary frames ${\vec R}_{ak} = \exp\left({\tilde{\vec \psi}}_k\right)$ for $k = 0, 1, 2, 3$. Then four trajectories ${\vec R}_k\left(t\right)$ by incorporating each via-point are reproduced. The reproduced trajectories are shown in Fig. \ref{Fig_s-d1_ReproTraj-inAuxiliaryFrames}. For the convenience of validating the via-points, ${\vec R}_1\left(t\right)$, ${\vec R}_2\left(t\right)$, and ${\vec R}_3\left(t\right)$ are displayed in the auxiliary frames. As shown in Fig. \ref{Fig_s-d1_ReproTraj-inAuxiliaryFrames}, the desired via-points are depicted by circles. Since the desired via-points are choosen as auxiliary frames in (b), (c), and (d), the desired via-points are all transformed to identity ${\vec I}_3$. Therefore, in (b), (c), and (d), all red circles are located at position [0; 0; 0]. We can see that our approach modulates orientation and angular velocity simultaneously to go through the first via-point (as shown in Fig. \ref{Fig_s-d1_ReproTraj-inAuxiliaryFrames}(a)), since it is set as a normal starting via-point. While in Fig. \ref{Fig_s-d1_ReproTraj-inAuxiliaryFrames}(b) and (d), our approach modulate the $x$-th and $z$-th components of orientations and the angular velocities to go through the desired point. But the constraint on $y$-th component is released. And in Fig. \ref{Fig_s-d1_ReproTraj-inAuxiliaryFrames}(c), the constraint on $z$-th component is released. These results validate that our proposed approach can successfully incorporate each IOVP.

Then the final reproduced trajectory ${\vec R}\left(t\right)$ is obtained by fusing ${\vec R}_k\left(t\right), k = 0, 1, 2, 3$ with our proposed memory-based weighted rotation average mechanism. The final reproduced trajectory is shown in Fig. \ref{Fig_s-d5_FinalReproTraj}(a). We can see that the final trajectory is smooth, reflected by the smoothness of the angular velocity. This indicates the effectiveness of our proposed weighted rotation average mechanism. More specifically, please take a notice to the orientation trajectory at time $t = 9.47s$. At this point, the trajectory of ${\vec \psi}\left(t\right) = \log\left({\vec R}\left(t\right)\right)$ pass through the boundary of ${\vec B}_\pi$, since ${\vec \psi}\left(t = 9.48\right) = -{\vec \psi}\left(t = 9.47\right)$ and $\left\|{\vec \psi}\left(t = 9.47/9.48\right)\right\| \approx \pi$. Although the trajectory  ${\vec \psi}\left(t\right)$ in Angle-axis space is not continuous, the trajectory ${\vec R}\left(t\right)$ in SO(3) is continuous. The continuity can be reflected by the smoothness of the angular velocity curves. This indicates that our proposed imitation learning framework and the memory-based weighted average mechanism can address the large distortion problem and can still work for points that go beyond the boundary of ${\vec B}_\pi$. 

The reproduced orientations ${\vec R}\left(t\right)$ at $t = {\tilde{t}}_k$ for $k = 1, 2, 3$ are visualized in Fig. \ref{Fig_s-d6-d12_IOVPs}(a)(b)(c). As we can tell, the $y$ directions of ${\vec R}\left({\tilde{t}}_1\right)$ and ${\vec R}\left({\tilde{t}}_3\right)$ are strictly aligned with the ones of ${\vec R}_{a1} = \exp\left({\tilde{\vec \psi}}_1\right)$ and ${\vec R}_{a3} = \exp\left({\tilde{\vec \psi}}_3\right)$, and the $z$ direction of ${\vec R}\left({\tilde{t}}_2\right)$ is strictly aligned with the one of ${\vec R}_{a2} = \exp\left({\tilde{\vec \psi}}_2\right)$. These results validate that the strict constraints for the three IOVPs are all guarantted. While the rotation on the relaxed DoF is released.

In order to demonstrate the benefits by considering multiple IOVPs, another two groups of simulations are carried out for comparison. In these tests, the fourth via-point are modified as:
\begin{align}
    \label{Eq_via-point4-comparison}
    \begin{split}
        {\vec R}_{a3} = \exp\left( \tilde{\vec \psi}_3 \right) \exp\left([0; \frac{\left(i-6\right)\pi}{6}; 0]\right),
    \end{split}
\end{align}
for $i = 0, 1, \cdots, 11$. This means that ${\vec R}_{a3}$ is the result of $ \exp\left( \tilde{\vec \psi}_3 \right)$ rotating around $y$-axis by an angle $\frac{\left(i-6\right)\pi}{6}$. In total, 12 tests are carried out for each group with the different choices of $i$.
In the first group, the second, third, and fourth via-points are set as IOVPs. Hence the covariance $\tilde{\vec \Sigma}_{rk}$ and $\tilde{\vec \Sigma}_{vk}$ for $k = 1, 2, 3$ are set as previous. While in the second one, all the via-points are treated as normal via-points. Therefore, all covariances are set as $\text{diag}\left(10^{-10}, 10^{-10}, 10^{-10}\right)$. The acceleration cost, as given in Eq. (\ref{Eq_AngCost}), are chosen as metric to evaluate the performance.

The comparison results are given in Fig. \ref{Fig_s-d13_AccelerationCost}. As we can tell, although the acceleration cost is ignored in the learning process, the costs by considering IOVPs are much smaller than the ones without considering IOVPs. This validates our claim that we can achieve extra benefits by releasing unnecessary constraints. In addition, the costs given in Fig. \ref{Fig_s-d13_AccelerationCost} are much bigger than the ones given in Fig. \ref{Fig_s10_jerkCost_IOVP}. This is because the desired via-points given in these tests are far from the demonstration data. These results also validate our generalization ability in adaptation to via-points that are outside the distribution of demonstrations.

\begin{figure}[!t]
    \centering
    \includegraphics[width = \hsize]{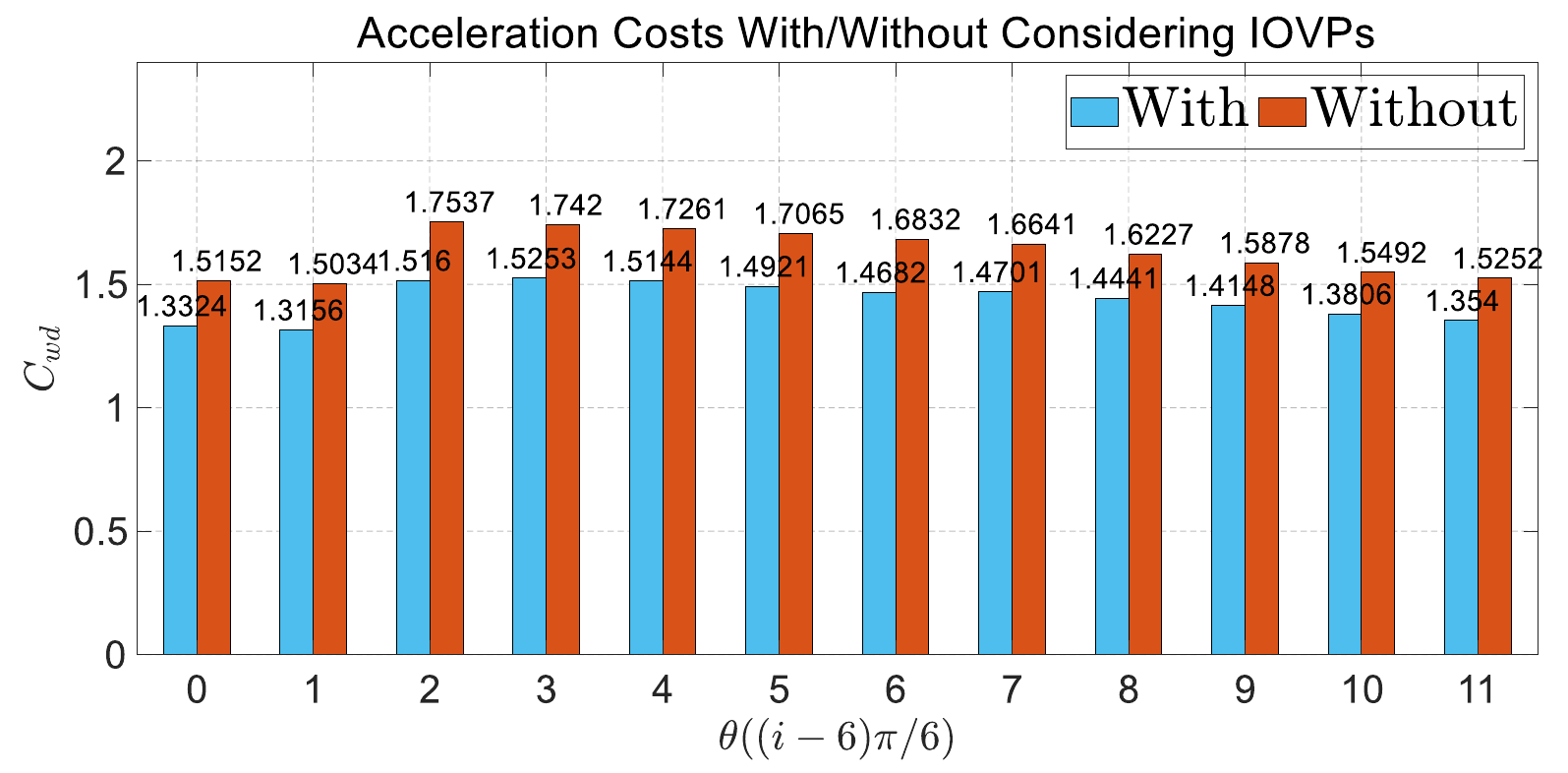}
    \caption{ {\color{black}The acceleration costs with and without considering IOVPs. The blue bars are results with considering IOVPs and the red bars are results that changing the IOVPs to normal via-points. } }
    \label{Fig_s-d13_AccelerationCost}
\end{figure}

In order to validate the effectiveness of our proposed memory-based algorithm, the final reproduced trajectory for ${\vec R}_{a3} = \exp\left( \tilde{\vec \psi}_3 \right) \exp\left([0; \frac{2\pi}{3}; 0]\right)$ by using Eq. (\ref{Eq_weightSumInRiemannian}) is given in Fig. \ref{Fig_s-d5_FinalReproTraj}(c). Here our memory-based algorithm is not used. As we can tell, the weighted average mechanism encounters discontinuity problem at time $t = 9.34s$. The highest peak reaches 173rad/s and the acceleration cost reaches to 18.9836. While all the reproduced trajectories in the aforementioned two groups are smooth, which validate the effectiveness of our memory-based weighted rotation average mechanism.

The reproduced trajectory without considering IOVPs for $i = 6$ (In this case, the fourth via-point ${\vec R}_{a3} = \exp\left( \tilde{\vec \psi}_3 \right)$ are the same as given in Eq. (\ref{Eq_via-point-MultiIOVPs}) ) is given in Fig. \ref{Fig_s-d5_FinalReproTraj}(b). Similar to Fig. \ref{Fig_s-d5_FinalReproTraj}(a), the trajectory of ${\vec \psi}\left(t\right) = \log\left({\vec R}\left(t\right)\right)$ pass through the boundary of ${\vec B}_\pi$ at $t = 9.99s$. But this does not mean that the trajectory of ${\vec R}\left(t\right)$ is discontinuous. The acceleration cost is increased from 1.4632 to 1.6832, which validates the benefits by considering IOVPs. Reflected in Fig. \ref{Fig_s-d5_FinalReproTraj}, the angular velocity in $y$ axis given in (a) is much smaller than the one given in (b). Correspondingly, the reproduced orientations ${\vec R}\left(t\right)$ at $t = {\tilde{t}}_k$ for $k = 1, 2, 3$ are visualized in Fig. \ref{Fig_s-d6-d12_IOVPs}(d)(e)(f). As shown in (d)(e)(f), all the reproduced orientations are perfectly aligned with the desired via-points, validating the fact that our designed Gaussian weighted curves can satisfy the dominance of each trajectory at the given local time domain. 

}

\section{Experimental Evaluations}
\label{Section_Experiments}

In order to validate the capability of our proposed approach in incorporating multiple IOVPs, the earphone's peg-in-hole task is set for a 6-DoF robotic manipulator to fulfill. As shown in Fig. \ref{Fig_ExperimentalSetup}, the earphone's pin is fixed on the gripper of Unitree Z1 Pro, with its direction being aligned with the $y$-axis of the end-effector frame. A phone is fixed on a table, with the hole's direction being aligned with the vertical direction. To fulfill the plugging task, the $y$ direction of the earphone's pin has to be strictly the same with the direction of the hole, which means that the rotations around $x$ and $z$ directions are strictly constrained. However, the rotation around the $y$ direction can be released. The pose of the phone, relative to the manipulator's base frame, is pre-known by manual measurement.

\begin{figure}[t]
    \centering
    \includegraphics[width = \hsize]{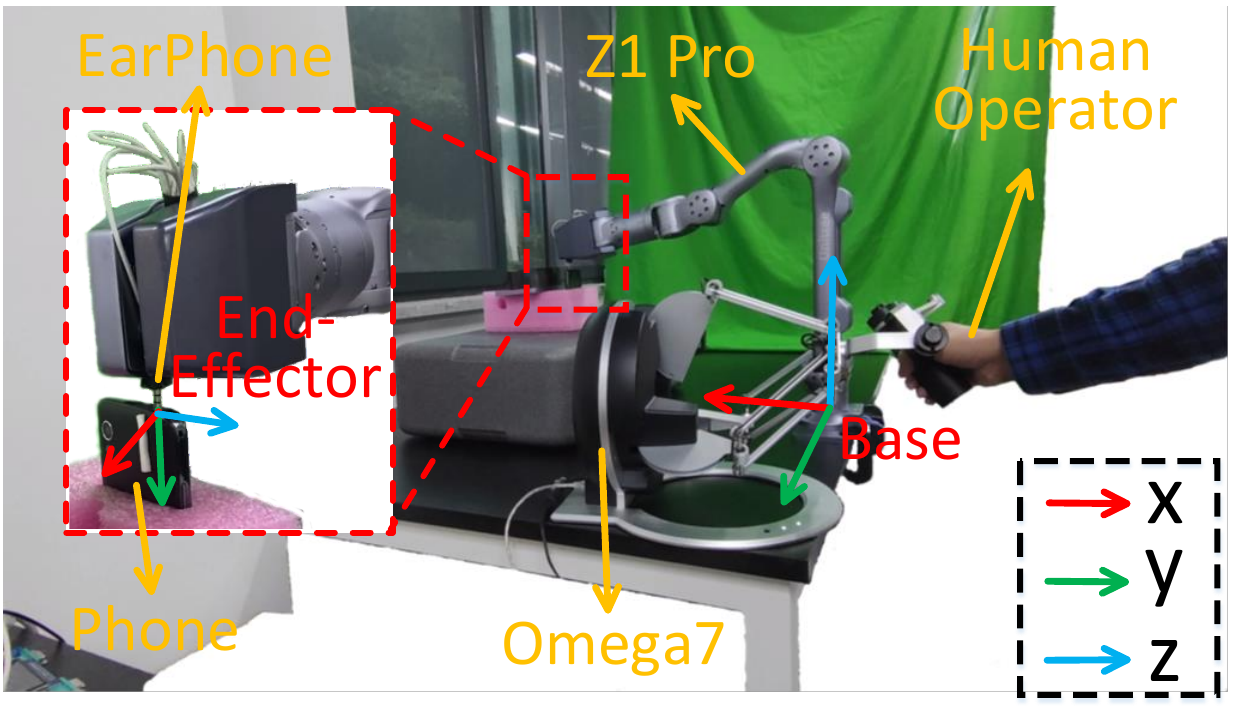}
    \caption{Experimental Setup. The earphone's pin is fixed on the gripper of Unitree Z1 Pro, with its direction being aligned with the $y$-axis of the end-effector frame. A phone is fixed on the table and the posture is pre-known in advance through manual measurement. A human operator tele-operates the manipulator via the Omega 7 to collect demonstrations.}
    \label{Fig_ExperimentalSetup}
\end{figure}

\begin{figure}[h]
    \centering
    \subfigure[]{\includegraphics[width = 0.465\hsize]{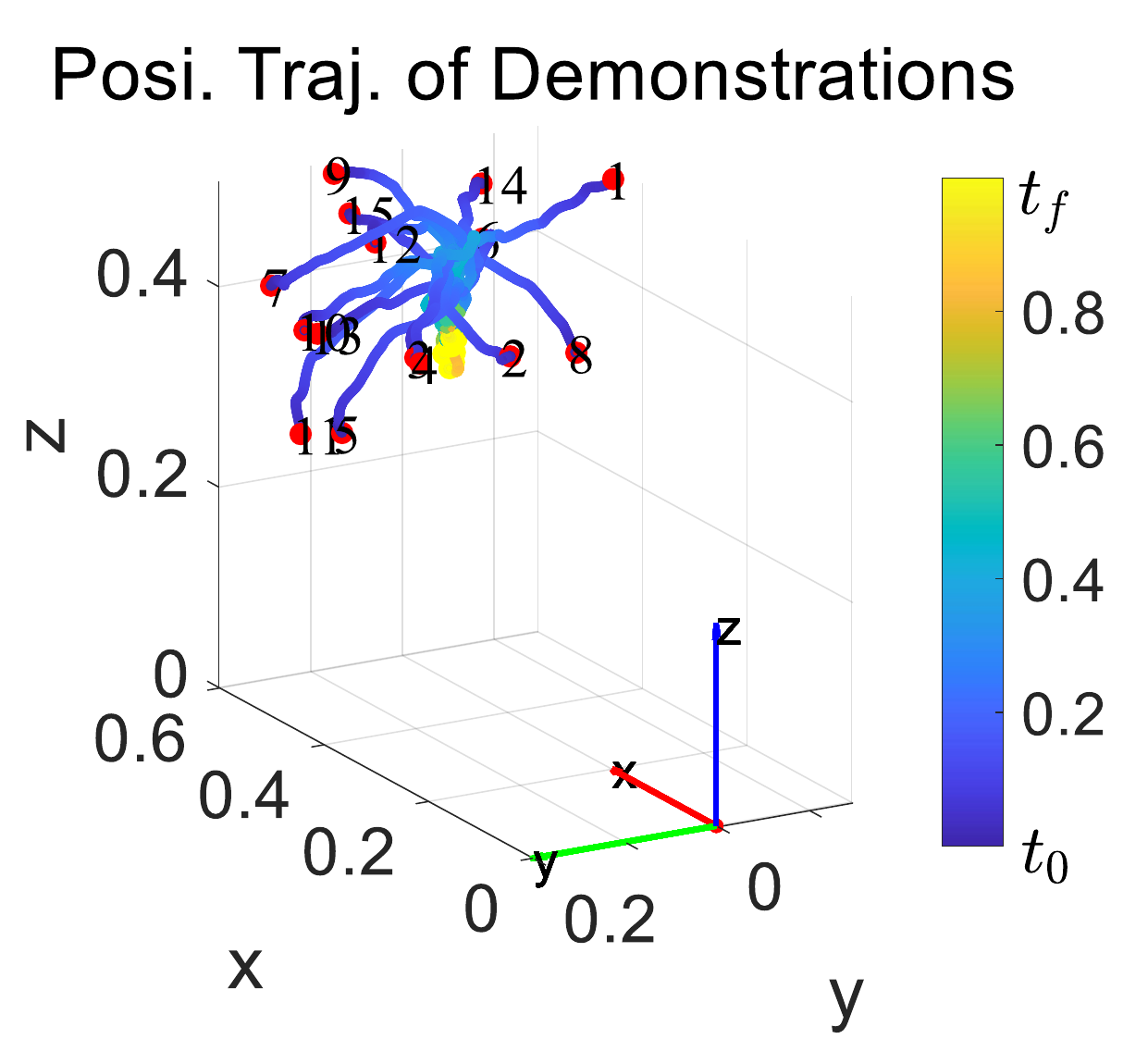}}
    \subfigure[]{\includegraphics[width = 0.515\hsize]{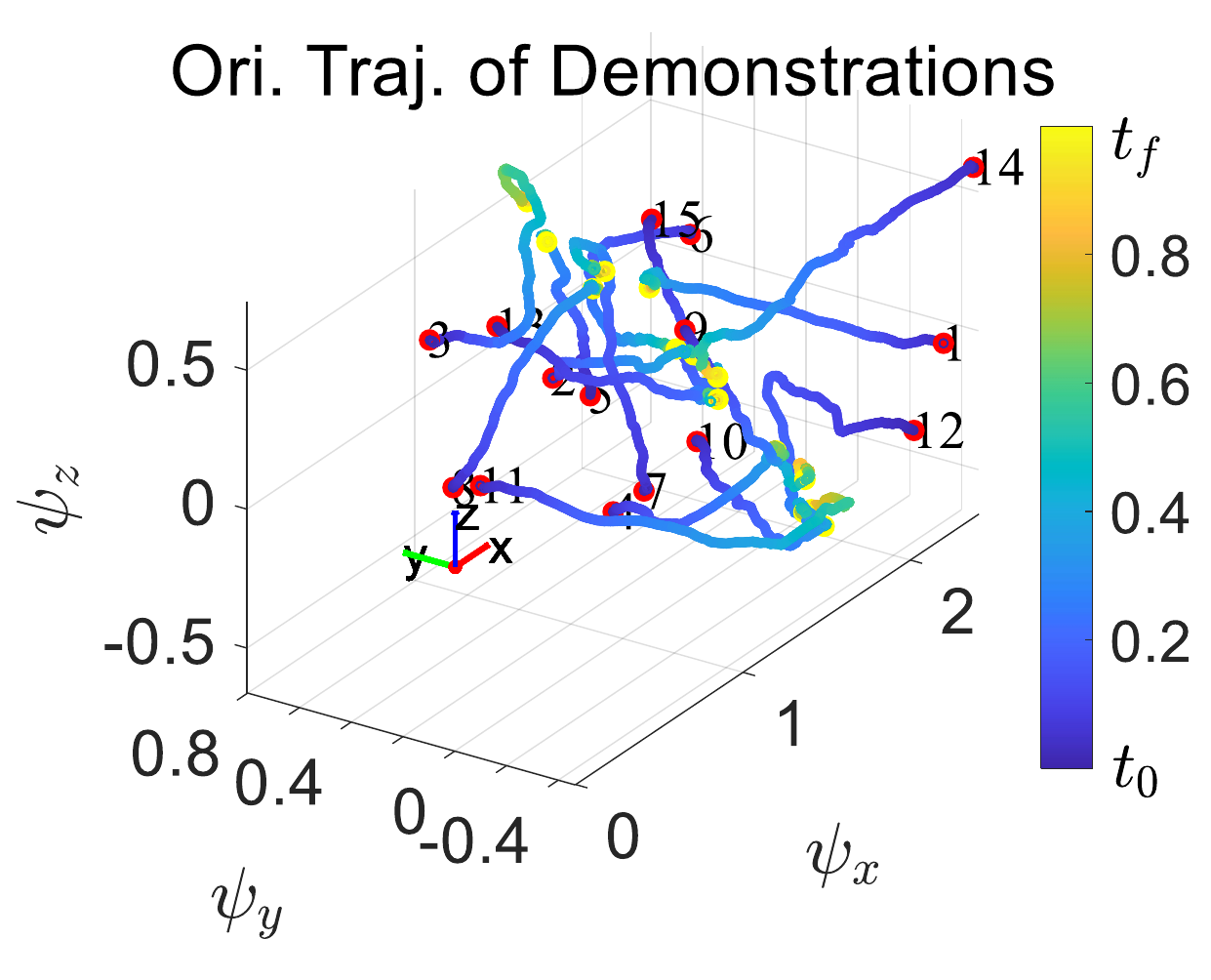}}
    \caption{The 15 demonstration trajectories. (a) Position. (b) Orientation. The red dots represent the starting poses of each trajectory. The color indicates the time sequence, where $t_0 = 0$ is corresponding to $0.0$s, and $t_f = 1$ is corresponding to $12.5751$s.}
    \label{Fig_E1DemonstrationsVisualization}
\end{figure}

In the demonstration stage, the manipulator is tele-operated by a human operator via the Omega 7, which is a haptic leader device, to fulfill the peg-in-hole task. 15 demonstrations are collected. In each demonstration, both the position and orientation trajectories of the end-effector frame, relative to the base frame, are recorded. These trajectories are visualized in Fig. \ref{Fig_E1DemonstrationsVisualization}. As shown in Fig. \ref{Fig_E1DemonstrationsVisualization}, the starting poses of the 15 demonstrations are diverse. In addition, to better capture the incomplete orientation constraint feature of the peg-in-hole task, the human operator are required to try his best to fulfill the insertion with different orientations. Reflected in Fig. \ref{Fig_E1DemonstrationsVisualization}(b), the orientation trajectories reach different points in the Angle-Axis Space. Actually, {\color{black}as stated in Section~\ref{subsect-discussion-angle-axis},} the end points of all orientation trajectories lie in a geodesic curve in SO(3){\color{black}, which is formed by the locus of ${\vec R}_d$ rotating around the relaxed DoF, denoted by $\gamma_{{\vec R}_d, {\vec \kappa}}\left(\theta\right)$}. However, this feature is difficult to be recognized from the distribution of demonstrations in the tangent space. In addition, due to the limited workspace of the manipulator, the human operator is unable to fulfill the insertion from the backside of the phone. In another word, the data for insertion from backside are missing, i.e., the various demonstrations can not contain all varied rotations around $y$.

In the reproduction stage, 5 demonstrations are randomly picked to extract the probability distribution, which represents the latent human skills. Then, our approach is required to reproduce a new trajectory according to the starting pose of the manipulator and the pose of the phone. The position trajectory is generated by the KMP approach (\cite{2019IJRR-KMP}) and the orientation trajectory is generated by our proposed approach, respectively. Here we will mainly focus on the orientation part.

\begin{table}[b]
  \centering
  \caption{The Desired Via-Points Used in Earphone Insertion Task. Position Unit: Meter}
  \label{Table-ViaPointsExp}
  \setlength{\tabcolsep}{1.2mm}{
  \begin{tabular}{c|c|c|c|c|c|c}
    \hline
    \hline
       \multirow{2}{*}{}  & \multicolumn{3}{c|}{Group I}     & \multicolumn{3}{c}{Group II}  \\ \cline{2-7}
                          &  Test 1   & Test 2   & Test 3    & Test 1 & Test 2 & Test 3       \\ \hline \hline
       $\tilde{t}_0$        &   \multicolumn{6}{c}{0.0s}                                       \\ \hline
       $\tilde{\vec p}_0$    & ${\vec p}_1$  & ${\vec p}_2$ & ${\vec p}_3$ & ${\vec p}_1$  & ${\vec p}_2$ & ${\vec p}_3$   \\ \hline
       $\tilde{\vec \psi}_0$ & ${\vec R}_1$  & ${\vec R}_2$ & ${\vec R}_3$ & ${\vec R}_1$  & ${\vec R}_2$ & ${\vec R}_3$  \\ \hline
       $\tilde{t}_1$        &  \multicolumn{6}{c}{8.8035s}                                     \\ \hline
       $\tilde{\vec p}_1$    & \multicolumn{3}{c|}{$\left[0.509; -0.066; 0.362\right]$} & \multicolumn{3}{c}{$\left[0.420; 0.015; 0.357\right]$}  \\ \hline
       $\tilde{\vec \psi}_1$ & \multicolumn{3}{c|}{$\left[1.5557; -0.0427; 0.0250\right]$} & \multicolumn{3}{c}{$\left[1.1719; 1.1761; 1.2101\right]$}  \\ \hline
       $\tilde{t}_2$        &  \multicolumn{6}{c}{11.3196s}                                     \\ \hline
       $\tilde{\vec p}_2$    & \multicolumn{3}{c|}{$\left[0.509; -0.066; 0.332\right]$} & \multicolumn{3}{c}{$\left[0.420; 0.015; 0.327\right]$}  \\ \hline
       $\tilde{\vec \psi}_2$ & \multicolumn{3}{c|}{$\left[1.5557; -0.0427; 0.0250\right]$} & \multicolumn{3}{c}{$\left[1.1719; 1.1761; 1.2101\right]$}  \\ \hline
       $\tilde{t}_3$        &  \multicolumn{6}{c}{12.5751s}                                     \\ \hline
       $\tilde{\vec p}_3$    & \multicolumn{3}{c|}{$\left[0.509; -0.066; 0.312\right]$} & \multicolumn{3}{c}{$\left[0.420; 0.015; 0.307\right]$}  \\ \hline
    \hline
  \end{tabular}
  }
\end{table}

Two groups of experiments are conducted. In the first group, the pose of the phone keeps the same with the one in the demonstration stage. But 3 different starting poses are randomly chosen for the manipulator. In the second group, both the position and orientation of the phone are changed, while the starting poses of the manipulator are chosen the same as the ones in the first group. For each test, 4 position and 3 orientation desired via-points are set, as shown in Table \ref{Table-ViaPointsExp}. Among which, the first via-point is the starting pose of the manipulator, such that the manipulator can avoid sudden jump during the motion. Here the starting poses in three different tests are given as:
\begin{align}
    \label{Eq_StartingPosesinThreeTests}
    \begin{split}
        {\vec p}_1 &= [0.453, -0.251, 0.464]^\top \text{m}, \\ {\vec R}_1 &= \exp\left([-0.3873, 0.2968, -0.2593]^\top\right);\\
        {\vec p}_2 &= [0.419, 0.062, 0.433]^\top \text{m}, \\ {\vec R}_2 &= \exp\left([2.3733, -0.5512, -0.3892]^\top\right);\\
        {\vec p}_3 &= [0.544, -0.019, 0.501]^\top \text{m}, \\ {\vec R}_3 &= \exp\left([1.7931, -0.1488, 0.7398]^\top\right).
    \end{split}
\end{align}
The second and the third via-points are determined by the pose of the phone. In which, the third position via-point is the same with the hole position of the phone, while the second position via-point is 30mm higher than the third one. The second and the third orientation via-points, which are the same with the hole orientation of the phone, are both set as IOVPs. To fulfill the insertion task, an extra position via-point, which is 20mm lower than the third via-point, is set for the position trajectory, while the orientation keeps constant from the third point to the fourth point. The desired velocities for all via-points are set as zero. The weighted curves are designed as Fig. \ref{Fig_E1WeightedCurve}.

\begin{figure}[t]
    \centering
    \includegraphics[width = \hsize]{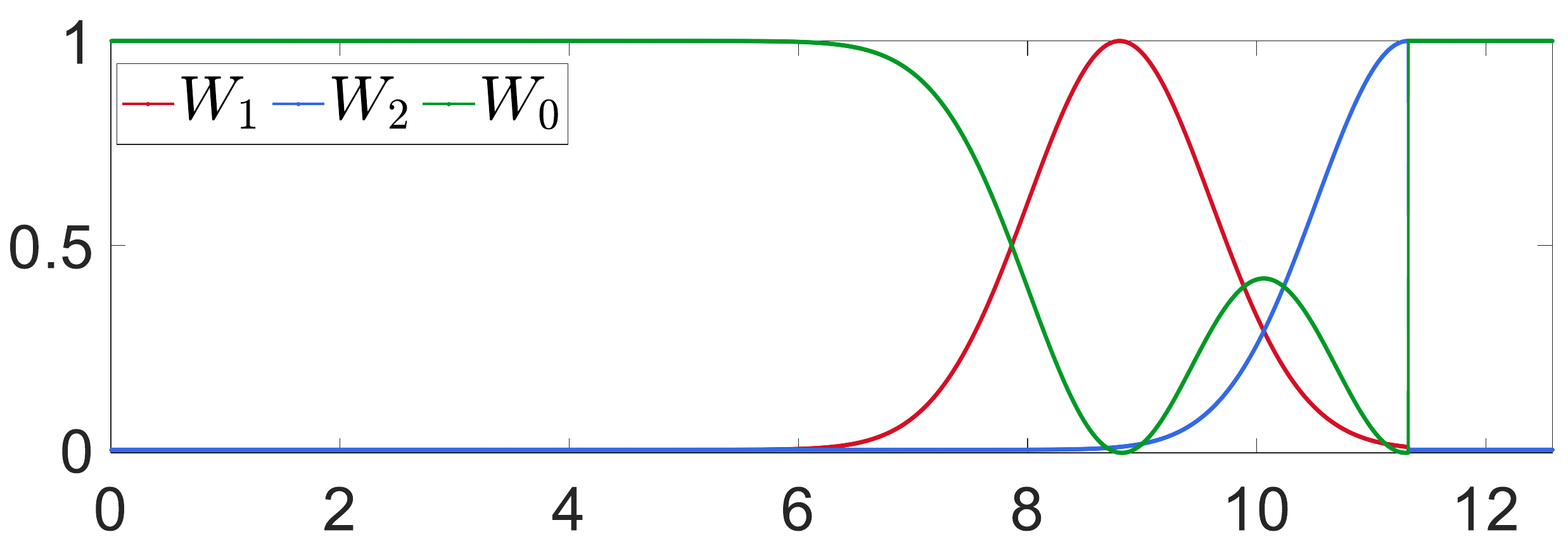}
    \caption{The weighted curves $W_k\left(t\right)$ used in the experiments. In the experiments, the first orientation via-point is a normal one, while the second and the third ones are IOVPs. Therefore, $W_1\left(t\right)$ and $W_2\left(t\right)$ are the weights corresponding to the second and the third via-points, respectively. While $W_0\left(t\right) = 1 - W_1\left(t\right) - W_2\left(t\right)$ is the weight for the baseline trajectory, which is reproduced by considering the first via-point. Here $\Delta t = 2.4s$, $\tilde{t}_1 = 8.8035$s, and $\tilde{t}_2 = 11.3196$s.}
    \label{Fig_E1WeightedCurve}
\end{figure}

\begin{figure}[!b]
    \centering
    \subfigure[]{\includegraphics[width = \hsize]{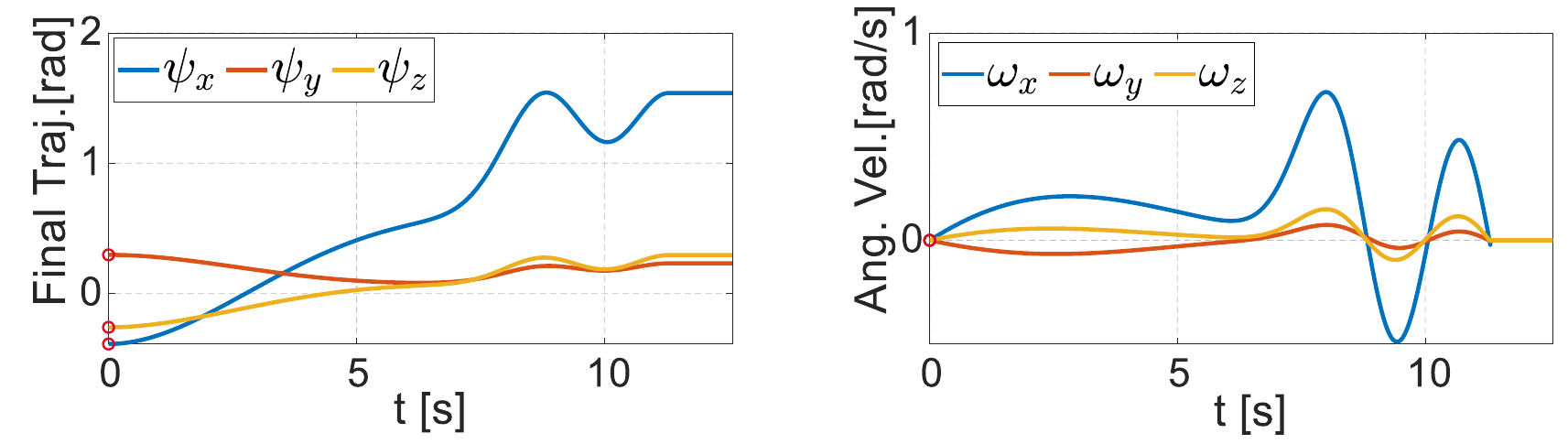}}
    \subfigure[]{\includegraphics[width = \hsize]{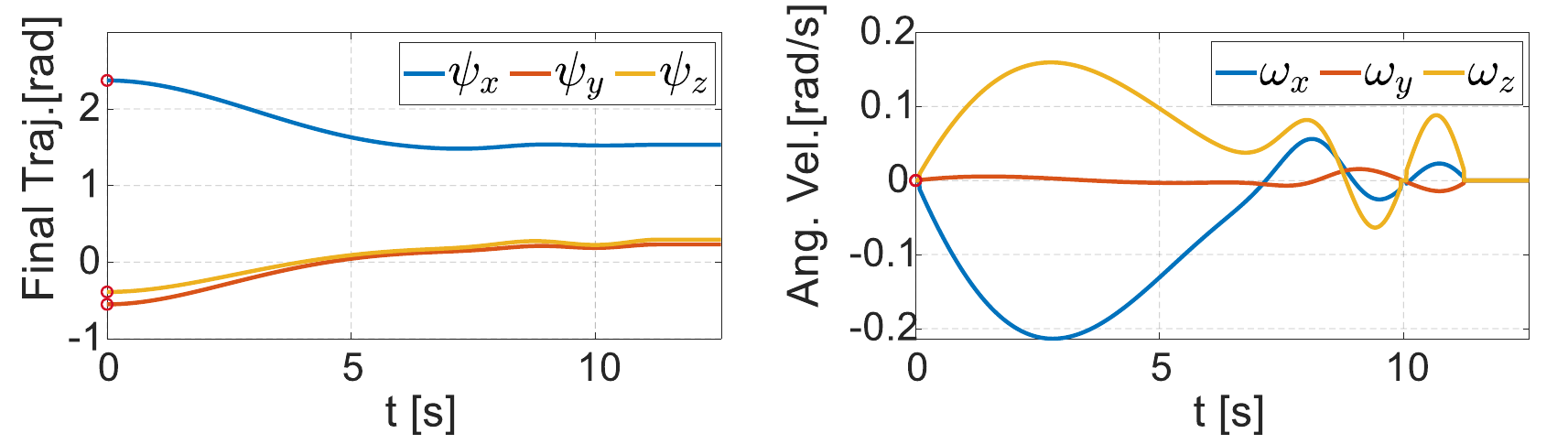}}
    \subfigure[]{\includegraphics[width = \hsize]{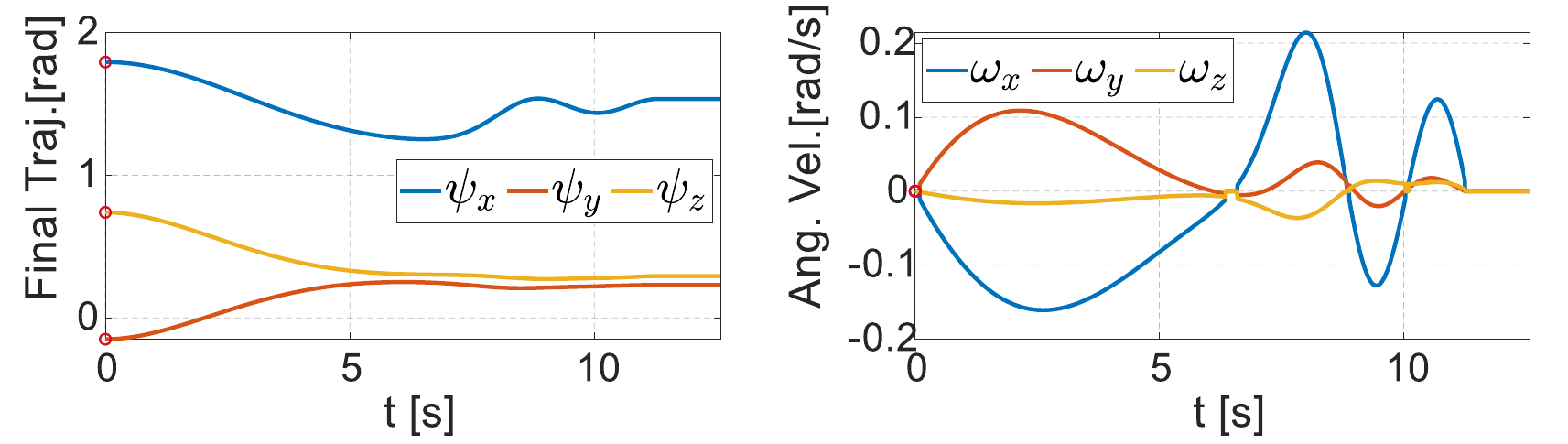}}
    \caption{The final reproduced orientation trajectories in Group I. (a) Test 1. (b) Test 2. (c) Test 3. The first via-point is depicted by red circles.}
    \label{Fig_ExpeGroupI}
\end{figure}

\begin{figure}[!t]
    \centering
    \subfigure[]{\includegraphics[width = \hsize]{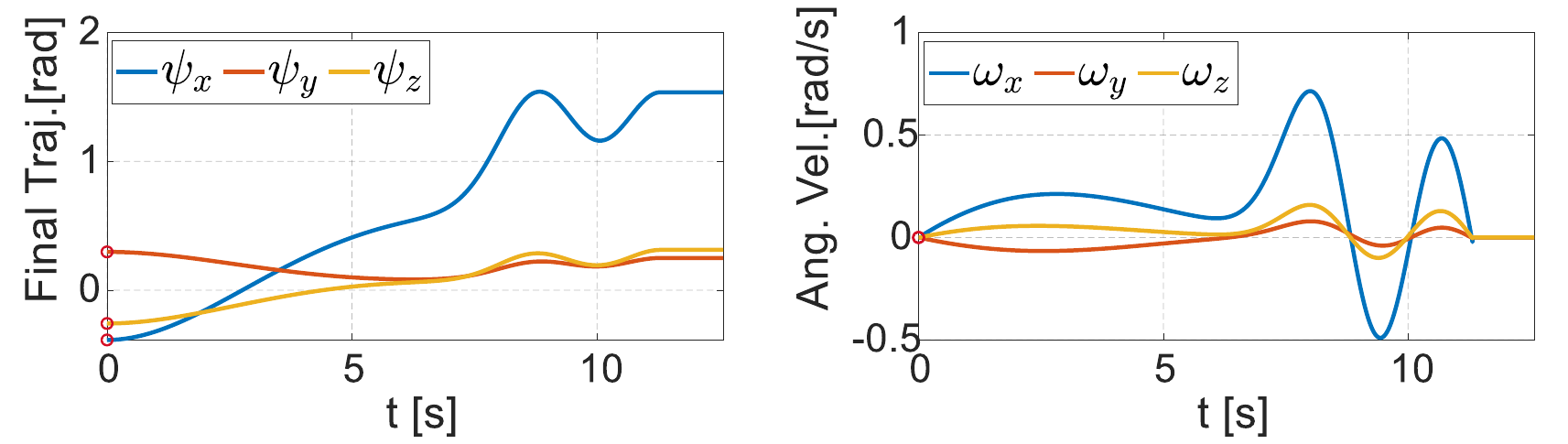}}
    \subfigure[]{\includegraphics[width = \hsize]{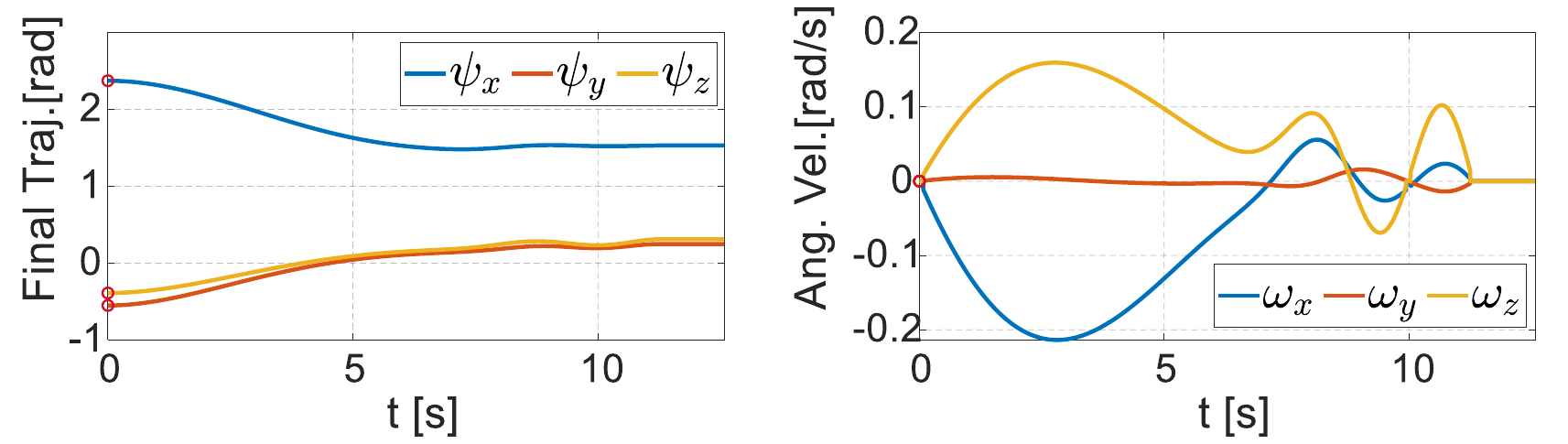}}
    \subfigure[]{\includegraphics[width = \hsize]{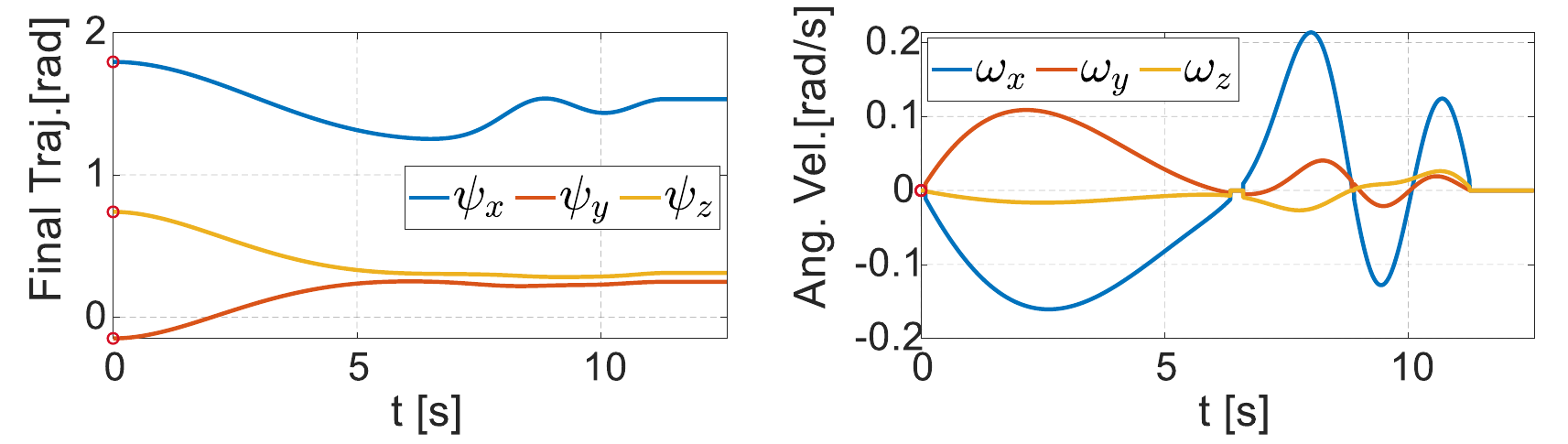}}
    \caption{The final reproduced orientation trajectories in Group II. (a) Test 1. (b) Test 2. (c) Test 3. Although the targets are different to the ones in Group I, our algorithm reproduce the similar trajectories by considering IOCs.}
    \label{Fig_ExpeGroupII}
\end{figure}

In order to reproduce a smooth trajectory that can satisfy all the IOCs simultaneously, our approach reproduces 3 orientation trajectories by considering the 3 desired via-points, respectively. Among which, the first trajectory is a baseline one that can go through the first desired via-point. The second and the third trajectory can satisfy the first and second IOCs only at time $\tilde{t}_1$ and $\tilde{t}_2$, respectively. Then the final orientation trajectory is generated by weighted summing the aforementioned 3 trajectories according to {\bf Algorithm \ref{algorithm1:MemorybasedWeighted Average}}. The experimental results for Group I and Group II are given in Fig. \ref{Fig_ExpeGroupI} and \ref{Fig_ExpeGroupII}, respectively.

\begin{table}[b]
  \centering
  \caption{The Angle Errors between the Desired and the Actual Directions at each IOVP. Unit: $\times10^{-3}$rad.}
  \label{Table-AngleError}
  \setlength{\tabcolsep}{4.0mm}{
  \begin{tabular}{c|c|c|c}
    \hline
    \hline
       \multicolumn{2}{c|}{}             & $\tilde{t}_1$          & $\tilde{t}_2$  \\ \hline \hline
       \multirow{3}{*}{Group I} & Test 1 & 6.52 & 6.45 \\ \cline{2-4}
                                & Test 2 & 0.23 & 0.45 \\ \cline{2-4}
                                & Test 3 & 1.67 & 1.76 \\ \hline
       \multirow{3}{*}{Group II}& Test 1 & 6.48 & 6.43 \\ \cline{2-4}
                                & Test 2 & 0.19 & 0.42 \\ \cline{2-4}
                                & Test 3 & 1.64 & 1.73 \\ \hline
       \multirow{3}{*}{Test Tubes Insertion} & Test 1 &  6.88 & 7.65   \\ \cline{2-4}
                                & Test 2 & 4.75 & 5.46  \\ \cline{2-4}
                                & Test 3 & 2.62 & 2.92  \\ \hline
    \hline
  \end{tabular}
  }
\end{table}

\begin{figure*}[!t]
    \centering
    \subfigure[]{\includegraphics[width = 0.32\hsize]{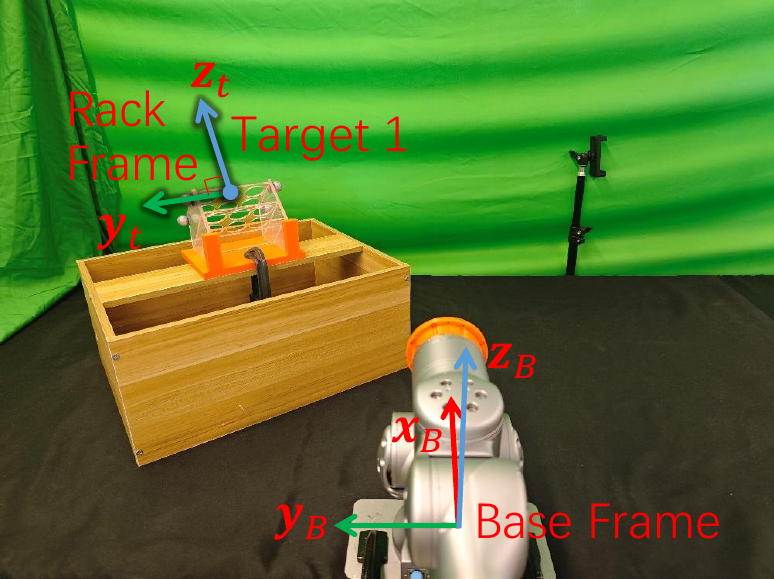}}
    \subfigure[]{\includegraphics[width = 0.32\hsize]{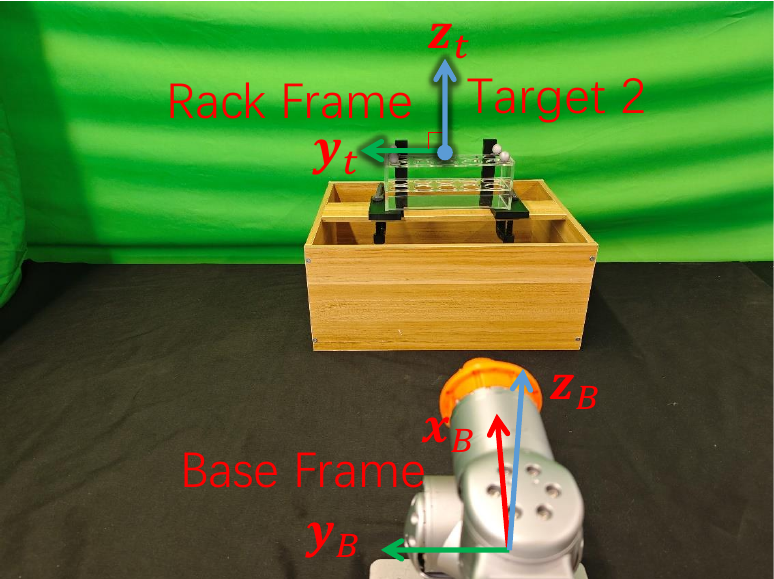}}
    \subfigure[]{\includegraphics[width = 0.32\hsize]{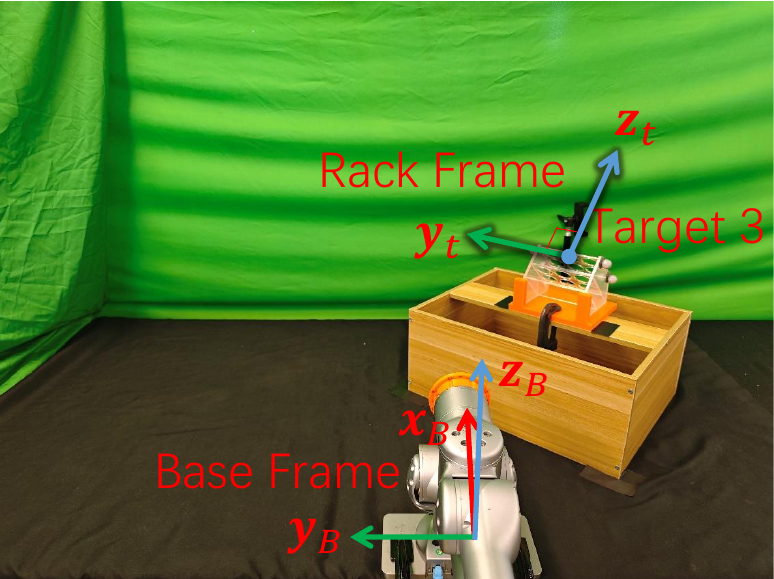}}
    \caption{ {\color{black}The experimental setups for the test tube insertion task. (a) Target is on the left with an inclined angle. (b) Target is in the middle with a vertical direction. (c) Target is on the right with an inclined angle. The rotation around the $z$-direction of the rack frame does not affect the insertion task. Therefore, the constraint on the $z$-axis can be released.} }
    \label{Fig_TestTubesInsertionSetup}
\end{figure*}

\begin{table*}[t]
  \centering
  \caption{The Via-points Used in Test Tube Insertion Tasks. Position Unit: Meter}
  \label{Table-ViaPointsExp-TestTubes}
  \setlength{\tabcolsep}{5.0mm}{
  \begin{tabular}{c|c|c|c}
    \hline
    \hline
                          &  Test 1   & Test 2   & Test 3    \\ \hline \hline
       $\tilde{t}_0$        &   \multicolumn{3}{c}{0.0s}                                       \\ \hline
       $\tilde{\vec p}_0$    &   \multicolumn{3}{c}{$\left[0.410; 0.0; 0.392\right]$}    \\ \hline
       $\tilde{\vec \psi}_0$&  \multicolumn{3}{c}{$\left[0.0; 0.0; 0.0\right]$}    \\ \hline
       $\tilde{t}_1$        &   \multicolumn{3}{c}{6.2875s(Posi)/8.8035s(Ori)}     \\ \hline
       $\tilde{\vec p}_1$    &  $\left[0.465; 0.354; 0.403\right]$  & $\left[0.523; 0.022; 0.403\right]$ & $\left[0.234; 0.110; 0.283\right]$ \\ \hline
       $\tilde{\vec \psi}_1$ & $\left[-0.1285; 0.6207; 0.8019\right]$ & $\left[-0.0149; -0.0388; -0.0282\right]$ & $\left[0.2354; 0.6845; -0.7125\right]$  \\ \hline
       $\tilde{t}_2$        &   \multicolumn{3}{c}{10.0590s(Posi)/11.3196s(Ori)}      \\ \hline
       $\tilde{\vec p}_2$    &  $\left[0.451; 0.344; 0.379\right]$  & $\left[0.524; 0.021; 0.373\right]$ & $\left[0.476; -0.293; 0.383\right]$ \\ \hline
       $\tilde{\vec \psi}_2$ & $\left[-0.1285; 0.6207; 0.8019\right]$ & $\left[-0.0149; -0.0388; -0.0282\right]$ & $\left[0.2354; 0.6845; -0.7125\right]$  \\ \hline
       $\tilde{t}_3$        &  \multicolumn{3}{c}{12.5751s}                                     \\ \hline
       $\tilde{\vec p}_3$    &  $\left[0.399; 0.307; 0.289\right]$  & $\left[0.528; 0.020; 0.263\right]$ & $\left[0.407; -0.234; 0.277\right]$ \\ \hline
    \hline
  \end{tabular}
  }
\end{table*}

As shown in Fig. \ref{Fig_ExpeGroupI} and \ref{Fig_ExpeGroupII}, all the reproduced trajectories are smooth, which validates that our designed weighted curves can guarantee the smoothness of the final trajectory. Also, all the trajectories can go through the red circles, which validates that the weighted average mechanism do not affect the adaption to the first via-point. This can also be validated in the attached video, in which the manipulator do not have sudden jump at the beginning of each test. Moreover, we also calculate the angle errors between the desired and the actual earphone-pin's direction (i.e., $y$ direction of the end-effector frame) at $\tilde{t}_1$ and $\tilde{t}_2$. As shown in the first 6 rows of Table \ref{Table-AngleError}, all the angle errors are very small, which means that all the IOCs are satisfied simultaneously. These results can validate the effectiveness of our weighted average mechanism. Reflected in the attached video, our reproduced trajectories can fulfill the peg-in-hole tasks successfully for all tests.

Another conclusion can be obtained by comparing the results between Fig. \ref{Fig_ExpeGroupI} and \ref{Fig_ExpeGroupII}. Although the target orientations in Group I are totally different with the ones in Group II (rotating $\pi$ radian around the $y$-axis), the final reproduced trajectories for the same test (i.e., starting from the same pose) are almost the same, since the pin's direction is never changed. These results can validate the effectiveness and benefits of our proposed approach by considering IOCs, which are infeasible for existing approaches.

{\color{black}

In order to further validate the generalization ability of our proposed approach, the test tube insertion task is designed and carried out. Please note that we do not collect new demonstration data in this new task. Instead, the demonstrations for the earphone insertion task are reused. In this task, a test tube is fixed at the end of the manipulator, with its principal axis being aligned with the $z$-direction of the end-effector frame. Our algorithm is required to reproduce the position and orientation trajectory for the manipulator to insert the test tube into a rack. 

Three tests are carried out. In each test, the starting pose keeps the same. While the test tubes rack is placed at 3 different positions, as shown in Fig. \ref{Fig_TestTubesInsertionSetup}, marked by left, middle, and right positions, respectively. As we can tell, the robot's rotation around the $z$-direction of the rack frame does not affect the insertion task. Therefore, the $z$-direction of the rack frame is the target direction for the test tube and the constraint on $z$-axis can be released.  When the rack is placed at the left/right position, an inclined rack is used to generate an inclined target direction which has bigger orientation variation to the ones in the earphone insertion task. The inclined angle is $45^{\circ}$. When the rack is placed at the middle position, a normal rack is used such that the target direction keeps vertical. 

\begin{figure}[!t]
    \centering
    \subfigure[]{\includegraphics[width = 0.9\hsize]{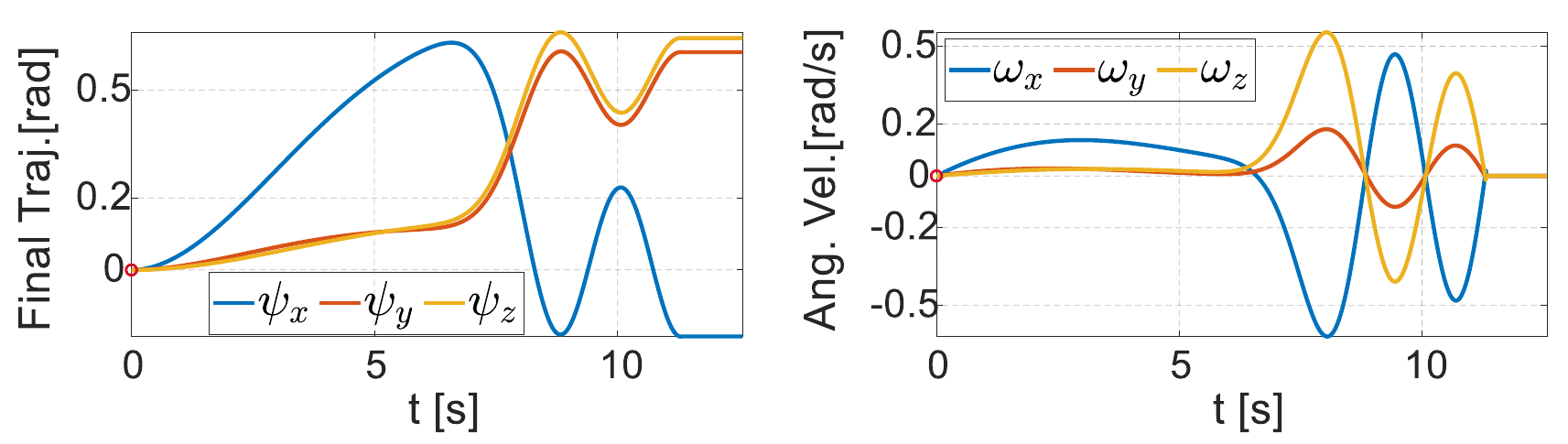}}
    \subfigure[]{\includegraphics[width = 0.9\hsize]{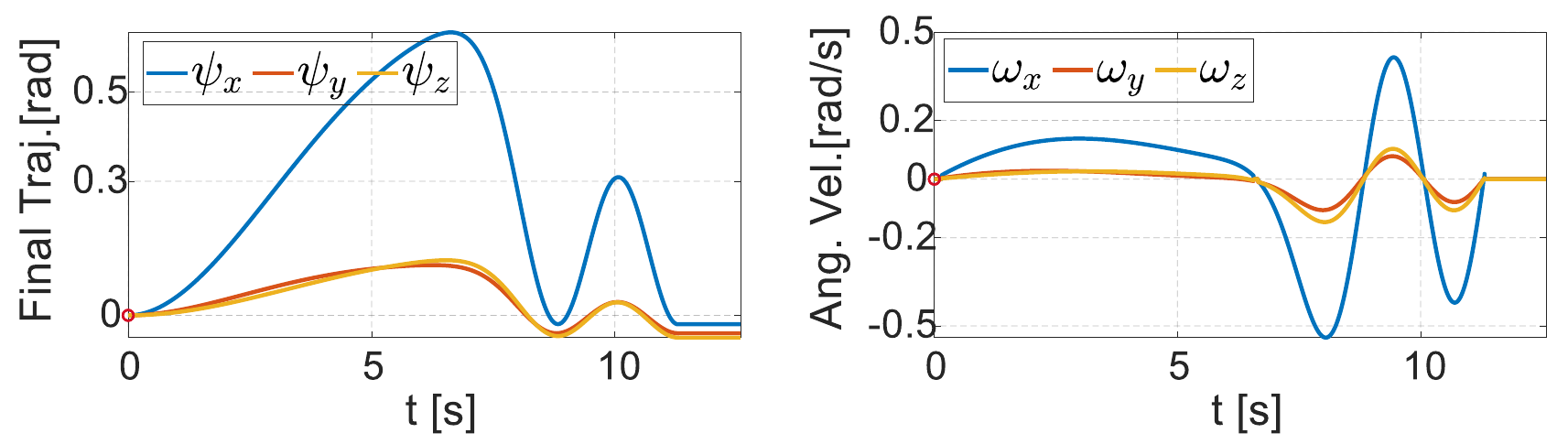}}
    \subfigure[]{\includegraphics[width = 0.9\hsize]{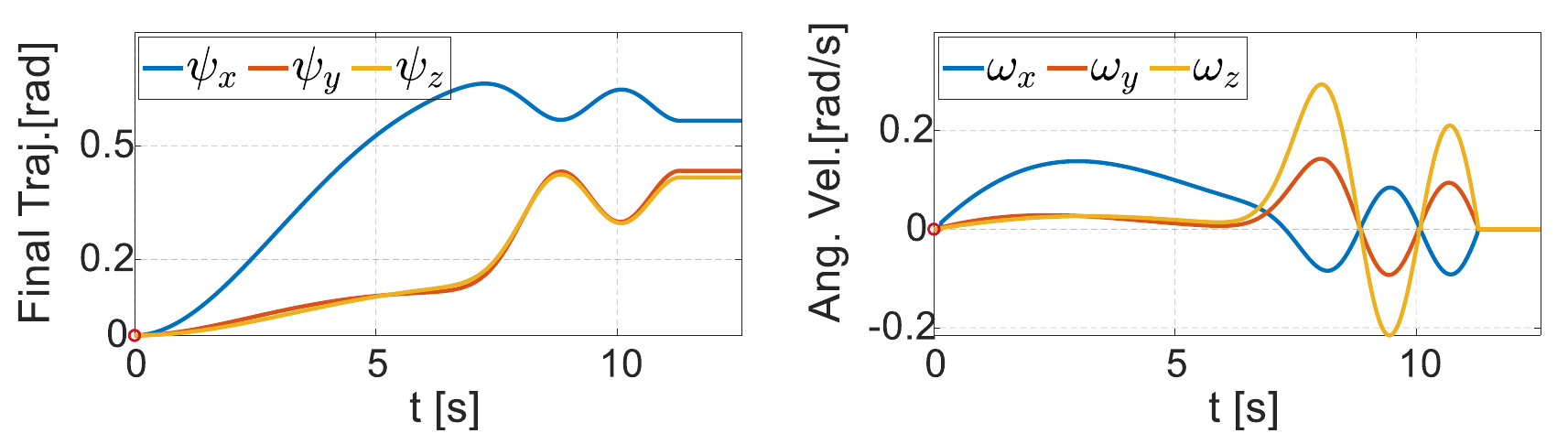}}
    \caption{ {\color{black}Final reproduced orientation trajectory. (a) Test 1 (Left). (b) Test 2 (Middle). (c) Test 3 (Right).} }
    \label{Fig_TestTubes_OriTraj}
\end{figure}

\begin{figure*}[!t]
    \centering
    \subfigure[]{\includegraphics[width = 0.32\hsize]{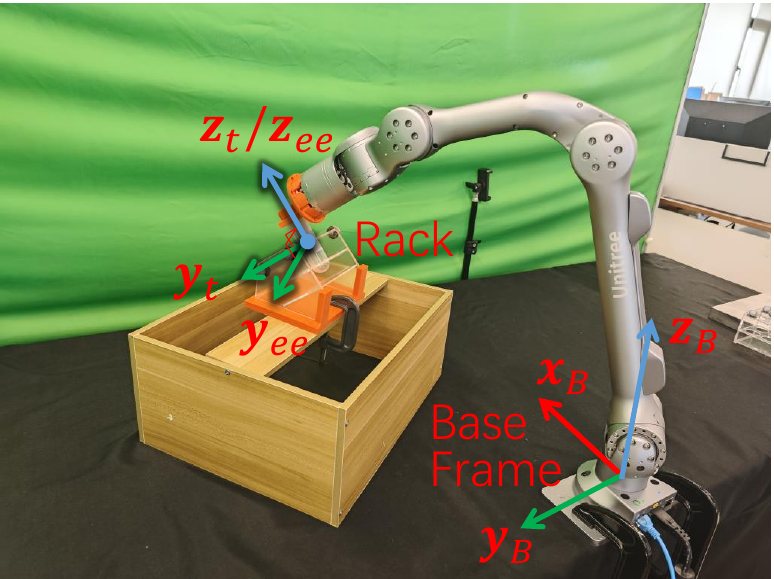}}
    \subfigure[]{\includegraphics[width = 0.32\hsize]{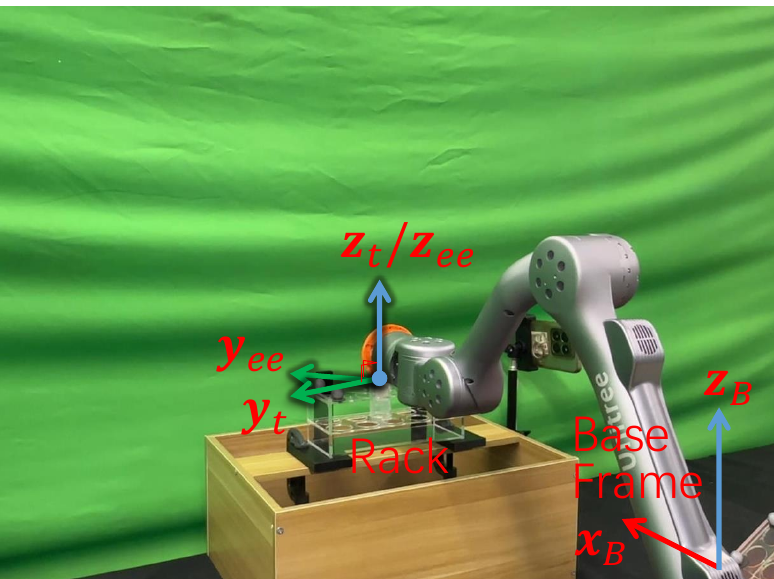}}
    \subfigure[]{\includegraphics[width = 0.32\hsize]{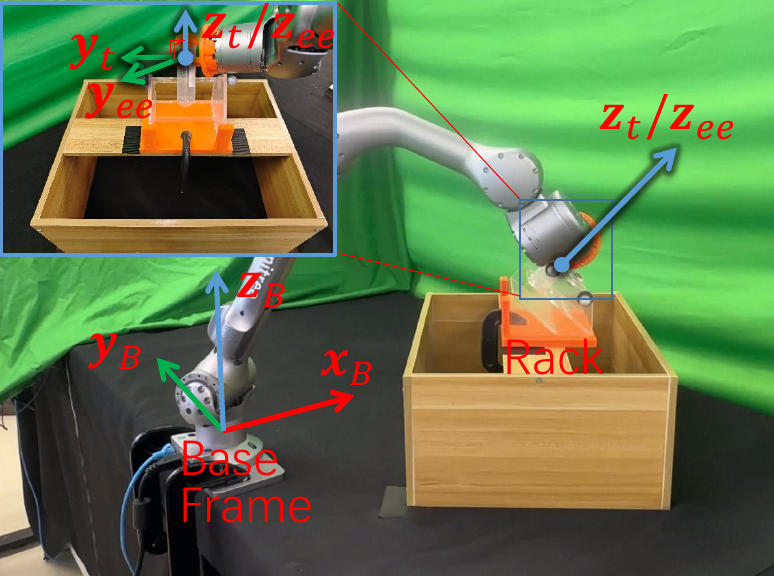}}
    \caption{ {\color{black}The Snapshots for the three tests when the insertion is finished. (a) Test 1 for target at the left position. (b) Test 2 for target at the middle position. (c) Test 3 for target at the right position. The complete insertion process can be found in our attached video.} }
    \label{Fig_TestTubesInsertionSnapshot}
\end{figure*}

The detailed position and orientation for the left, middle, and right positions are summarized in Table \ref{Table-ViaPointsExp-TestTubes}. Similar to previous settings, 4 position via-points and 3 orientation via-points are set. The first position and orientation via-point are the starting position and orientation of the manipulator, such that the manipulator can avoid sudden jump during the motion. While the second position via-point is fine tuned such that the generated position trajectory can avoid obstacles in the environment. The third position via-point is selected to be the same with the target position, but with a $95$mm (the length of the test tube) translation along the positive $z$-direction of the rack frame, such that the manipulator is ready for the insertion task. The fourth position via-point is the same with target position, but with a $15$mm translation along the negative $z$-direction of the rack frame, such that the insertion can be guaranteed. The second and the third orientation via-points, which are both set as IOVPs,  are the same with the orientation of the rack frame. While the orientation keeps constant from the third point to the fourth point. The desired velocities for all via-points are set as zero. The weighted curves $W_k\left(t\right)$ are designed the same as given in Fig. \ref{Fig_E1WeightedCurve}.

The experimental results are given in Fig. \ref{Fig_TestTubes_OriTraj} and Fig. \ref{Fig_TestTubesInsertionSnapshot}. As given in Fig. \ref{Fig_TestTubes_OriTraj}, all the reproduced orientation trajectory are smooth. These results validate the effectiveness of our proposed imitation learning framework and our memory-based weighted rotation average mechanism. As shown in Fig. \ref{Fig_TestTubesInsertionSnapshot}, all the test tube insertion tests are successfully fulfilled, even for test 1 and test 3, in which the target directions are obviously different do the demonstrations. In addition, it is no need to collect new demonstration data for the test tube insertion task. These results validate the generalization ability of our proposed method. The angle errors between the desired and the actual test-tubes's direction are given in the last 3 rows of Table \ref{Table-AngleError}. As we can tell, all the angle errors are very small, which means that all the IOCs are satisfied simultaneously.
}

\section{Conclusion}
\label{Section_Conclusion}

In order to incorporate additional constraint for orientation learning, we proposed to encode and/or decode the demonstrations and via-points in the Angle-Axis Space, which is bijective to the rotation group SO(3). In addition, we also proposed a Gauss-based weighted average mechanism to incorporate multiple IOVPs by utilizing geodesic curve in non-Euclidean space. {\color{black}More specifically, a memory-based algorithm was proposed to address the discontinuity problem in weighted rotation average. Based on this, we can generate multiple trajectories by projecting demonstration into multiple tangent spaces and then fuse them by using weighted rotation average mechanism. Although the distortion still exists for points that are far from the basepoint, our weighted rotation average mechanism can guarantee that each reproduced trajectory is only effective at the given local time domain. Therefore, our proposed approach addressed the distortion problem in Riemannian manifold and made the off-the-shelf Euclidean learning algorithm be re-applicable in non-Euclidean space.} The simulation and experimental evaluations validated that our solution can not only inherit the KMP's capability in adapting orientations towards arbitrary desired via-points and coping with angular acceleration constraints, but also incorporate the IOCs to achieve extra benefits. In the future work, we will focus on {\color{black}the extension of our proposed approaches for orientation learning by using Reinforcement Learning (RL) or Vision-Language-Action (VLA) framework, in which the continuity of rotation representations is important, as revealed by \cite{2019CVPR-Zhou-ContinuityRepresentation}}.

\bibliographystyle{SageH}
\bibliography{references}

\end{document}